\definecolor{RoyalBlue}{RGB}{40,155,230}
\newtheorem{theorem}{Theorem}
\newtheorem{lemma}{Lemma}
\newtheorem{assumption}{Assumption}
\setlist[enumerate,1]{leftmargin=0.6cm}
\setlist[itemize,1]{leftmargin=0.5cm}
\def\1{\bm{1}}
\def\vzero{{\bm{0}}}
\def\vtheta{{\bm{\theta}}}
\def\vg{{\bm{g}}}
\def\vv{{\bm{v}}}
\def\mH{{\bm{H}}}
\def\mPhi{{\bm{\Phi}}}
\def\mSigma{{\bm{\Sigma}}}
\DeclareMathAlphabet{\mathsfit}{\encodingdefault}{\sfdefault}{m}{sl}
\SetMathAlphabet{\mathsfit}{bold}{\encodingdefault}{\sfdefault}{bx}{n}
\def\gE{{\mathcal{E}}}
\def\gN{{\mathcal{N}}}
\newcommand{\E}{\mathbb{E}}
\newcommand{\R}{\mathbb{R}}
\DeclareMathOperator*{\argmin}{arg\,min}
\newcommand{\cL}{\mathcal{L}}
\newcommand{\onec}[1]{\mathbbm{1}_{\{#1\}}}
\newcommand{\abs}[1]{\lvert #1 \rvert}
\newcommand{\labs}[1]{\left\lvert #1 \right\rvert}
\newcommand{\diag}{\mathrm{diag}}
\newcommand{\dd}{\mathrm{d}}
\newcommand{\LD}{\mathrm{LD}}
\newcommand{\LRS}{E}
\newcommand{\hatLD}{\widehat{\mathrm{LD}}}
\newcommand{\cLconst}{\cL_{\mathrm{const}}}
\newcommand{\etaa}{\eta_{\mathrm{A}}}
\newcommand{\etab}{\eta_{\mathrm{B}}}
\newcommand{\Ta}{T_{\mathrm{A}}}
\newcommand{\Tb}{T_{\mathrm{B}}}
\title{A Multi-Power Law for Loss Curve Prediction Across Learning Rate Schedules}
\author{Kairong Luo$^1$
\quad Haodong Wen$^2$
\quad  
Shengding Hu$^1$ 
\quad
Zhenbo Sun$^1$ 
\\
\bf Zhiyuan Liu$^1$
\quad
Maosong Sun$^1$\footnotemark[2]
\quad
Kaifeng Lyu$^3$\footnotemark[2]
\quad  
Wenguang Chen$^{1,4}$\setcounter{footnote}{1}\thanks{Corresponding authors.}\\
$^1$Department of Computer Science and Technology, Tsinghua University\\
$^2$Qian Xuesen College, Xi'an Jiaotong University\\
$^3$Simons Institute, University of California, Berkeley \\
$^4$Peng Cheng Laboratory\\
\texttt{\{luokr24,sunzb20\}@mails.tsinghua.edu.cn}\\
\texttt{\{herrywenh,shengdinghu\}@gmail.com}\\
\texttt{kaifenglyu@berkeley.edu}\\
\texttt{\{liuzy,sms,cwg\}@tsinghua.edu.cn}}
\renewcommand{\paragraph}{%
  \@startsection{paragraph}{4}%
  {\z@}{0ex}{-1em}%
  {\normalfont\normalsize\bfseries}%
}
\begin{document}

\maketitle
\begin{abstract}

Training large models is both resource-intensive and time-consuming, making it crucial to understand the quantitative relationship between model performance and hyperparameters. In this paper, we present an empirical law that describes how the pretraining loss of large language models evolves under different learning rate schedules, such as constant, cosine, and step decay schedules. Our proposed law takes a multi-power form, combining a power law based on the sum of learning rates and additional power laws to account for a loss reduction effect induced by learning rate decay. We extensively validate this law on various model sizes and architectures, and demonstrate that after fitting on a few learning rate schedules, the law accurately predicts the loss curves for unseen schedules of different shapes and horizons. Moreover, by minimizing the predicted final pretraining loss across learning rate schedules, we are able to find a schedule that outperforms the widely used cosine learning rate schedule. Interestingly, this automatically discovered schedule bears some resemblance to the recently proposed Warmup-Stable-Decay (WSD) schedule~\citep{hu2024minicpm} but achieves a slightly lower final loss. We believe these results could offer valuable insights for understanding the dynamics of pretraining and designing learning rate schedules to improve efficiency.\footnote{Code Implementation: \url{https://github.com/thu-yao-01-luo/MultiPowerLaw}.}

\end{abstract}

\vspace{-0.2in}

\section{Introduction}

Large Language Models (LLMs) can achieve strong performance if pretrained with an appropriate configuration of hyperparameters, such as model width, depth, number of training steps, and learning rate.
However, tuning these hyperparameters at scale is extremely costly
since one pretraining run can take weeks or even months.

To reduce the cost of hyperparameter tuning, various scaling laws have been proposed to predict pretraining loss or downstream performance by capturing empirical relationships between key hyperparameters and model performance.
A notable example is the Chinchilla scaling law~\citep{chinchilla}, which approximates the final pretraining loss as a simple function of the model size $N$ and total training steps $T$ (or total training tokens), $\cL(N, T) = L_0 + A \cdot T^{-\alpha} + B \cdot N^{-\beta}$.
By fitting parameters $L_0, A, B, \alpha, \beta$ from a few training runs with varying $N$ and $T$, one can use the formula to infer the optimal choice of $N$ and $T$ given a fixed compute budget $C \propto NT$.

A key challenge that existing scaling laws have not addressed is how to set the \textbf{Learning Rate~(LR)} optimally over time. LR is arguably the most critical hyperparameter in optimization, as it can significantly affect the training speed and stability.
A large LR can quickly reduce the training loss, but in the long term, it may cause overshooting and oscillation along sharp directions on the loss landscape. In contrast, a small LR ensures a more stable training process but also slows down the convergence. Practitioners often balance these trade-offs by starting training with a large LR and then gradually reducing it over time, following a {\it Learning Rate schedule} (LR schedule)~\citep{bengio2012practical}. These LR schedules sometimes include a warmup phase at the beginning, where the LR linearly increases from zero to a large value over a few thousand steps, and only after this warmup phase does the LR start to decay. The most commonly used LR schedule in LLM pretraining is the cosine schedule~\citep{loshchilov2016sgdr}, which decays the LR following a cosine curve. Other schedules include the cyclic~\citep{smith2017cyclical}, Noam~\citep{vaswani2017attention}, and Warmup-Stable-Decay (WSD) schedules~\citep{hu2024minicpm}, but there is no consensus on the optimal choice.

Existing scaling laws sidestep the complexity of LR schedules by fitting parameters on a fixed family of LR schedules.
For instance, \citet{chinchilla} fitted the parameters in the Chinchilla scaling law for training runs that have gone through the entire cosine LR schedule. As a result, it does not generalize well to other LR schedules, or even to the same schedule with early stopping. Moreover, existing scaling laws lack a term to account for LR schedules, limiting their ability to provide practical guidance on setting the LRs. This issue can become even more pronounced when scaling up training to trillions of tokens~\citep{dubey2024llama,deepseekv3}, where the extreme cost of training makes it impractical to experiment with multiple LR schedules.

\begin{figure}[t]
\vspace{-0.5in}
    \centering
    \subfigure[\small Comparison of Optimized LR Schedules\label{fig:main-opt-lr}]{
    \includegraphics[width=0.45\linewidth]{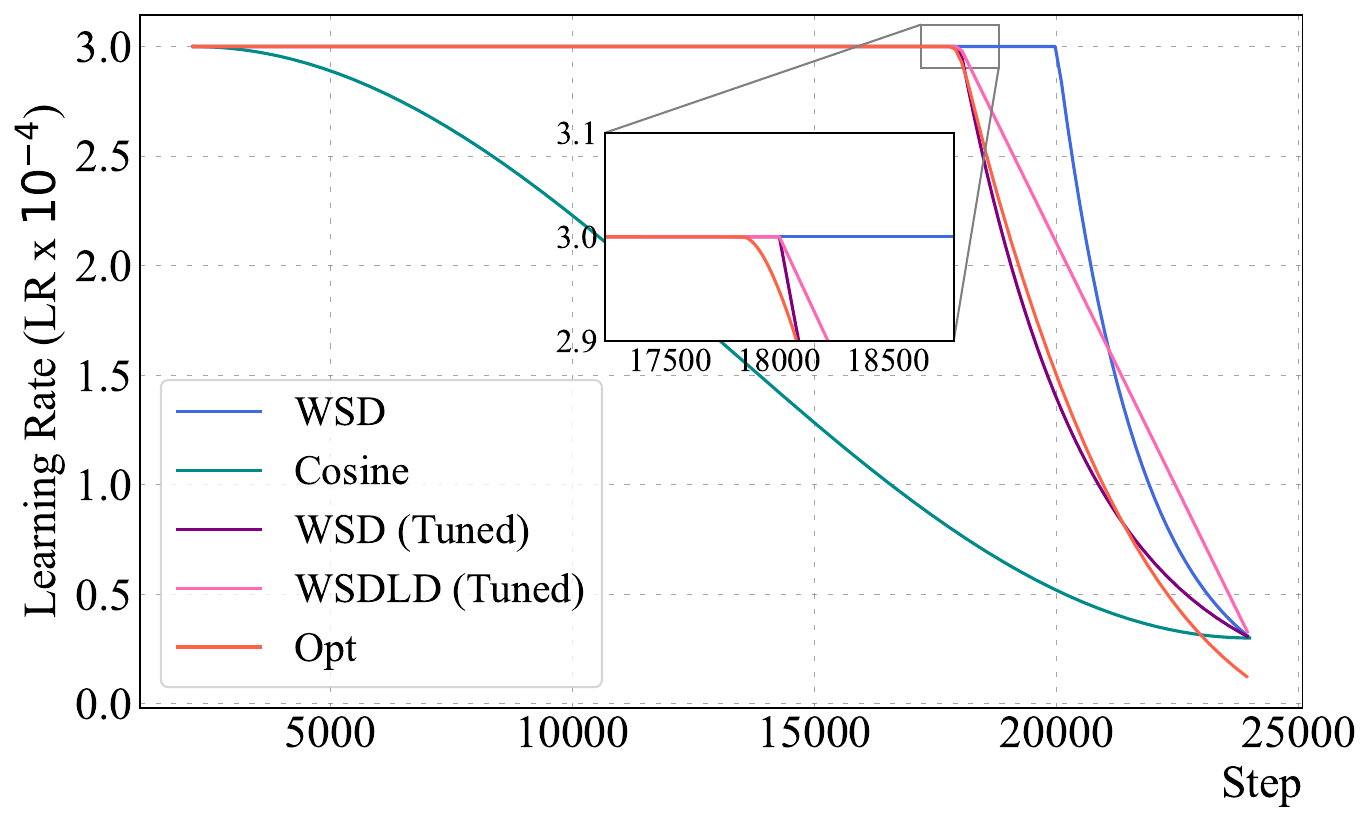}
    }
    \label{fig:opt-LR-schedule}
    \subfigure[\small Loss Curves for Different LR Schedules\label{fig:main-opt-loss}]{
    \includegraphics[width=0.45\linewidth]{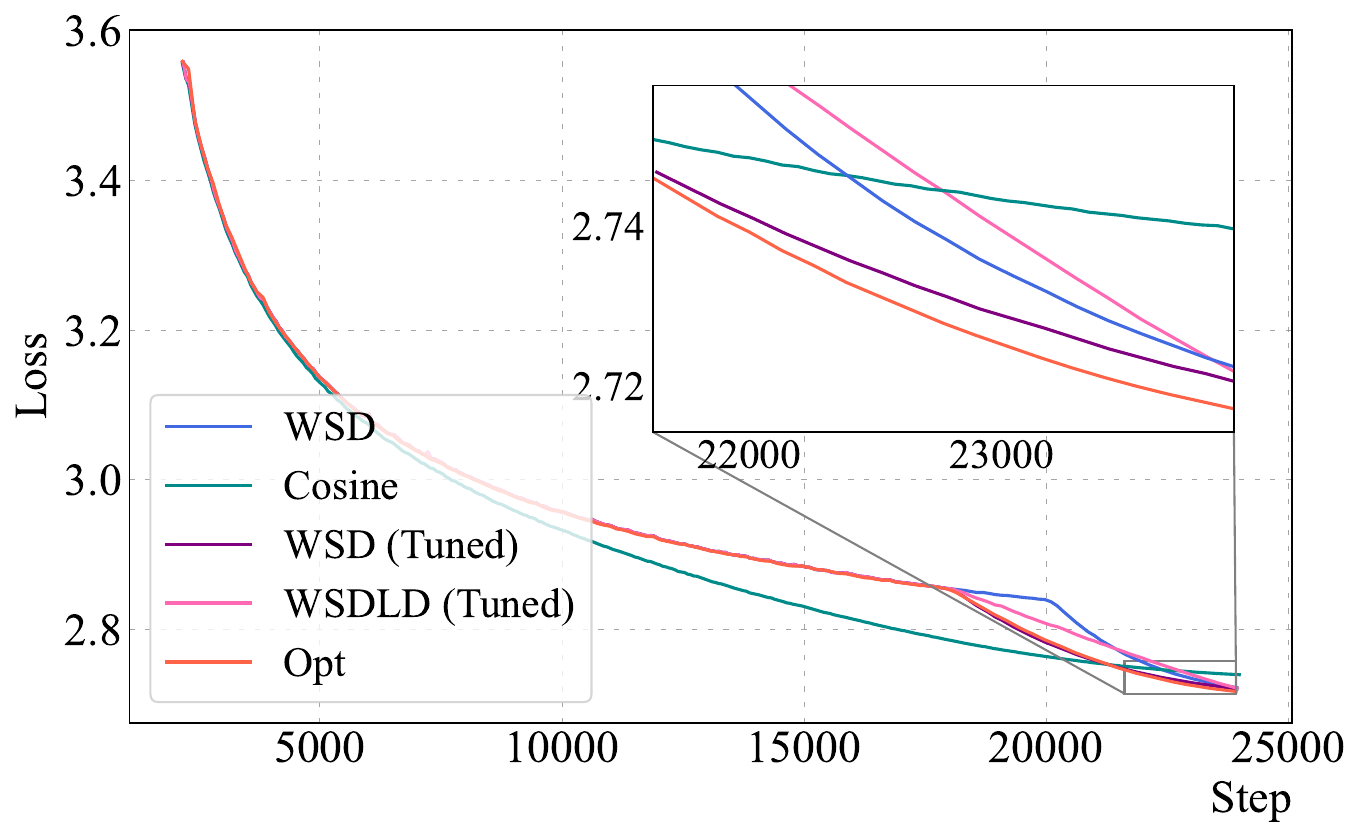}
    }
    \vspace{-3mm}
    \caption{\small Optimizing the LR schedule induces a schedule (Opt) better than cosine and WSD schedules.
    We conduct evaluation experiments on a $400$M Llama-2~\citep{touvron2023llama} model trained over $12$B tokens. Zoom-in regions facilitate the readers who are interested in the local details. \textbf{(a)} Our optimized
    schedule comprises constant and decay stages post-warmup, aligning with WSD~\citep{hu2024minicpm}. 
    \textbf{(b)} Loss curves demonstrate that our optimized schedule outperforms cosine schedules and two major variants of WSD with tuned hyperparameters (WSD with exponential decay and WSDLD with linear decay).}
    \label{fig:main-opt}
    \vspace{-0.15in}
\end{figure}

In this paper, we aim to quantify how LR schedules influence the evolution of training loss in LLM pretraining through empirical analysis. More specifically, we study the following problem, which we call the {\it schedule-aware loss curve prediction} problem:
{\it Can we use a simple formula to accurately predict the training loss curve $\cL(t)$ ($1 \le t \le T$) given a LR schedule $E := \{\eta_1, \eta_2, \dots, \eta_T\}$ for $T$ steps of training?}
To align with standard practices in LLM pretraining and to enable a more precise analysis tailored to this setting, we impose the following reasonable restrictions on the problem. First, we take fresh samples from a data stream at each training step, so there is no generalization gap between the training and test loss. Second, we focus on LR schedules that decay the LR over time, i.e., $\eta_1 \ge \eta_2 \ge \eta_3 \ge \cdots$. Finally, as most LR schedules used in practice start with a warmup phase before the LR decays, we make a minor modification to the problem and include a fixed warmup phase before the decay phase we are interested in. We assume that the shape and the peak LR $\eta_{\max}$ of the warmup phase have been carefully picked, potentially through a series of short training runs, and we are only interested in understanding how different LR decay schedules after warmup affect the training loss curve. For convenience, we shift the time index so that $t=1$ corresponds to the first step after the warmup phase.

In contrast to most existing scaling laws that rely on only two or three hyperparameters~\citep{kaplan,chinchilla,muennighoff2023scaling,goyal2024scaling}, solving the above problem poses unique challenges, as it requires predicting the loss curve based on the entire LR schedule, which is inherently high-dimensional. This complexity necessitates a more sophisticated approach to understand and quantify the relationship between the LR schedule and the loss curve.

\begin{figure}[t]
    \vspace{-0.5in}
    \centering
    \includegraphics[width=1.0\linewidth]{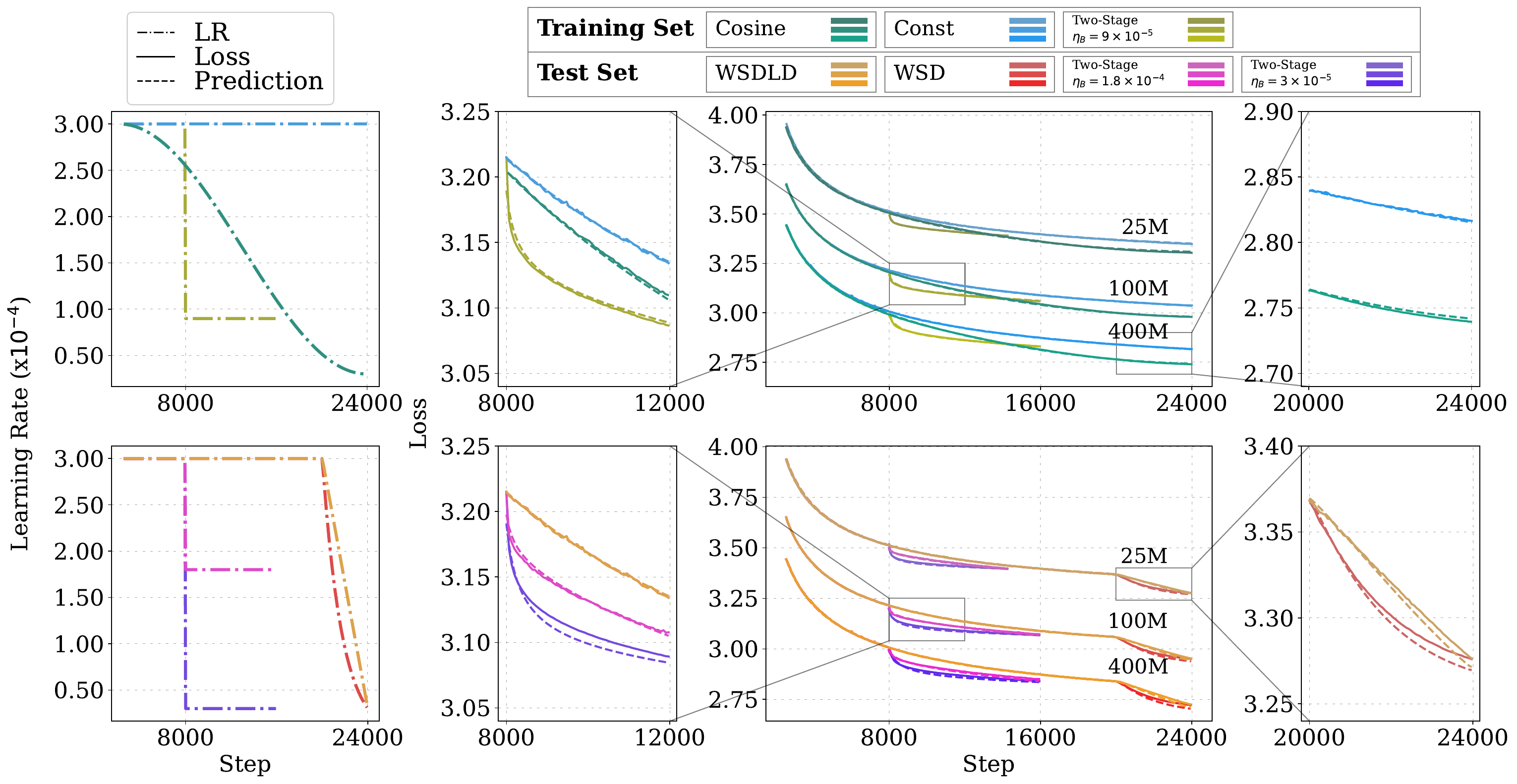}
    \vspace{-0.3in}
    \caption{
    \small The Multi-Power Law (MPL) with parameters fitted on cosine, constant, and two-stage schedules can accurately predict the loss curves of unseen schedules, including WSDLD, WSD, and two-stage schedules with a different LR in the second stage.
    See \Cref{tab:comp} for evaluation metrics.}
    \label{fig:main-pred}
    \vspace{-0.15in}
\end{figure}

\paragraph{Our Contribution: Multi-Power Law.}
In this paper, we propose the following empirical law~\eqref{equ:multpowerlaw} for schedule-aware loss curve prediction:
\begin{equation}
    \cL(t) = L_0 + A \cdot (S_1(t)+S_W)^{-\alpha} - \LD(t), \quad\text{where}\quad S_1(t) := \sum_{\tau=1}^{t} \eta_\tau.
    \label{equ:multpowerlaw}
\end{equation}
Here, $S_W$ denotes the sum of learning rates used in the warmup phase. The first two terms $L_0 + A \cdot (S_1(t)+S_W)^{-\alpha}$ can be viewed as an extension of the Chinchilla scaling law by replacing the number of steps $T$ with the cumulative sum of learning rates up to step $t$, while neglecting the dependence on the model size. While this alone provides a crude approximation of the loss curve by linearizing the contribution of the LR at each step (see~\Cref{sec:approach} for further discussion), it does not account for the specific shape of the LR decay. The additional term $\LD(t)$ serves as a correction term, which captures the effect of LR decay in further reducing the loss:
\begin{equation}
    \LD(t) := B \sum_{k=1}^{t} (\eta_{k-1} - \eta_k) \cdot G(\eta_k^{-\gamma}S_{k}(t)),
    ~~
    S_k(t) := \sum_{\tau = k}^{t} \eta_{\tau},
    ~~G(x) := 1 - (Cx + 1)^{-\beta}.
    \label{eq:loss-reduction-def}
\end{equation}
More specifically, $\LD(t)$ is linear with a cumulative sum of the LR reductions $\eta_{k-1} - \eta_k$ over time, scaled by a nonlinear factor $G(\eta_k^{-\gamma}S_{k}(t))$. This factor gradually saturates to a constant as the training progresses, which follows a power law in a scaled sum of learning rates $\eta_k^{-\gamma}S_{k}(t)$.

We call this law of $\cL(t)$ the {\it Multi-Power Scaling Law (MPL)} as it consists of multiple power-law forms. 
$L_0, A, B, C, \alpha, \beta, \gamma$ are the parameters of the law and can be fitted by running very few pretraining experiments with different LR schedules.
Our main contributions are as follows:
\begin{enumerate}
    \item We propose the Multi-Power Law~\eqref{equ:multpowerlaw} for schedule-aware loss curve prediction, and empirically validate that after fitting the parameters of the law on at most $3$ pretraining runs, it can predict the loss curve for unseen LR schedules with remarkable accuracy (see~\Cref{fig:main-pred}). 
    Unlike the Chinchilla scaling law, which relies solely on the final loss of each training run to fit its parameters, our approach utilizes the entire loss curve of each training run to fit the parameters, thus significantly reducing the number of training runs and compute resources needed for accurate predictions (\Cref{fig:chinchilla-baseline}). Extensive experiments are presented for various model architectures, sizes, and training horizons (\Cref{sec:validation}).
    \item Our Multi-Power Law is accurate enough to be used to search for better LR schedules. We show that by minimizing the predicted final loss according to the law, we can obtain an optimized LR schedule that outperforms the standard cosine schedule. Interestingly, the optimized schedule has a similar shape as the recently proposed WSD schedule~\citep{hu2024minicpm}, but its shape is optimized so well that it outperforms WSD with grid-searched hyperparameters (\Cref{section optimal lr schedule}).
    \item We use a novel ``bottom-up'' approach to empirically derive the Multi-Power Law. Starting from two-stage schedules, we conduct a series of ablation studies on LR schedules with increasing complexity, which has helped us to gain strong insights into the empirical relationship between the LR schedule and the loss curve (\Cref{sec:approach_section}). 
    \item We provide a theoretical analysis for quadratic loss functions and demonstrate that the Multi-Power Law emerges when the Hessian and noise covariance matrices exhibit certain types of power-law structures (\Cref{sec:theory}).
\end{enumerate}

\section{Preliminary}

\paragraph{Learning Rate Schedule.}\label{sec:lr-schedule} A learning rate (LR) schedule is a sequence $\LRS := \{ \eta_1, \dots, \eta_T\}$ that specifies the LR at each step of the training process. For language model pretraining, the cosine LR schedule~\citep{loshchilov2016sgdr} is the most popular schedule, which can be expressed as $\eta_t = \frac{1+\alpha}{2} \eta_{\max} + \frac{1-\alpha}{2}  \eta_{\max}\cos(\frac{\pi t}{T})$. Here, $\eta_{\max}$ is the peak LR and $\alpha$ is usually set to $0.1$.
The Warmup-Stable-Decay (WSD) schedule~\citep{hu2024minicpm} is a recently proposed LR schedule. This schedule first goes through a warmup phase, then maintains at a stable LR $\eta_{\max}$ with $T_{\text{stable}}$ steps, and finally decays in the form of $f(t-T_{\text{stable}})\eta_{\max}$ for $T_{\text{stable}} \le t \le T_{\text{total}}$. Here $f(x) \in (0,1)$ can be chosen as linear or exponential decay functions. We visualize these two LR schedules in Figure~\ref{fig:main-opt-lr}.

\paragraph{Warmup Phase.}\label{sec:warmup} Many LR schedules, such as WSD, include a warmup phase in which the LR gradually increases from $0$ to the peak LR $\eta_{\max}$ over a few thousand steps. We denote the number of warmup steps as $W$. By default, the LR increases linearly, so the total LR sum during warmup is given by $S_W=\frac{1}{2}\eta_{\max}W$. Our analysis focuses on the training process after the warmup, where the LR is decaying in almost all LR schedules. We count training steps starting from the end of warmup and set $t=1$ as the first step after warmup. Accordingly, $\{\eta_1, \dots, \eta_T\}$ represents the post-warmup schedule, and the LR at the last warmup step $\eta_0 = \eta_{\max}$ is the peak LR of the entire schedule.

\paragraph{Power Law of Data Scaling}\label{sec:power-data-scaling} 
Prior studies~\citep{chinchilla, kaplan} demonstrate that, for a fixed model size, the final loss follows a power law of the data size or, equivalently, the total training step number $T$ in a constant-batch-size setting.
This relationship is expressed as:
\begin{align}\label{eq:auxiliary-loss}
    \cL(T) \approx \Hat{\cL}(T) := L_0 + \tilde{A} \cdot T^{-\alpha},
\end{align}
where $L_0, \tilde{A}, \alpha$ are parameters to fit. This law is typically fitted over the final losses of a set of training curves generated from a specific LR schedule family, such as a cosine schedule with a given peak LR ($\eta_{\max}$), ending LR ($\alpha\eta_{\max}$) and warmup steps ($W$). 
However, applying~\eqref{eq:auxiliary-loss}
directly to intermediate steps ($t<T$) introduces bias, as the LR schedule up to $t$ bears insufficient decay compared to the full schedule over $T$, resulting in different loss trajectories. This discrepancy is confirmed in \Cref{fig:chinchilla-whole-curve}. We refer to \eqref{eq:auxiliary-loss} as the Chinchilla Data Scaling Law (abbreviated as CDSL) throughout the paper since it is simplified from the Chinchilla scaling law~\citep{chinchilla} to highlight the data dimension.

\section{Empirical Derivation of the Multi-Power Law}
\label{sec:approach_section}

In this section, we present the empirical derivation of the Multi-Power Law (MPL) for schedule-aware loss curve prediction. Our key insights are summarized as follows:
\begin{enumerate}
    \item If two training runs share the same sum of learning rates, $\sum_{t=1}^{T} \eta_t$, then their final losses tend to be similar, though a non-negligible discrepancy remains (\Cref{sec:motivation-lr-sum-matching}).
    \item In particular, for a training run with a given LR schedule, the final loss $\cL(T)$ is similar to that of another training run using a constant learning rate schedule with the same total LR sum. This motivates us to decompose $\cL(T)$ into two components:
    (1) the final loss of the corresponding constant LR run; and (2) a residual term that captures the effect of LR decay, defined as the difference between the final loss of the target run and the constant LR run. (\Cref{sec:lr-sum-matching})
    \item Empirically, we observe that training runs with constant learning rates exhibit a Chinchilla-like power-law behavior in the loss curve and can thus be well approximated by a simple power law. (\Cref{sec:const-loss-approx})
    \item To approximate the residual term, instead of analyzing it directly, we imagine a sequence of training runs with schedules that gradually transition from a constant LR to the target schedule, all while maintaining the same total LR sum. Using a novel ``bottom-up'' approach, we derive an approximation formula for the loss difference introduced by each incremental change in the LR schedule, first by analyzing simple two-stage schedules and then extending the results to more complex schedules. (\Cref{sec:ld-approx,sec:bottom-up})
\end{enumerate}

Finally, we sum up all the approximation terms above, leading to our MPL. Below, we elaborate on our approach in detail.

\subsection{Our Approach: Learning Rate Sum Matching} \label{sec:approach} 

\paragraph{Auxiliary Training Process.} As introduced above, we construct a series of auxiliary training runs with LR schedules gradually changing from a constant LR schedule to the target schedule $\LRS:=\{\eta_1, \dots, \eta_T\}$. 
Our construction is detailed as follows.
We define the \textit{$k$-th auxiliary process} shares the first $k$ steps of learning rates, $\{\eta_1, \dots, \eta_k\}$, with the actual training process with LR schedule $\LRS$, and continues with the constant LR $\eta_k$ afterwards. The corresponding loss curve for the $k$-th auxiliary process is denoted as $\cL_{k}(t)$. In particular, the $0$-th auxiliary process shares only the warmup phase with the actual training process and uses a constant LR $\eta_0 = \eta_{\max}$ after warmup. We especially call it the {\it constant process} and use $\cLconst(t)$ to represent its loss curve. The $T$-th auxiliary process coincides with the actual training run with the target LR schedule, so $\cL_T(t)=\cL(t)$.

\begin{figure}[t]
    \vspace{-0.4in}
    \centering
    \includegraphics[width=1.0\linewidth]{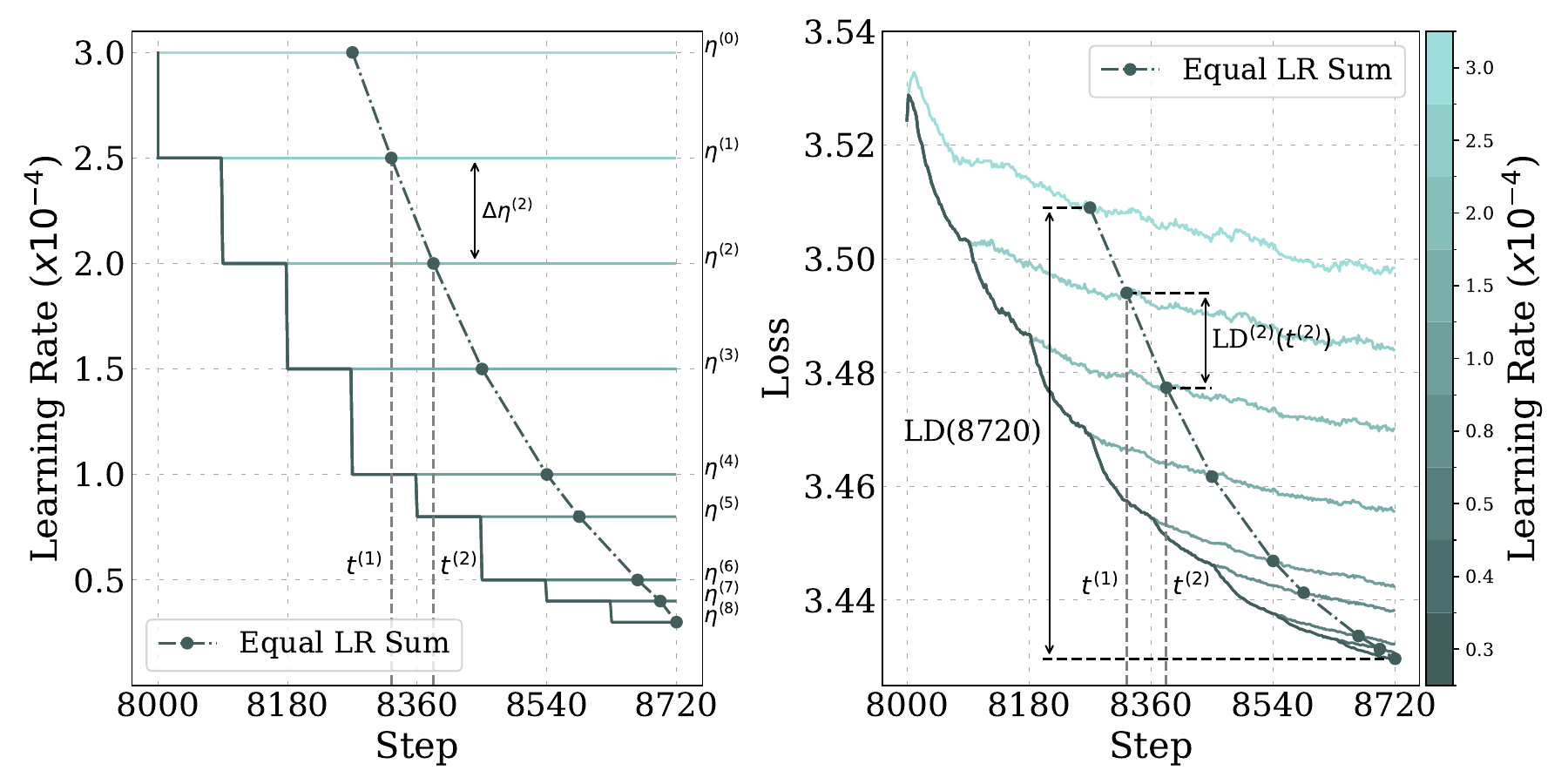}
    \vspace{-6mm}
    \caption{\small A multi-stage schedule (\Cref{sec:multi-stage}) example to illustrate the learning rate (LR) sum matching (\Cref{sec:approach}) and fine-grained loss reduction decomposition (\Cref{sec:fine-grained}). The steps with equal LR sum as the final step $T_9=8720$ are marked and linked with the dash-point line. Each stage spans 90 steps. $T_1=8000$, $T_2=8090$, $t^{(1)}=Z_{T_2}(T_9)$, $t^{(2)}=Z_{T_3}(T_9)$. See \Cref{app:mulit-stage} for experiment details. \textbf{Left:} The actual multi-stage schedule and schedules for auxiliary processes. LR gap between adjacent points denotes the LR reduction $\Delta \eta^{(i)}=\eta^{(i-1)}-\eta^{(i)}$. \textbf{Right:} Corresponding training curves for the multi-stage schedule and the auxiliary processes. The total loss reduction is $\LD(T_9)$ and can be decomposed as the intermediate loss reduction sum. The loss gap between adjacent points denotes the stage-wise loss reduction $\LD^{(i)}(t^{(i)})$.
    \label{fig:demo-multi-stage}
    }
    \vspace{-0.15in}
\end{figure}

\paragraph{Learning Rate Sum Matching Decomposition}\label{sec:lr-sum-matching}
The Multi-Power Law (MPL) approximates the loss curve $\cL(t)$ of the actual training process through the following decomposition. We define $Z(t)$ as the equivalent step in a constant LR process that shares the same cumulative LR sum as the actual process up to step $t$, where $Z(t) = \frac{S(t)}{\eta_0}$ and $S(t) = \sum_{\tau=1}^{t} \eta_\tau$ represents the sum of post-warmup LRs. The loss at step $t$ is then decomposed as:
\begin{align}\label{eq:base-decomposition}
    \cL(t) = \cLconst(Z(t)) - \underbrace{(\cLconst(Z(t)) - \cL(t))}_{=:~\LD(t)},
\end{align}
where $\cLconst(Z(t))$ interpolates the loss for non-integer $Z(t)$ in the constant LR process. We first approximate $\cL(t)$ using the training loss $\cLconst(Z(t))$ at step $Z(t)$, and then write the residual term $\LD(t)$ representing the approximation error. We call $\LD(t)$ the \textit{\textbf{L}oss re\textbf{D}uction term}, as it quantifies the loss reduction due to LR decay. We will approximate these two terms by parts in \Cref{sec:approx-by-parts}, with $\cLconst(Z(t))$ detailed in \Cref{sec:const-loss-approx} and $\LD(t)$ in \Cref{sec:ld-approx}.

\paragraph{Motivation: Continuous Approximations of the Training Dynamics.}\label{sec:motivation-lr-sum-matching}
The rationale behind this approach is that two training runs with the same LR sum should result in similar training losses, thus making it natural to decompose the loss curve into a major term corresponding to the loss of a run with the same LR sum
and a small residual term.
To see this, we use SGD as an example. If the learning rates $\eta_1, \dots, \eta_T$ are small, then SGD can be seen as a first-order approximation of its continuous counterpart, gradient flow, under mild conditions~\citep{li2017stochastic,cheng2020stochastic,elkabetz2021continuous}. Here gradient flow describes a continuous-time process in which the parameters $\vtheta(\tau)$ evolve according to the differential equation $\frac{\dd \vtheta(\tau)}{\dd \tau} = -\nabla \cL(\vtheta(\tau))$, where $\nabla \cL(\vtheta)$ is the gradient at $\vtheta$, and $\tau$ denotes the continuous time. 
In this approximation, the $t$-th step of SGD corresponds to evolving $\vtheta(\tau)$ over a small time interval of length $\eta_t$. When the learning rates are sufficiently small, the parameters after $t$ steps of SGD are close to $\vtheta(\tau)$ at time $\tau = \sum_{k=1}^{t} \eta_k$. This connection naturally motivates us to compare the losses of two training runs with the same LR sum.
While we use SGD for illustration, other optimization methods such as Adam can be similarly approximated by their continuous counterparts~\citep{ma2022qualitative}.

\subsection{Approximation by Parts}\label{sec:approx-by-parts}

\subsubsection{Constant Process Loss Approximation}\label{sec:const-loss-approx}

Motivated by the continuous approximation of the training dynamics, we hypothesize that losses of constant LR processes with identical LR sums are closely aligned. This insight inspires us to represent $\cLconst(Z(t))$ as a function of $S(t)+S_W$, where $S(t)+S_W$ represents the cumulative LR sum up to step $t$, including the warmup phase part $S_W$. Analogous to \eqref{eq:auxiliary-loss}, we propose that $\cLconst(Z(t))$ follows a power law over the LR sum:
\begin{equation}\label{eq:auxiliary-loss2}
    \hat{\mathcal{L}}_{\text{const}}(Z(t)) = L_0 + A \cdot \left(S(t) + S_W\right)^{-\alpha},
\end{equation}
where $A$ is a parameter counterpart of $\tilde{A}$. We perform extensive empirical validation and ablation studies across different model sizes, training horizons, and learning rates to confirm the robustness of \eqref{eq:auxiliary-loss2}, as detailed in \Cref{app:lr-sum-matching} and illustrated in \Cref{fig:constant-fit}.

\subsubsection{Loss Reduction Approximation}\label{sec:ld-approx}

Now we turn to the loss reduction term $\LD(t)$.
We start by proposing a simple yet effective linear approximation as a warmup, then we further break down the term with a finer-grained LR sum matching approach.

\paragraph{Warmup: A Crude Linear Approximation.} \label{sec:ld-observation}
We first generate training loss curves across various LR schedule types, including cosine and WSD schedules, alongside the loss curves of their corresponding constant processes. Then we can compute the loss reduction $\LD(t)$ for different LR schedules and analyze their dependency. As demonstrated in \Cref{fig:linear-approx}, $\LD(t)$ is approximately proportional to the LR reduction, $\Delta \eta_t = \eta_0 - \eta_t$ across different schedules. This leads to the following approximation:
\begin{equation}
    \LD(t) \approx B(\eta_0 - \eta_t),
\end{equation}
where $B$ is a constant.
This finding highlights a strong correlation between the loss gap and the LR gap at equivalent LR sum points on the loss landscape. However, while the linear approximation offers insights into the shape of $\LD(t)$, deviations from the actual loss reduction remain. Notably, when the LR decreases abruptly (e.g., in step-wise schedules), it predicts an instant loss drop at the stage switch, whereas the true loss decline remains smoother during the training process. See \Cref{sec:lldl} for further discussion.

\paragraph{Fine-Grained LR Sum Matching Decomposition.} \label{sec:fine-grained} 
In practice, the loss reduction term $\LD(t)$ can have a more complex dependency on the LR schedule.
To provide a more accurate approximation than the linear approximation above, we employ LR sum matching between adjacent auxiliary processes and decompose the loss reduction $\LD(t)$ into a telescoping sum of \textit{intermediate loss reductions} between adjacent auxiliary processes. 

More specifically, consider the step $t$ in the actual training process. Similar to $Z(t)$, we define $t_k := Z_k(t)$ as the equal-LR-sum step in the $k$-th auxiliary process, which is given by
\begin{equation}\label{eq:tk-Zk}
    t_k := Z_k(t) := k - 1 + \frac{1}{\eta_k} S_k(t),
\end{equation}
where $S_k(t) = \sum_{\tau=k}^{t} \eta_\tau$.
Then, for the $k$-th and $(k+1)$-th processes, we define the intermediate loss reduction as:
\begin{equation}\label{eq:subld}
    \LD_k(t_{k+1}) := \mathcal{L}_k(t_k) - \mathcal{L}_{k+1}(t_{k+1}).
\end{equation}
Intuitively, this term compares the loss at step $t_{k+1}$ in the $(k+1)$-th process with the loss at the equal-LR-sum step in a process that stops decaying the LR after the first $k$ steps, i.e., the $k$-th process. We then decompose the loss reduction term as a telescoping sum of intermediate loss reductions:
\begin{equation}\label{eq:multi-stage-decomposition}
    \LD(t) = \mathcal{L}_{\text{const}}(Z(t)) - \mathcal{L}(t) = \mathcal{L}_0(Z_0(t)) - \mathcal{L}_t(Z_t(t)) = \sum_{k=0}^{t-1} \LD_k(t_{k+1}).
\end{equation}
By leveraging this fine-grained decomposition, a good estimation of $\LD_k(t_{k+1})$ can lead to a more accurate approximation of $\LD(t)$. Where the context is clear, we simplify notation by omitting subscripts and denoting intermediate loss reduction as $\LD_k(t)$.

\subsection{Bottom-Up Derivation: Two-Stage, Multi-Stage, and General Schedules}\label{sec:bottom-up}

The challenges in approximating the intermediate loss reduction $\LD_k(t)$ are twofold. First, for commonly used schedules, the learning rate (LR) reduction at intermediate steps is often too small to induce a measurable loss reduction. Second, $\LD_k(t)$ may depend intricately on all previous learning rates $\{\eta_1, \dots, \eta_{k}\}$, which we refer to as the \textit{LR prefix} in this section. To address these issues, we derive the form of $\LD_k(t)$ using a ``bottom-up" approach regarding schedule structures. Initially, we propose its form through schedules comprising two constant LR stages, leveraging significant LR reductions. Next, we examine its dependency on the LR prefix using schedules of multiple stages. Finally, we generalize the form to encompass all schedules and conclude with our multi-power law.

\subsubsection{Case 1: Two-Stage Learning Rate Schedule}\label{section two-stage}\label{sec:two-stage}

\begin{figure}[t!]
    \vspace{-0.4in}
    \centering
    \subfigure[LR vs Step $t$]{
    \includegraphics[width=0.31\linewidth]{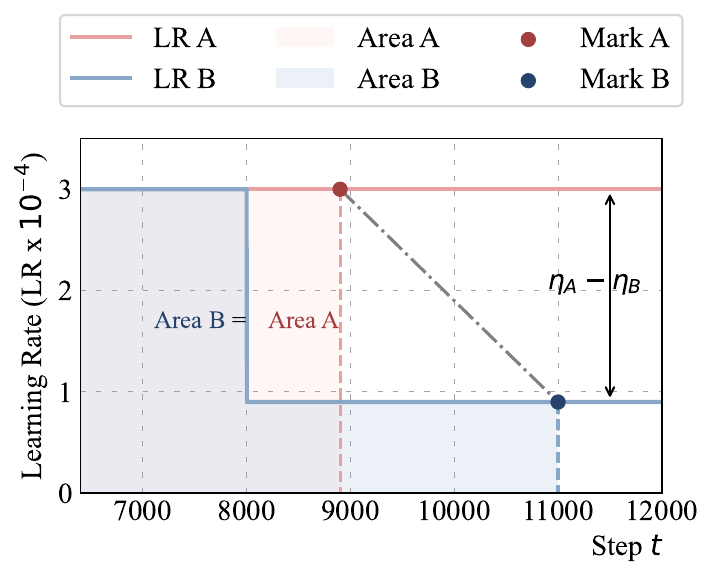}
     }
    \subfigure[Loss vs Step $t$]{
    \includegraphics[width=0.31\linewidth]{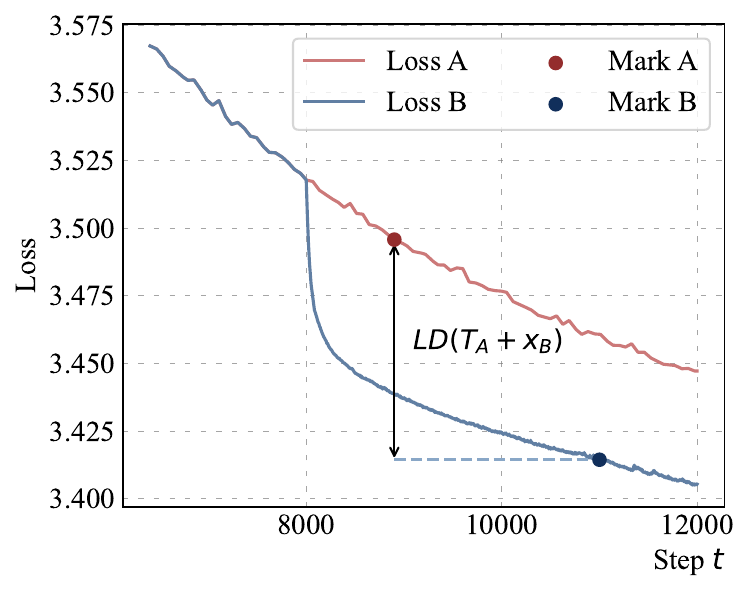}
    }
    \subfigure[Loss Reduction vs Step $x$]{
    \includegraphics[width=0.31\linewidth]{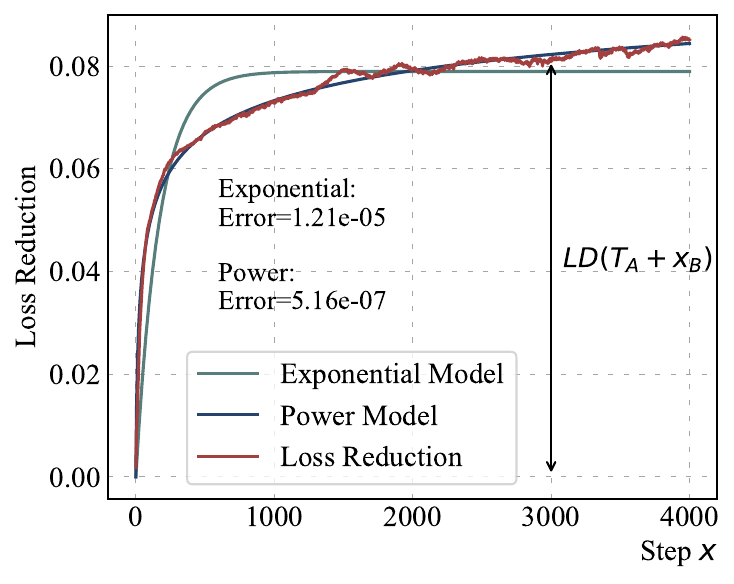}
    }\label{fig:fit-2stage-ld}
    \caption{\small Loss reduction (LD) of two-stage schedule exhibits a power law. Example setting: $t_B=11000$, $x_B=3000$, $\etab=9\times 10^{-5}$, $\etaa=3\times 10^{-4}$, $\Ta=8000$. \textbf{(a)} A and B have the equal LR sums: $x_A=900$, $t_A=8900$. \textbf{(b)} Loss reduction at $B$: $\LD(T_A+x_B)=\cL_A(t_A)-\cL_B(t_B)$. \textbf{(c)} Fitting loss reduction $\hatLD(\Ta+x_B)$ with power form results in $0.13(1 -(1 + 0.21x)^{0.15})$; Fitting with exponential form results in $0.0790(1-e^{-0.01x})$. The shape of loss reduction is closer to a power form than exponential.} 
    \label{fig:fit-2stage}
    \vspace{-4mm}
\end{figure}

The two-stage schedule keeps learning rates at $\etaa$ for $\Ta$ steps, directly drops to $\etab$, and continues for $\Tb$ steps. Then the LR reduction $\etaa-\etab$ could be significant enough to induce $\LD_{\Ta}(t)$, which is also the loss reduction $\LD(t)$ for step $t$ on Stage 2. 
See \Cref{appendix-two-stage} for experiment details. 

\paragraph{Loss Reduction Term Follows a Power Law.} As shown in~\Cref{fig:fit-2stage}, the number of steps $x := t - \Ta$ in Stage 2 increases, $\LD(\Ta + x)$ monotonically rises from $0$ to around $0.09$ and eventually saturates. 
This motivates us to approximate $\LD(\Ta + x)$ in the form $\tilde{B} \cdot (1 - U(\etab x))$, where $\tilde{B}$ is a parameter and $U(s)$ is a function that decreases from $1$ to $0$ as $s = \etab x$ increases from $0$ to infinity. The reason we choose $\etab x$ instead of $x$ as the argument of $U$ will be clear in the general case.

But at what rate should $U(s)$ decrease? After trying different forms of $U(s)$ to fit $\LD(\Ta + x)$, we find that the power-law form $U(s) = (\tilde{C} \cdot s + 1)^{-\beta}$ for some $\tilde{C}, \beta > 0$ fits most properly as shown in~\Cref{fig:fit-2stage}, which leads to the following power-law form for the loss reduction term:
\begin{align}\label{eq:2stage-fit}
    \LD(\Ta + x) \approx \hatLD(\Ta + x) := \tilde{B} (1 - (\tilde{C} \cdot \etab x + 1)^{-\beta}).
\end{align}
\Cref{fig:fit-2stage-ld} shows that this power law aligns well with the actual loss reduction term $\LD(\Ta + x)$. In contrast, the exponential form $U(s) = e^{-Bs}$ (so $\LD(\Ta + x) \approx A(1 - e^{-B \etab x})$) struggles to match the slow and steadily increase of $\LD(\Ta + x)$ when $x$ is large. 

\paragraph{Parameter Pattern of Power Law.} We further investigate how to estimate the parameters $\tilde{B}, \tilde{C}, \beta$ in the power law. 
Based on our preliminary experiments, we set $\beta = 0.4$, a constant that works well.
Then we conduct experiments to understand how the best parameters $\tilde{B}, \tilde{C}$ to fit $\LD(t)$ depend on $\etaa, \etab, \Ta$, where we set default values $\etaa = 3 \times 10^{-4}$, $\etab = 3 \times 10^{-5}$, $\Ta = 8000$ and change one variable at a time. The details of ablation experiments can refer to \Cref{app:two-stage}. The observations are summarized as follows.
\begin{enumerate}[(1)]
    \item \textbf{$\tilde{B}$ is Linear to LR Reduction}. As shown in the first row of~\Cref{fig:coef-law}, $\tilde{B}$ linearly decreases with $\etab$ and approximately increases linearly with $\etaa$, especially when $\etaa$ is not too large. Moreover, the slope of $\tilde{B}$ over $\etaa$ and $\etab$ are approximately opposite to each other. This motivates us to hypothesize that $\tilde{B} \propto \etaa - \etab$ and reparameterize $\tilde{B}$ as $\tilde{B} = B (\etaa - \etab)$, where $B$ is a constant. 
    \item \textbf{$\tilde{C}$ Follows a Power Law of $\etab$.} As shown in the second row of~\Cref{fig:coef-law}, $\tilde{C}$ is very sensitive to $\etab$ but much less dependent on $\etaa$. We hypothesize that $\tilde{C}$ follows a power law $\tilde{C} \propto \etab^{-\gamma}$, and reparameterize $\tilde{C}$ as $\tilde{C} = C \etab^{-\gamma}$, where $C > 0$ and $\gamma > 0$ are constants.
    \item \textbf{LR Reduction Term Depends Less on $\Ta$.} We also find that $\tilde{B}$ and $\tilde{C}$ are less sensitive to $\Ta$, relatively stable as $\Ta$ varies, as shown in the last column in~\Cref{fig:coef-law}. This suggests that the loss reduction has a weak dependency of loss reduction on LR prefix length.
\end{enumerate}

\begin{figure}[t]
    \vspace{-0.4in}
    \centering
    \includegraphics[width=0.9\linewidth]{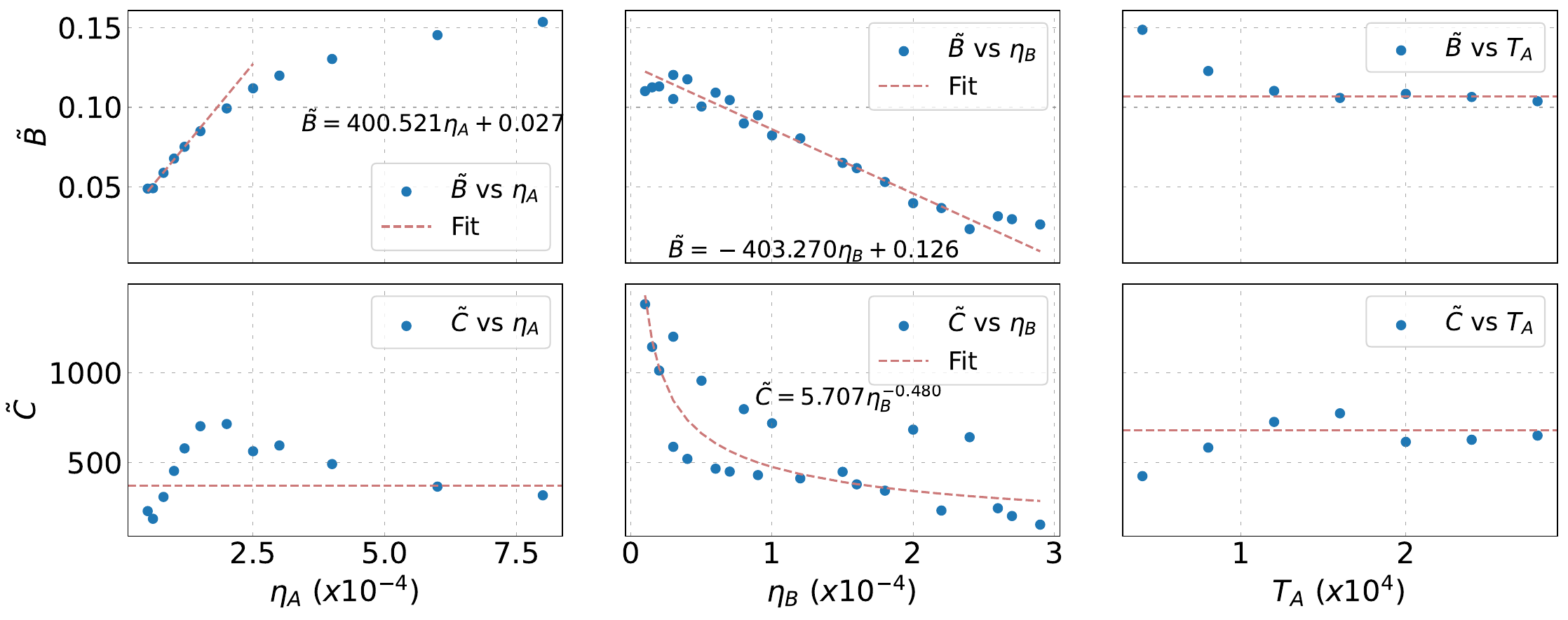}
    \caption{\small The dependency patterns of $\Tilde{B}$, $\Tilde{C}$ over $\eta_A$, $\eta_B$ and $T_A$ in the two-stage cases. $\Tilde{B}$ is approximately proportional to $\eta_A -\eta_B$, and $\Tilde{C}$ manifests power-law pattern over $\eta_B$. The dependency of $\eta_A$ over $\tilde{C}$ and the impacts of $T_A$ on $\Tilde{B}$, $\Tilde{C}$ are unpredictable or negligible, which are approximately ignored in our discussion.}
    \vspace{-3mm}
    \label{fig:coef-law}
\end{figure}

\paragraph{Approximation Form.} Putting all the above observations together, we have the final approximation form for the loss reduction term in the two-stage schedule:
\begin{align}\label{eq:2stage-approx}
    \LD(T_A + x) \approx \hatLD(T_A + x) := B (\etaa - \etab) \left(1 - (C\etab^{1-\gamma} x + 1)^{-\beta}\right).
\end{align}

\subsubsection{Case 2: Multi-Stage Learning Rate Schedule} \label{sec:multi-stage}

In the two-stage case, the LR prefix is constant at $\etaa$, leaving uncertainty about whether the intermediate loss reduction conforms to the power form when the LR prefixes vary. To investigate this, we analyze the multi-stage step decay schedule. Consider an $n$-stage LR schedule $\LRS = \{\eta_1, \dots, \eta_T\}$, where the $i$-th stage spans from step $T_{i} + 1$ to $T_{i+1}$ and uses the LR $\eta^{(i)}$ ($0 \le T_1 < \cdots < T_{n+1} = T$, with $\eta_0 = \eta^{(0)} > \eta^{(1)} > \cdots > \eta^{(n)}$, $1 \leq i \leq n$). An example is illustrated in \Cref{fig:demo-multi-stage}.

\paragraph{Stage-Wise Loss Reduction.} 
In the multi-stage schedule, given stage index $1 \leq i \leq n$, the stage-wise loss reduction is defined as $\LD^{(i)}(t) = \LD_{T_{i}}(t)$\footnote{Note that $\LD^{(i)}(t) = \LD_{t^{(i)}}(t)$ for each $T_i+1 \leq t^{(i)} \leq T_{i+1}$, as these auxiliary processes for a specific stage coincide.}. The LR reduction between stages, $\Delta \eta^{(i)} = \eta^{(i-1)} - \eta^{(i)}$, is also measurable. Using this, we estimate the shape of $\LD^{(i)}(t)$ for different stages. Regard $T_{i}$ as $\Ta$ in the two-stage case and define $x := t - T_{i}$. As shown in \Cref{fig:multi-stage-ld-power}, $\LD^{(i)}(T_{i} + x)$ approximately conforms to a similar power law as \eqref{eq:2stage-fit} for the two-stage case:
\begin{align}
    \LD^{(i)}(T_{i} + x) \approx \hatLD^{(i)}(T_{i} + x) := \tilde{B}^{(i)} \left(1 - \left(\tilde{C}^{(i)} \cdot \eta^{(i)} x + 1\right)^{-\beta}\right),
\end{align}
where $\tilde{B}^{(i)}$ and $\tilde{C}^{(i)}$ are constants dependent on the LR prefix $\{\eta_1, \dots, \eta_{T_i}\}$ for stage $i$.

\paragraph{Intermediate Loss Reduction Weakly Depends on the LR Prefix Shape.} 
For stage $i$, the LR prefix is $\{\eta_1, \dots, \eta_{T_i}\}$, which varies in length and scale across stages. To evaluate the effect of the LR prefix on the intermediate loss reduction form, we examine its impact on $\tilde{B}^{(i)}$ and $\tilde{C}^{(i)}$. Interestingly, as shown in \Cref{fig:multi-stage-ld-pattern}, we observe that $\tilde{B}^{(i)} \approx B (\eta^{(i-1)} - \eta^{(i)})$ and $\tilde{C}^{(i)} \approx C (\eta^{(i)})^{-\gamma}$, which align closely with the two-stage results. Here, $B$, $C$, and $\gamma$ are constants largely independent of the stage index. This suggests that intermediate loss reductions are relatively insensitive to the LR prefix compared to the LR reductions $\Delta \eta^{(i)}$ and the stage LR $\eta^{(i)}$. Moreover, this weak dependence on the LR prefix may extend to general schedules, indicating a broader applicability of the power-law form for intermediate loss reduction.

\begin{figure}[t] 
    \vspace{-0.4in}
    \centering
    \subfigure[Power Fitting of $\LD^{(i)}(t)$\label{fig:multi-stage-ld-power}]{
    \includegraphics[width=0.31\textwidth]{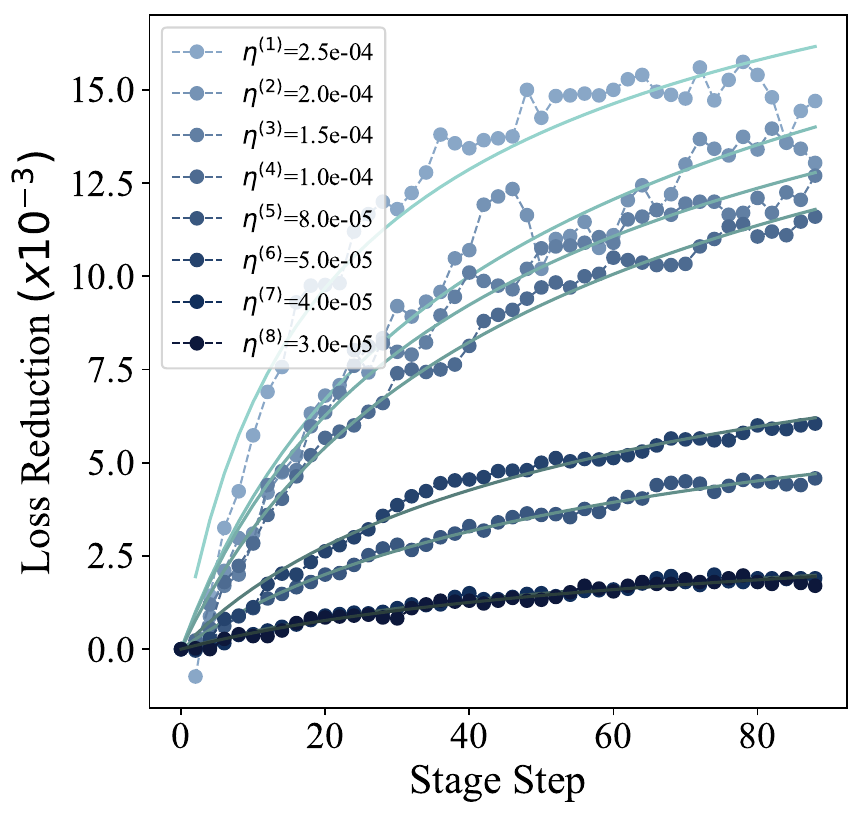}}
    \subfigure[$\LD^{(i)}(t)$ Parameter Patterns\label{fig:multi-stage-ld-pattern}]{\includegraphics[width=0.56\textwidth]{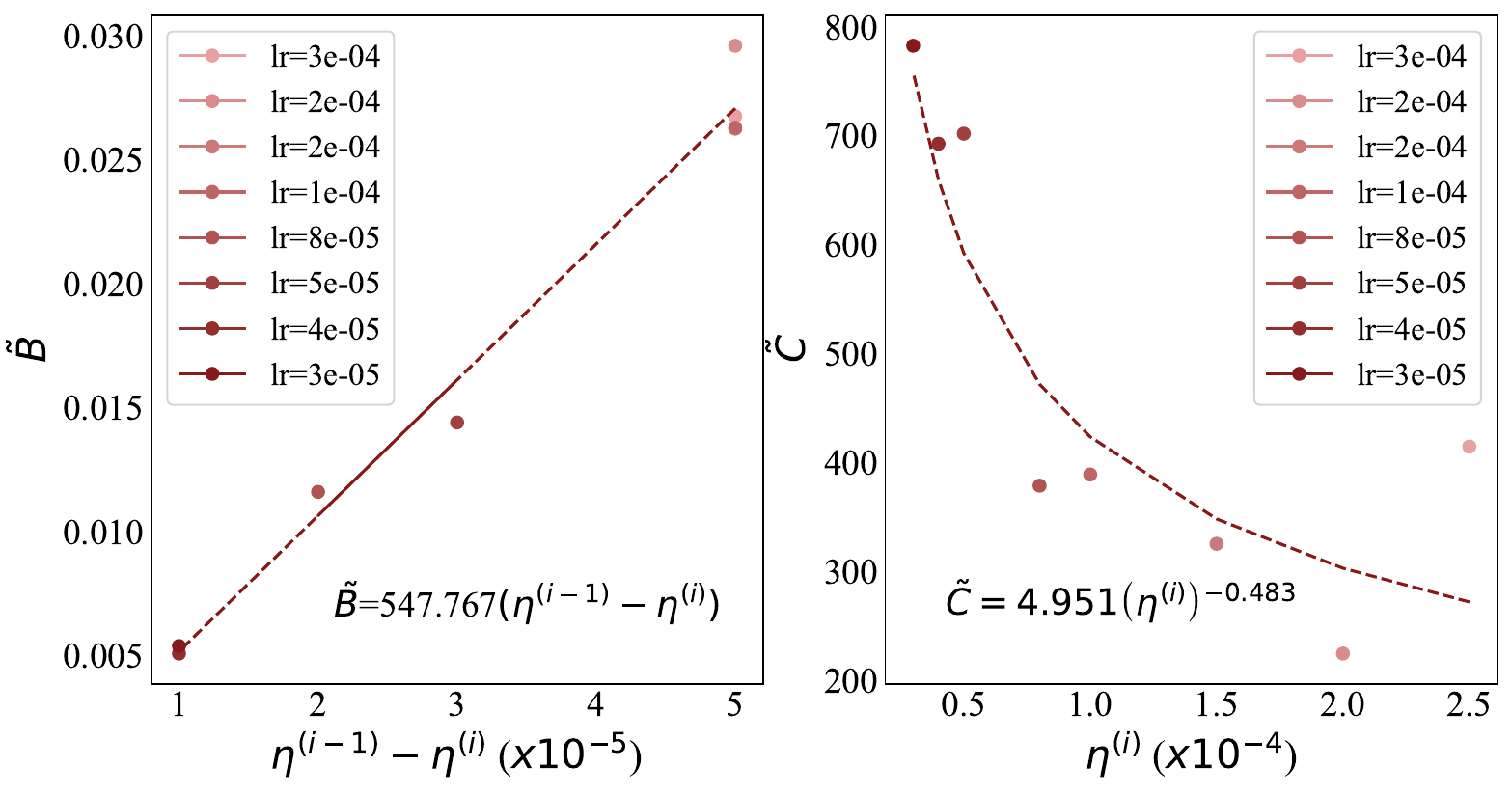}}
    \vspace{-0.1in}
    \caption{\small The intermediate loss reductions of a multi-stage schedule (\Cref{fig:demo-multi-stage}) and their shape patterns. \textbf{(a)} The loss reduction $\LD^{(i)}(t)$ between the adjacent stages of the multi-stage schedules still follows the power form. \textbf{(b)} $\tilde{B}\propto \eta^{(i-1)}-\eta^{(i)}$, $\tilde{C}\propto (\eta^{(i)})^{-\gamma}$. The parameter patterns in the two-stage setting hold in the multi-stage setting approximately. The shape of patterns is similar to the patterns in the two-stage experiments, as shown in \Cref{fig:coef-law}.}
    \label{fig:multi-ld-power-law}
    \vspace{-0.1in}
\end{figure}

\subsubsection{Case 3: General Learning Rate Schedule}\label{section general lrs}

For general LR schedules, we extrapolate our findings from the two-stage and multi-stage cases in \Cref{sec:two-stage,sec:multi-stage}, and propose to approximate the intermediate loss reduction at step $k$ as the following power form:
\begin{align}\label{eq:prefix-independent}
\LD_k(t)\approx\hatLD_k(t):=B (\eta_{k} - \eta_{k+1}) \left(1-\left(C\eta_{k+1}^{1-\gamma}(t-k)+1\right)^{-\beta}\right),
\end{align}
with LR-prefix independent constants $B$, $C$, $\gamma$ and $\beta$.

Thus, the loss reduction between the constant process and the actual process can be approximated as  
\begin{align*}
    \hatLD(t):= \sum_{k=0}^{t-1}\hatLD_k(t_{k+1})=\sum_{k=0}^{t-1} B (\eta_{k} - \eta_{k+1}) \left(1 - \left(C\eta_{k+1}^{1-\gamma} (t_{k+1}-k)+1\right)^{-\beta}\right).  
\end{align*}
By the definition of $t_{k+1}$~\eqref{eq:tk-Zk}, we have $t_{k+1} - k =\frac{S_{k+1}(t)}{\eta_{k+1}}$. Therefore, we can conclude 
\begin{align} \label{eq:general-loss-reduction}
    \LD(t) \approx \hatLD(t)
    = \sum_{k=1}^{t} B (\eta_{k-1} - \eta_k) \left(1 - (C \eta_k^{-\gamma} S_k(t) + 1)^{-\beta}\right),
\end{align}
where we also change the subscript indices from $k+1$ to $k$. Combining the above ansatz for the loss reduction term with the power-law ansatz for the auxiliary loss in \eqref{eq:auxiliary-loss2} leads to our Multi-Power Law:
\begin{align}
    \cL(t) &\approx L_0 + A \cdot (S_1(t)+S_W)^{-\alpha} - \sum_{k=1}^{t} B (\eta_{k-1} - \eta_k) \left(1 - (C \eta_k^{-\gamma} S_k(t) + 1)^{-\beta}\right).
    \label{equ:MPL}
\end{align}
See~\Cref{app:simp} for the ablation studies on different components of the Multi-Power Law.

\section{How Might the Multi-Power Law Arise?} \label{sec:theory}

In this section, we present a preliminary theoretical analysis to understand how the Multi-Power Law might arise. More specifically, we consider a simple setting where SGD optimizes a quadratic loss function with noisy gradients, and show that the Multi-Power Law naturally emerges when the Hessian and noise covariance matrices exhibit certain types of power-law structures. While this analysis does not fully capture the complexity of deep learning, we believe it offers insight into how the Multi-Power Law relates to underlying spectral properties in the optimization landscape.

\subsection{Setup}
We consider a quadratic loss function $\cL(\vtheta) = \frac{1}{2}(\vtheta-\vtheta_*)^\top \mH (\vtheta-\vtheta_*)$, where $\vtheta \in \R^d$ represents the trainable parameters, $\vtheta_*$ is the ground truth and $\mH \in \R^{d \times d}$ is the Hessian matrix. Linear regression is a special case of this formulation. More generally, any sufficiently smooth loss function can be locally approximated by such a quadratic form near a minimizer.

We use SGD with LR schedule $\LRS = \{\eta_1, \dots, \eta_T\}$ to optimize the loss function, where the $t$-th iteration is given by $\vtheta_t = \vtheta_{t-1} - \eta_t \vg_t$, with $\vg_t$ being the stochastic gradient at step $t$. We assume that the stochastic gradient $\vg_t$ equals the true gradient $\nabla \cL(\vtheta_{t-1}) = \mH \vtheta_{t-1}$ plus Gaussian noise $\gN(\vzero, \mSigma)$, where $\mSigma \in \R^{d\times d}$ is the covariance matrix. That is, $\vg_t \sim \gN(\mH \vtheta_{t-1}, \mSigma)$.
\paragraph{From spectra to scaling law.} The scaling behavior of the loss during training is typically determined by the spectra of the Hessian matrix $\mH$ and the noise covariance matrix~\citep{canatar2021spectral,spigler2020asymptotic,maloney2022solvable,cui2021generalization,brandfonbrener2024loss}.
By carefully analyzing the training dynamics, we show that certain structures of Hessian and noise covariance matrices can lead to a scaling behavior similar to our empirical Multi-Power Law.
In the following, we use $\lambda_i$ to denote the $i$-th eigenvalue of $\mH$ and use $\Sigma_{ii}$ to denote the $i$-th diagonal entry of $\mSigma$ in the eigenbasis of $\mH$ (i.e., $\vv_i^\top \mSigma \vv_i$, where $\vv_i$ is the $i$-th eigenvector of $\mH$).
We initialize the parameters at $\vtheta_0$ and 
use $\Delta_i$ to denote the $i$-th corrdinate of $\vtheta_0 - \vtheta_*$ in the eigenbasis of $\mH$ (i.e., $\vv_i^\top (\vtheta_0 - \vtheta_*)$).
We consider a scenario where Hessian, noise covariance, and initial point are drawn from certain distributions before training, and make the following assumptions:
\begin{assumption}\label{asp:power-spectra}
    For all $1 \le i \le d$, the marginal distribution of $(\lambda_i, \Sigma_{ii}, \Delta_{i})$ is a fixed distribution $p(\lambda, \gE, \Delta)$ with following properties:
    \begin{itemize}
        \item $\lambda$ is supported on $[0, \Lambda]$ for some $\Lambda > 0$,
        and $p(\lambda) \propto \lambda^{-\nu}$ for some exponent $\nu \in [0, 1)$. That is,
        $p(\lambda) = \onec{\lambda \in [0, D]} \frac{\lambda^{-\nu}}{Z}$ for 
        some normalization constant $Z > 0$.
        \item $\E_{p}[\gE \mid \lambda] \propto \lambda^{-\rho}\exp(-r\lambda)$, for some $\rho < 1 - \nu$ and $r > 0$.
        \item $\E_{p}[\Delta^2 \mid \lambda] = D^2 \cdot \lambda^{-\kappa}$ for some $\kappa \in [0, 2 - \nu)$ and $D > 0$
    \end{itemize}
\end{assumption}
\subsection{Main Theorem}
Consider an SGD training process with an arbitrary LR schedule $\LRS = \{\eta_1, \dots, \eta_T\}$.
Define $S_k(t) := \sum_{\tau = k}^{t} \eta_{\tau}$ for $1 \le k \le t \le T$.
Fix any $\eta_0 > 0$.
We define the following function $\widehat{\cL}(t)$ as an estimate of the expected loss at step $t$:
\begin{align}
    &\widehat{\cL}(t) := L_0 + A \cdot S_1(t)^{-\alpha}
    - \hatLD(t), \label{equ:theory-MPL-first-line} \\
    &\hatLD(t) := B \sum_{k=1}^t (\eta_{k-1}-\eta_k) \cdot \widehat{G}(S_k(t)),
    \quad \widehat{G}(x):= 1- \frac{\gamma(\beta, (2x+r)\Lambda)}{\gamma(\beta, r\Lambda)} \cdot (Cx+1)^{-\beta}, \label{equ:theory-MPL}
\end{align}
where the constants $L_0, A, \alpha, B, \beta$ above are given by
\begin{equation} \label{eq:theory-constants}
    L_0 := \frac{d}{4} \eta_0 \E_p[\gE],
    ~A :=\frac{d \cdot \Gamma(\alpha)}{2^{\alpha+1}Z} D^2, 
    ~\alpha := 2 - \nu - \kappa,
    ~B :=\frac{d}{4} \E_p[\gE],
    ~\beta := 1-\nu-\rho,
    ~C := \frac{2}{r}.
\end{equation}
Here, $\Gamma(x) := \int_{0}^{+\infty} t^{s-1} e^{-t} \dd t$ denotes the gamma function, and
$\gamma(s,x) := \int_{0}^x t^{s-1} e^{-t} \dd t$ denotes the lower incomplete gamma function.

The theorem below establishes $\widehat{\cL}(t)$ as a precise estimate of the expected loss curve of SGD. See \Cref{appendix theory multi power} for the detailed proof.
\begin{theorem}\label{thm:multi-power}
    Under~\Cref{asp:power-spectra},
    for all $1 \le t \le T$, if
    $\eta_{\max} := \max_{0 \le t \le T} \{ \eta_t \}$ is sufficiently small
    and $S_1(t)$ is sufficiently large,
    then we have the following estimate of $\E[\cL(\vtheta_t)]$:
    \begin{align*}
        |\E[\cL(\vtheta_t)]-\widehat{\cL}(t)| = O(\eta_{\max} S_1^{-\min\{\alpha+1,\beta\}} + \eta_{\max}^2).
    \end{align*}
\end{theorem}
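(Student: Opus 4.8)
The plan is to compute the expected loss of SGD on the quadratic objective exactly in the eigenbasis of $\mH$, per coordinate, and then average over the spectral distribution $p(\lambda,\gE,\Delta)$ from \Cref{asp:power-spectra}. Writing $\theta_t^{(i)}$ for the $i$-th coordinate of $\vtheta_t - \vtheta_*$, each coordinate evolves independently as a scalar noisy linear recursion $\theta_t^{(i)} = (1-\eta_t\lambda_i)\theta_{t-1}^{(i)} - \eta_t\xi_t^{(i)}$ with $\xi_t^{(i)}\sim\gN(0,\Sigma_{ii})$. Hence $\E[(\theta_t^{(i)})^2] = \Delta_i^2 \prod_{\tau=1}^t(1-\eta_\tau\lambda_i)^2 + \Sigma_{ii}\sum_{k=1}^t \eta_k^2 \prod_{\tau=k+1}^t(1-\eta_\tau\lambda_i)^2$, and $\E[\cL(\vtheta_t)] = \frac12\sum_i \lambda_i \E[(\theta_t^{(i)})^2]$. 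When $\eta_{\max}$ is small, the standard approximation $\prod_{\tau=k+1}^t(1-\eta_\tau\lambda_i)^2 = \exp(-2\lambda_i S_{k+1}(t) + O(\eta_{\max}\lambda_i S_{k+1}(t)))$ replaces products by exponentials with a controlled multiplicative error; I would isolate this approximation as a lemma and carry the $O(\eta_{\max})$ and $O(\eta_{\max}^2)$ remainders through all subsequent steps.

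Next I would split $\E[\cL(\vtheta_t)]$ into a \emph{bias} part $\frac12\sum_i \lambda_i\Delta_i^2 e^{-2\lambda_i S_1(t)}$ and a \emph{noise} part $\frac12\sum_i \lambda_i\Sigma_{ii}\sum_k \eta_k^2 e^{-2\lambda_i S_{k+1}(t)}$. Replacing the sum over $i$ by $d$ times an expectation over $p$ and using the conditional moments in \Cref{asp:power-spectra}: for the bias term, $\E_p[\lambda\Delta^2 e^{-2\lambda S_1}] = D^2\int_0^\Lambda \frac{\lambda^{-\nu}}{Z}\lambda^{1-\kappa} e^{-2\lambda S_1}\dd\lambda$, and for $S_1$ large the upper limit can be pushed to $\infty$ (incurring exponentially small error), giving a clean Gamma integral equal to $\frac{D^2\Gamma(2-\nu-\kappa)}{Z\,(2S_1)^{2-\nu-\kappa}}$ — this produces the term $A\cdot S_1^{-\alpha}$ with $\alpha = 2-\nu-\kappa$ and $A$ as in \eqref{eq:theory-constants}. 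For the noise term, the Abel-summation / telescoping identity $\sum_{k=1}^t \eta_k^2 f(S_{k+1}(t))$ with $f(s)=e^{-2\lambda s}$ is rewritten, via $\eta_k^2 \approx \eta_k\cdot(\text{increment})$ and summation by parts against $\eta_{k-1}-\eta_k$, into a constant piece (giving $L_0 = \frac d4\eta_0\E_p[\gE]$, the $k=0$/stationary contribution) minus a sum of the form $\sum_k(\eta_{k-1}-\eta_k)\,\E_p[\tfrac d4\gE\cdot(\text{something in }S_k(t))]$. Carrying out the $\lambda$-integral of $\lambda\,\E_p[\gE\mid\lambda]\,e^{-2\lambda S_k}\propto \lambda^{1-\nu-\rho}e^{-(2S_k+r)\lambda}$ over $[0,\Lambda]$ yields exactly the lower incomplete gamma ratio $\gamma(\beta,(2S_k+r)\Lambda)/\gamma(\beta,r\Lambda)$ with $\beta = 1-\nu-\rho$, and matching the polynomial prefactor to the power-law form forces $C = 2/r$ and the definition of $\widehat G$ in \eqref{equ:theory-MPL}. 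Summation by parts is the place where the $\eta_{k-1}-\eta_k$ weights and the $B$ coefficient appear naturally.

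The main obstacle is the error bookkeeping in the noise term: unlike the bias term (a single integral), the noise term is a double sum over $k$ and $i$, and one must show that (i) replacing $\prod(1-\eta_\tau\lambda)^2$ by $e^{-2\lambda S}$, (ii) replacing $\eta_k^2$ by $\eta_k$ times an LR increment in the summation-by-parts step, and (iii) truncating the $\lambda$-integral at $\Lambda$ versus $\infty$ where needed, each contribute only $O(\eta_{\max}S_1^{-\min\{\alpha+1,\beta\}} + \eta_{\max}^2)$ after summation. The exponents $\alpha+1$ and $\beta$ arise because differentiating/incrementing the Gamma-type integrals costs one extra power of $S$ (hence $\alpha+1$) while the incomplete-gamma correction to $\widehat G$ decays like $S^{-\beta}$; the constraints $\rho < 1-\nu$ and $\kappa < 2-\nu$ in \Cref{asp:power-spectra} guarantee the relevant integrals converge and the exponents $\alpha,\beta$ are positive. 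I would handle this by bounding $\sum_k(\eta_{k-1}-\eta_k)\le\eta_0$ for the first-order-in-$\eta_{\max}$ terms and $\sum_k\eta_k^3\lambda\cdot(\cdots)\le\eta_{\max}^2\cdot(\text{convergent})$ for the second-order terms, so that the telescoping structure absorbs the per-step errors without an extra factor of $T$.
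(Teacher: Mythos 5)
Your proposal is correct in substance and arrives at exactly the paper's formula, but it organizes the first half of the argument differently. The paper first proves an intermediate result for arbitrary quadratics (Theorem~\ref{thm:time-varying}) by telescoping auxiliary expectations $A_k = \E[U(\vtheta_k,\eta_k,S_{k+1})]$, where $U$ is the closed-form loss of a constant-LR continuation; the increments $A_k-A_{k-1}$ produce the $(\eta_{k-1}-\eta_k)$-weighted terms, with per-step error lemmas (\Cref{lm:exp-2x}--\ref{lm:epsilon-bound}). You instead solve the scalar variance recursion exactly, replace products $\prod_\tau(1-\eta_\tau\lambda)^2$ by exponentials, and then use Abel summation on the noise sum $\sum_k\eta_k^2\lambda e^{-2\lambda S_{k+1}(t)}$ to generate the same $(\eta_{k-1}-\eta_k)$ weights and the constant piece $\tfrac{\eta_0}{4}\Sigma_{ii}(1-e^{-2\lambda S_1})$; this is algebraically equivalent to the paper's telescoping (the auxiliary-process device is summation by parts in disguise) and arguably more transparent, at the cost of having to do the error bookkeeping "by hand" exactly as you describe. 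The second half — averaging over \Cref{asp:power-spectra}, the gamma-integral computations giving $A S_1^{-\alpha}$, $L_0$, and the incomplete-gamma ratio in $\widehat{G}$ with $C=2/r$, plus exponential-tail bounds of the type in \Cref{lmm:gamma-bound} — coincides with the paper's proof verbatim. Two small slips to fix when writing this out: (i) the per-step exponent error in the product-to-exponential step is $O(\eta_\tau^2\lambda^2)$, so the cumulative multiplicative error is $O(\eta_{\max}\lambda^2 S)$, not $O(\eta_{\max}\lambda S)$ as written; the extra factor of $\lambda$ is what yields $O(\eta_{\max}S_1^{-\alpha-1})$ after the spectral integral (the paper's $Q_1$), whereas your stated bound would only give $O(\eta_{\max}S_1^{-\alpha})$, which need not fit inside the claimed error; (ii) the $S_1^{-\beta}$ contribution to the error comes from discarding the term $\tfrac{d}{4}\eta_0\,\E_p[\gE\, e^{-2\lambda S_1}]$ (the remainder of the paper's $I_2$, since $L_0$ in $\widehat{\cL}$ is just $\tfrac d4\eta_0\E_p[\gE]$), not from the incomplete-gamma correction inside $\widehat{G}$, which is kept exactly in $\widehat{\cL}$ rather than approximated away.
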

Here, \Cref{equ:theory-MPL-first-line} is the same as \Cref{equ:multpowerlaw},
and we can see that the exponent $\alpha$ is determined by the rate of eigenvalue decay of the Hessian and the rate of variance decay of the initial parameter.
The coefficients $L_0 \propto \eta_0 \E_p[\gE], A \propto D^2$ depend on the learning rate, variance of gradient noise, and the initial distance to the optimal parameter.

\Cref{equ:theory-MPL} gives the loss reduction term $\hatLD(t)$, which is similar to the loss reduction term in \Cref{eq:loss-reduction-def}. A change in the learning rate at step $k$ induces a loss reduction $B (\eta_{k-1} - \eta_k) \widehat{G}(S_k(t))$, where $B \propto \E_p[\gE]$ depends on the variance of gradient noise.
Similar to \eqref{eq:loss-reduction-def}, this loss reduction saturates as $B (\eta_{k-1} - \eta_k) (1- \Theta(S_k^{-\beta}))$ when $S_k(t)$ becomes large, where $\beta$ is determined by the rate of eigenvalue decay of the Hessian and the rate of variance decay of the gradient noise.
But a slight misalignment between theory and practice here is that the form of $\widehat{G}(S_k(t))$ is not the same as $G(\eta_k^{-\gamma} S_k(t))$ in \eqref{eq:loss-reduction-def}.
The main discrepancy here is that $G(\eta_k^{-\gamma} S_k(t))$ has an explicit dependence on the current learning rate $\eta_k$, while $\widehat{G}(S_k(t))$ does not. We suspect that this is due to that in practice, changing the learning rate can also change the local loss landscape, such as the well-known Edge of Stability (EoS) phenomenon~\citep{cohen2021gradient,damian2023selfstabilization,lyu2022understanding,cohen2025understanding}, but the quadratic loss function we consider here is not complex enough to exhibit such a behavior.
In addition, the function $G(x)$ is in a simple power form, while $\widehat{G}(x)$ also involves a lower incomplete gamma function $\gamma(\beta, (2x+r)\Lambda)$ and only approximately follows the power form.
We argue that this is a minor difference since these functions are approximately the same especially for large $x$, as we plot in \Cref{fig:G-hat-G}.
Further, if $\Lambda \to +\infty$, i.e., the maximum eigenvalue of the Hessian is very large, it is easy to see that $\widehat{G}(x) \to G(x)$ with the same $C$ and $\beta$.

Extending the analysis of our Multi-Power Law beyond quadratic cases is a key direction for future work. A deeper understanding of loss landscape properties in deep learning is crucial for this generalization.
For example, a recent paper~\citep{wen2024understanding} conjectures that the loss landscape in LLM pretraining exhibits a river valley landscape, which is similar to a deep valley with a river at its bottom. Based on this conjecture, they further explained the success of WSD schedules. For future work, it would be interesting to extend our analysis to this river valley landscape or other frameworks that better capture the complex structure of the loss function in practical deep learning scenarios.

\section{Empirical Validation of the Multi-Power Law} \label{sec:validation}  

\begin{figure}[t!]
\vspace{-0.4in}
    \centering
    \subfigure[Cyclic Schedule]{
    \includegraphics[width=0.48\linewidth]{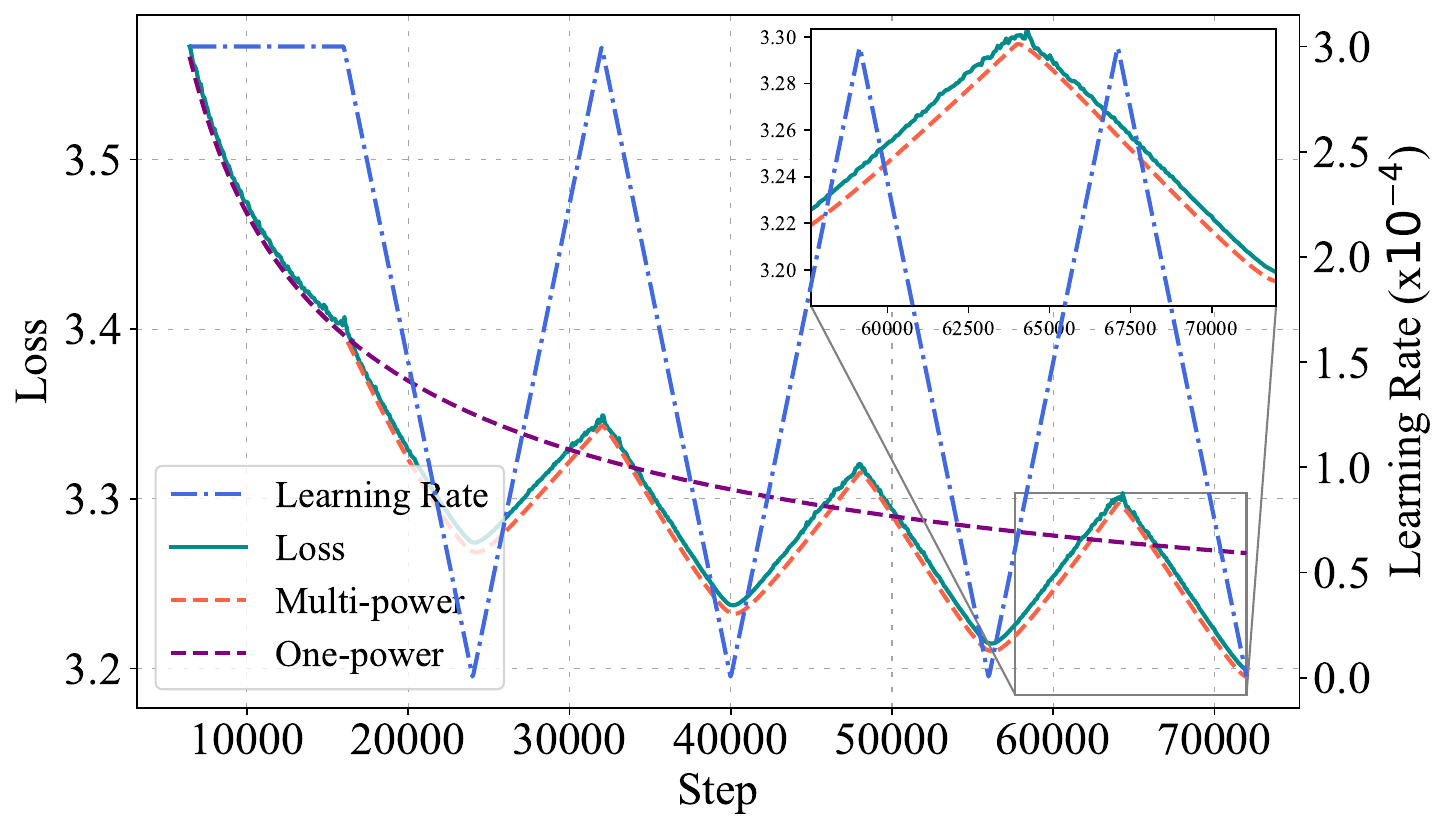}}
    \subfigure[Random-Polyline Schedule]{
    \includegraphics[width=0.48\linewidth]{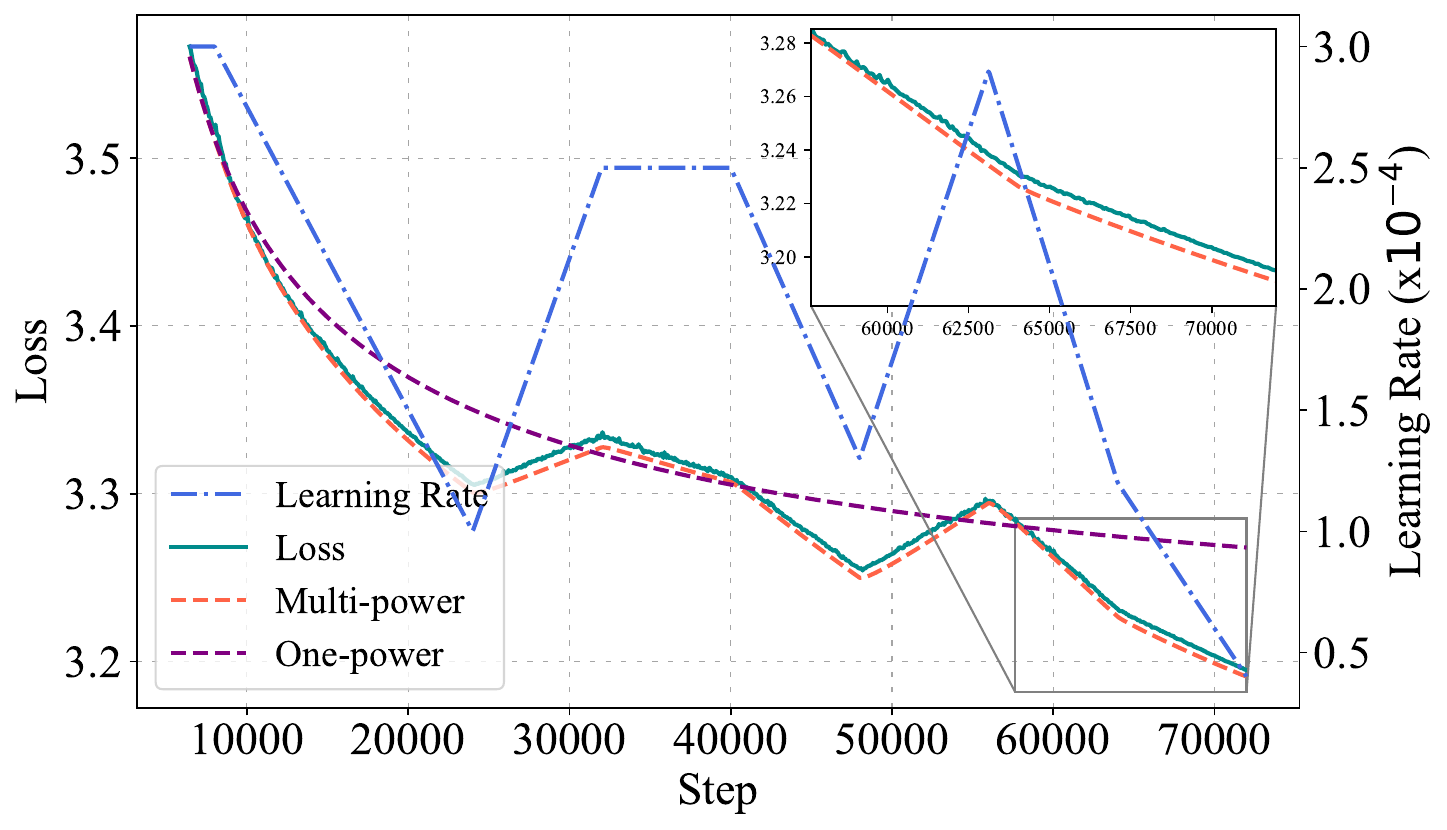}}
    \vspace{-0.1in}
    \caption{\small The examples of long-horizon non-monotonic schedules. The one-power line represents the constant process prediction. \textbf{(a)} The cyclic schedule with 72,000 steps, where each half-cycle spans 8,000 steps, and the first decay begins after 16,000 steps. \textbf{(b)} The random-polyline schedule, consisting of piecewise linear interpolation between randomly selected intermediate learning rates in the range of \( 3 \times 10^{-5} \) to \( 3 \times 10^{-4} \), with LR milestones occurring at intervals of 8,000 steps.}
    \label{fig:non-monotonic}
\vspace{-2mm}
\end{figure} 

The Multi-Power Law (MPL) comes from our speculations based on our experiments with special types of LR schedules. Now we present extensive experiments to validate the law for common LR schedules used in practice. Our experiments demonstrate that MPL requires only two or three LR schedules and their corresponding loss curves in the training set to fit the law. The fitted MPL can then predict loss curves for test schedules with different shapes and extended horizons. 

\subsection{Experiment Setting}
\input{}

Our validation contains two steps: (1) fitting schedule-curve pairs from the training set and (2) predicting the loss curves for schedules in the test set. 
The training set contains only a single 24,000-step constant and cosine schedule pair, alongside a 16,000-step two-stage schedule of $\etab=0.3\etaa$. The test set has one 72,000-step constant and cosine schedule, 24,000-step unseen WSD and WSDLD schedules, and 16,000-step two-stage schedules with $\etab=0.1\etaa$ and $\etab=0.6\etaa$. The details are provided in \Cref{tab:set-details}. 
We train Llama2~\citep{touvron2023llama} models of 25M, 100M, and 400M, and collect their loss curves, with model parameter details in \Cref{tab:model_parameters_summary}. Training employs the AdamW optimizer, with a weight decay of 0.1, gradient clipping at 1.0, $\beta_1=0.90$, and $\beta_2=0.95$, consistent with the Llama2 training setup. Default hyperparameters include a peak LR of $3\times 10^{-4}$, a warmup period of 2160 steps, and a batch size of 0.5M. Additional hyperparameters are detailed in \Cref{tab:model_training_hyperparameters}. In ablation studies, we simplify the experiment to fit short constant and cosine schedules and predict the loss for a long-horizon cosine schedule. The MPL fitting employs Huber loss~\citep{huber1992robust} as the objection function, aligning with prior work~\citep{chinchilla, muennighoff2023scaling}, and uses the Adam optimizer for optimization. Unless otherwise specified, we report validation loss.
For fitting approaches and additional details see \Cref{app:validation}.

\subsection{Results}

\paragraph{Generalization to Unseen LR Schedules.}  
MPL accurately predicts loss curves for LR schedules outside the training set. As illustrated in \Cref{fig:main-pred} and \Cref{tab:comp}, despite the absence of WSD schedules in the training set and the variety of decay functions, MPL successfully predicts their loss curves with high accuracy. Furthermore, MPL generalizes to two-stage schedules with different \(\etab\) values from the training set, effectively extrapolating curves for both continuous and discontinuous cases.

\paragraph{Generalization to Longer Horizons.}  
MPL demonstrates the ability to extrapolate loss curves for horizons exceeding three times the training set length. In our runs, the training set contains approximately 22000 post-warmup steps, while the test set includes curves with up to 70000 post-warmup steps. These results validate MPL's capability to generalize to longer horizons. Notably, the data-to-model ratio for a 25M-parameter model trained over 72000 steps (36B tokens) is comparable to Llama2 pretraining (\(70\)B model, 2T tokens), consistent with trends favoring higher data volumes for fixed model sizes~\citep{dubey2024llama}.

\paragraph{Generalization to Non-monotonic Schedules.}  
MPL extends effectively to complex non-monotonic schedules, although derived for monotonic decay schedules. We test the fitted MPL over challenging cases such as cyclic schedules and the \textit{random-polyline schedule}, where LR values are randomly selected at every 8000 steps and connected by a polyline. These experiments, conducted on a 25M-parameter model over 72000 steps, also represent a demanding long-horizon scenario. As shown in \Cref{fig:non-monotonic}, MPL accurately predicts these long-horizon non-monotonic schedules.

\subsection{Comparison with Baselines}  

\begin{table}[t]
\vspace{-0.5in}
\centering
\caption{\small Evaluation metrics for the Momentum Law and Multi-Power Law 
on predicting the loss curves of 25M, 100M, and 400M models with unseen schedules.
${R}^{2}$, MAE, RMSE, PredE, and WorstE are the coefficient of determination, Mean Absolute Error, Root Mean Square Error, Prediction Error, and Worst-case Error, respectively. 
}\label{tab:comp}
\begin{tabular}{ccccccc}
\toprule
\textbf{Model Size} & \textbf{Method} & {${R}^{2}$} $\uparrow$ & \textbf{MAE} $\downarrow$ & \textbf{RMSE} $\downarrow$ & \textbf{PredE} $\downarrow$ & \textbf{WorstE} $\downarrow$\\
\midrule
\multirow{2}{*}{\textbf{25M}} & Momentum Law & 0.9904 & 0.0047 & 0.0060 & 0.0014 & 0.0047 \\
 & Multi-Power Law (Ours) & \textbf{0.9975} & \textbf{0.0039} & \textbf{0.0046} & \textbf{0.0012} & \textbf{0.0040} \\
\hline
\multirow{2}{*}{\textbf{100M}} & Momentum Law  & 0.9959 & 0.0068 & 0.0095 & 0.0022 & 0.0094 \\
& Multi-Power Law (Ours) & \textbf{0.9982} & \textbf{0.0038} & \textbf{0.0051} & \textbf{0.0013} & \textbf{0.0058} \\
\hline
\multirow{2}{*}{\textbf{400M}} & Momentum Law  & 0.9962 & 0.0071 & 0.0094 & 0.0025 & 0.0100 \\
& Multi-Power Law (Ours)  & \textbf{0.9971} & \textbf{0.0053} & \textbf{0.0070} & \textbf{0.0019} & \textbf{0.0070} \\
\bottomrule
\end{tabular}

\end{table}

\paragraph{Comparison with Chinchilla Law.}  
\label{sec:compare_with_baseline}  

While Chinchilla-style data scaling laws, which we abbreviate as CDSLs, are widely adopted~\citep{muennighoff2023scaling, chinchilla}, MPL offers several distinct advantages: (1) MPL incorporates LR dependency, unlike CDSLs, and (2) MPL predicts the entire loss curve, whereas CDSLs are limited to estimating only the final loss. These advantages enable MPL to achieve higher sample efficiency than CDSLs. Notably, we demonstrate that a single constant and cosine schedule curve suffices to fit MPL with strong generalization. 
As illustrated in \Cref{fig:chinchilla-sample-efficiency}, MPL reduces final loss prediction to less than $1/3$ that of CDSLs while requiring about $1/5$ compute budget. Furthermore, MPL excels in fitting the open-source 7B OLMo~\citep{groeneveld2024olmo}, as shown in \Cref{fig:chinchilla-whole-curve}. Additional details of the comparison with Chinchilla Law are provided in \Cref{app:chinchilla-fitting}.

\paragraph{Comparison with Momentum Law.}
The MPL outperforms the recently proposed Momentum Law(MTL)~\citep{tissue2024scaling}
in both accuracy and applicability to discontinuous learning rate schedules. While MTL incorporates LR annealing effects by modeling loss reduction through the momentum of LR decay, it indicates an exponential loss reduction for two-stage LR schedules, inconsistent with our observations (see \Cref{fig:fit-2stage-ld}). Across the diverse schedules in the test set, MPL consistently outperforms MTL in both average and worst-case prediction accuracy, as summarized in \Cref{tab:comp}. Additionally, for WSD schedules with linear LR decay, MPL more accurately captures the loss reduction trend during the decay stage, as highlighted in \Cref{fig:mtl-comp}, compared to MTL. Further details on MTL and its relationship to MPL can be found in \Cref{app:mtl}, with fitting specifcs provided in \Cref{app:mtl-fitting}.

\begin{figure}[t!]
    \vspace{-0.2in}
    \centering
    \subfigure[Fitting Sample Efficiency Comparison\label{fig:chinchilla-sample-efficiency}]{
        \includegraphics[width=0.48\linewidth]{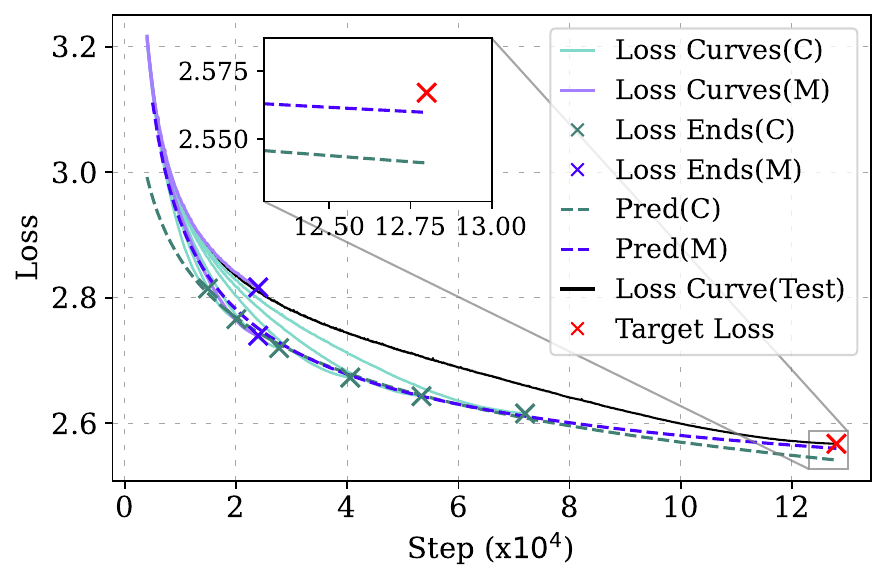}}
    \subfigure[Whole Curve Fitting Comparison\label{fig:chinchilla-whole-curve}]{
    \includegraphics[width=0.5\linewidth]{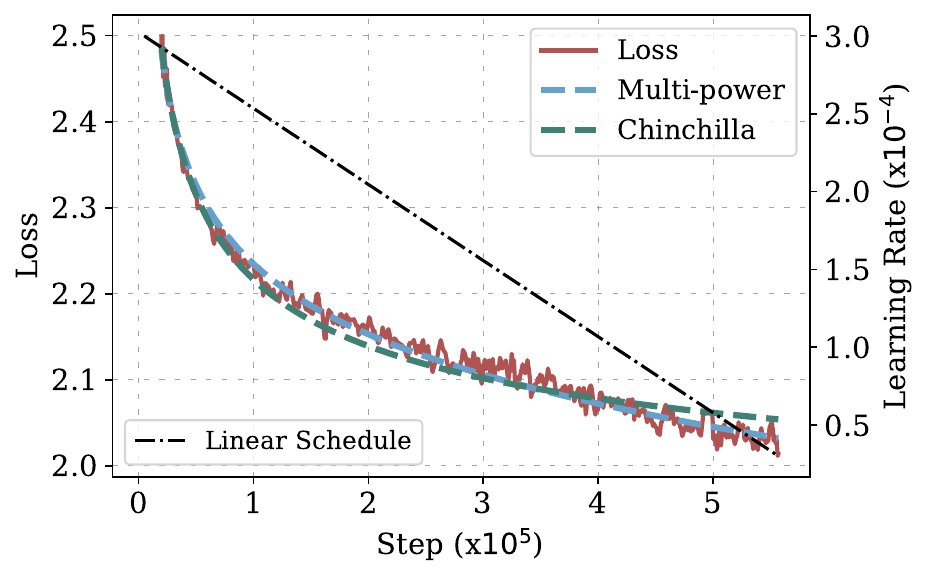}}
    \vspace{-0.1in}
    \caption{
        \small \textbf{(a)} Target loss predictions at 128000-step for a cosine schedule using MPL and CDSL fitting, with a 400M model. CDSL fitting requires six cosine losses (Loss Curve(C)) from 14,960 steps to 72000 steps but relies solely on their final losses (Loss Ends(C)). In contrast, MPL leverages the entire 24000-step constant and cosine loss curves (Loss Curves(M)). Final loss predictions are denoted as Pred(C) for CDSL and Pred(M) for MPL respectively. \textbf{(b)} Comparison of MPL and CDSL fittings on the whole loss curve of the open-source 7B OLMo model, trained with a linear schedule.}
    \label{fig:chinchilla-baseline}
\vspace{-0.1in}
\end{figure}  
\subsection{Ablation Experiments}\label{sec:discussion}

We perform ablation studies over key hyper-parameters to assess the applicability and robustness of the Multi-Power Law (MPL). These hyperparameters include the model architectures, model sizes, peak learning rates, batch sizes, and random seeds, incorporating both self-conducted and open-source experimental results.

\paragraph{Model Architectures.} We validate MPL's generalizability across GPT-2~\citep{radford2019language} and OLMo~\citep{groeneveld2024olmo} models to evaluate the generalizability of the MPL across model architectures. 
For GPT-2, we go through the simplified experiments, fitting the MPL on cosine and constant schedules of 24000 steps and predicting the 72000-step loss of the cosine schedule. 
The prediction result is illustrated in \Cref{fig:gpt} and model parameters align to GPT-2 small~\cite{radford2019language}, as detailed in \Cref{tab:model_parameters_summary}.
For the 7B OLMo model, we fit the MPL on the open-source training curve, which employs a linear decay schedule, as shown in \Cref{fig:chinchilla-whole-curve}.
Our results show that the MPL presents a high prediction accuracy across different model types for both self-run and open-source experiments.

\vspace{-1mm}
\paragraph{Model Size}
To test MPL at scale, we train a 1B model on 144B tokens with a batch size of 2M-token and peak LR of $2\times10^{-4}$ for training stability. The model architecture matches Llama-3.2-1B~\citep{dubey2024llama} and is detailed in \Cref{tab:model_parameters_summary}. Simplified experiments involve fitting MPL to 24000-step constant and cosine schedules and predicting the 72000-step loss for both, as shown in \Cref{fig:long-loss-1b-opt-long}. Results demonstrate MPL’s consistent performance across model sizes, as well as the robustness under a different peak LR and batch size.

\vspace{-1mm}
\paragraph{Peak Learning Rate}\label{sec:peak-lr-ablation}
We further investigate MPL’s robustness across varying peak learning rates. While prior experiments fixed the peak learning rate at $3 \times 10^{-4}$, empirical observations of two-stage schedules reveal deviations at higher rates, as illustrated in \Cref{fig:coef-law}. Therefore, we run full experiments with peak learning rates of $4 \times 10^{-4}$ and $6 \times 10^{-4}$ on 25M models, yielding average $R^2$ values of 0.9965 and 0.9940 respectively, underscoring MPL’s consistently decent accuracy while accuracy degrading as peak LR goes up. The training and test set conform to \Cref{tab:set-details} and the fitting results are shown in \Cref{fig:peak-lr-ablation}.

\vspace{-1mm}
\paragraph{Batch Size}
We extend experiments to batch sizes of 64 and 256 sequences on 25M models, complementing the prior 128-sequence (0.5M) results, with sequence length of 4096. MPL maintains high accuracy, with $R^2$ values exceeding 0.9970 across all cases, as illustrated in \Cref{fig:batch-size-ablation}. These experiments indicate that, while the
coefficients of MPL are batch-size dependent, the functional form of MPL remains robust across
varying batch size configurations

\vspace{-1mm}
\paragraph{Random Seeds}
To assess the influence of random seed variability, we train a 25M model over 24000 steps using cosine schedules with three distinct seeds. As shown in \Cref{fig:random-seed}, the loss values exhibit a standard deviation of approximately 0.001, establishing a lower bound for prediction error and highlighting MPL’s precision.

\section{The Multi-Power Law Induces Better LR Schedules}\label{sec:opt-lr-schedule}
\label{section optimal lr schedule}

Due to the high cost of each pretraining run and the curse of dimensionality for LR schedules, it is generally impractical to tune the LR for every training step. To address this, we propose leveraging the Multi-Power Law (MPL) to predict the final loss as a surrogate function to optimize the entire LR schedule, achieving a lower final loss and outperforming the cosine schedule and WSD variants.

\begin{table}[t]
\vspace{-0.5in}
    \centering
    \caption{Downstream performance comparison for the cosine and our optimized schedules. Percentage changes (\(\uparrow\) or \(\downarrow\)) indicate relative improvements or regressions compared to the cosine schedule.}
    \begin{tabular}{ccccccc}
    \toprule
     Schedule & \textbf{LAMBADA} & \textbf{HellaSwag} & \textbf{PIQA} & \textbf{ARC-E}
     & \textbf{\textbf{C}$^3$} & \textbf{RTE} \\
    \midrule
    Cosine & 46.54 & 37.12 & \textbf{65.13} & 43.56 & 48.44 & 52.71 \\
    \multirow{2}{*}{
    \textbf{Optimized}} & \textbf{48.71} & \textbf{37.74} & 65.07 & \textbf{44.09} & \textbf{50.30} & \textbf{53.79}\\
    & ($\uparrow$ 2.17\%) &  ($\uparrow$ 0.62\%) & ($\downarrow$ 0.06\%) &  ($\uparrow$ 0.53\%) &  ($\uparrow$ 1.86\%) &  ($\uparrow$ 1.08\%) \\
    \bottomrule
    \end{tabular}
    \label{tab:downstream_performance}
\vspace{-0.1in}
\end{table}

\subsection{Method}

The Multi-Power Law (MPL) provides an accurate loss estimation, enabling its final loss prediction to serve as a surrogate for evaluating schedules. We represent the learning rate (LR) schedule as a $T$-dimensional vector $\LRS = (\eta_1, \dots, \eta_T)$, with the final loss denoted as $\cL(\LRS)$ under given hyperparameters. Our goal is to find the optimal LR schedule $\LRS^* = \argmin_{\LRS} \cL(\LRS)$. Using MPL, we parameterize the predicted final loss as $\cL_{\Theta}(\LRS)$ with parameters $\Theta = \{L_0, A, B, C, \alpha, \beta, \gamma\}$, estimated as outlined in \Cref{sec:validation}. We approximate $\LRS^*$ by optimizing the surrogate loss $\mathcal{L}_{\Theta}(\LRS)$ subject to monotonicity constraints:
\begin{align}
\min_{\LRS}~~\cL_{\Theta}(\LRS) 
\quad \text{s.t.} \quad 0 \leq \eta_{t} \leq \eta_{t-1}, \, \forall \, 1 \leq t \le T.
\label{equ optimization problem multi-power}
\end{align}
This optimization induces an ``optimal" schedule under the MPL approximation.  
In practice, we set the peak LR $\eta_0=3 \times 10^{-4}$
We view $\LRS$ as a high-dimensional vector and optimize it using the Adam optimizer. Further details are provided in Appendix~\ref{app:optimizingLR-schedule}. 
Results for a 400M model are shown in \Cref{fig:main-opt}, with additional experiments for 25M and 100M models in \Cref{fig:opt-model-step}.

\subsection{Results}

\paragraph{Optimized LR Schedule Exhibits Stable-Decay Pattern.} Our optimized LR schedule follows a Warmup-Stable-Decay (WSD) pattern, comprising two main post-warmup phases: a stable phase with a constant peak LR, and a decay phase ending with a lower LR, as illustrated in \Cref{fig:main-opt,fig:opt-model-step}. By contrast, the momentum law~\citep{tissue2024scaling} theoretically yields a collapsed learning rate schedule, which we will prove in~\Cref{sec: momentum-opt-LR schedule}.

\paragraph{Optimized LR Schedule Outperforms Cosine Schedules.} Across comparison experiments of different model sizes and training steps, our optimized schedules consistently outperform the cosine schedules, achieving a margin exceeding 0.02. Notably, no WSD-like schedule is present in the training set, highlighting MPL's extrapolation capability. \Cref{fig:long-opt} extends this comparison to longer training horizons and \Cref{fig:long-loss-1b-opt-opt} validates the superiority for 1B model. we further validate the effectiveness of our optimized schedules by evaluating the downstream task performance. As shown in \Cref{tab:downstream_performance}, our optimized schedule leads to overall improvements in downstream tasks against the cosine schedules, showing practical gains from loss improvements. Ablation details for longer horizons and larger models are in \Cref{app:ablation-opt-lrs}.

\paragraph{Optimized LR Schedule Outperforms Tuned WSD Variants.\label{sec:lessons}} Our optimized schedules lead to smaller final loss than the WSD and WSDLD schedules proposed in~\citet{hu2024minicpm}.
For a 400M model, we find that the decay step of a 24000-step optimized schedule (\Cref{fig:main-opt}) is close to the optimally tuned step ($\sim$6000) for these WSD schedules, determined via grid search over \{3000, 4000, 5000, 6000, 7000\}. However, even when the decay ratios of WSD and WSDLD schedules are optimally tuned, our optimized schedule still outperforms them. Further, we analyze key differences between our optimized schedule and these two WSD schedules as follows. The optimized schedule decays to below $1/20$ of the peak LR, even approaching to zero, while WSD schedules decay linearly or exponentially to $1/10$ of the peak LR. However, simply adjusting the ending LR to near-zero (\Cref{app:zero-end-lr}) does not close the gap. Another key difference is the decay function: we find through symbolic regression that in the decay phase, the optimized schedule roughly follows a power decay function rather than a linear or exponential decay: $\eta_t \approx \eta_{\max} \cdot (1 - \tau)^{1.5}$, where $\tau$ is the step number in the decay phase, normalized to $[0, 1]$. Motivated by this, we propose a WSD variant with sqrt-cube decay (WSDSC), which decays the LR exactly as $\eta_t = \eta_{\max} \cdot (1 - \tau)^{1.5}$. WSDSC is effective across various model sizes and architectures and outperforms the WSD schedule, as shown in \Cref{fig:long-loss-1b-opt-opt,fig:gpt2-opt}, though it still falls behind our optimized schedule. See \Cref{app:wsdsc} for details.

\begin{figure}[t!]
    \vspace{-0.2in}
    \centering
    \subfigure[Long-Horizon Prediction of MPL\label{fig:long-loss-1b-opt-long}]{\includegraphics[width=0.44\linewidth]
    {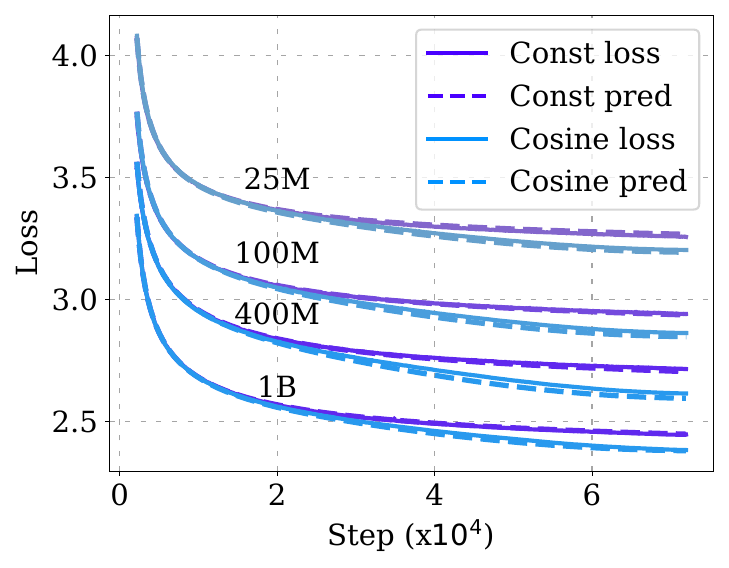}}
    \subfigure[Loss Curves Comparison for 1B Models\label{fig:long-loss-1b-opt-opt}]{\includegraphics[width=0.55\linewidth]
    {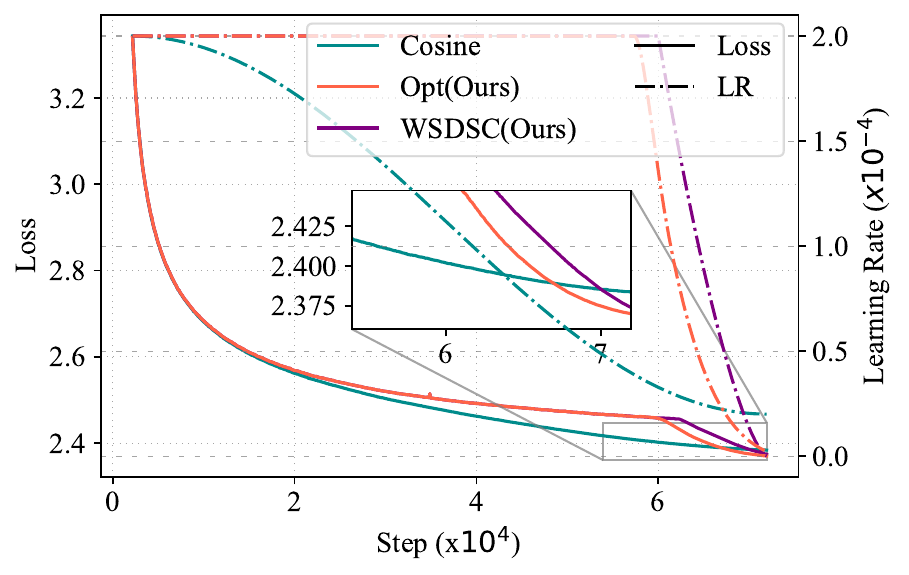}}
    \vspace{-0.1in}
    \caption{\small \textbf{(a)} Long-horizon loss predictions using MPL for cosine and constant schedules, with model sizes ranging from 25M to 1B (top to bottom). \textbf{(b)} Loss curve comparison for 1B models across the optimized schedule (Opt), cosine schedule (Cosine), and simplified optimized schedule (WSDSC, see \Cref{sec:lessons}), featuring a WSD schedule with sqrt-cube decay.}
    \label{fig:long-loss-1b-opt}
\vspace{-0.15in}
\end{figure}

\section{Related Work}

\paragraph{Scaling Laws and Loss Curve Prediction.}
Scaling laws reveal empirical power-law relationships between training losses and various factors such as model size, dataset size, and computational resources. \citet{hestness2017deep} initially observed that generalization errors in deep learning decrease predictably with larger model and dataset scales, following a power-law trend. Subsequently, \citet{kaplan} investigated these scaling laws in the context of training Transformers~\citep{vaswani2017attention}, demonstrating a persistent power-law decay in loss and contributing to the rise of LLMs~\citep{Brown2020LanguageMA,Bi2024DeepSeekLS,touvron2023llama,dubey2024llama}. Later studies refined these scaling laws by improving fitting and evaluation pipelines, refining parameter size metrics, ensuring consistent hyperparameter configurations across scales, and extending their applicability to broader training scenarios~\citep{chinchilla,Henighan2020ScalingLF,Bi2024DeepSeekLS,Caballero2022BrokenNS,Alabdulmohsin2022RevisitingNS}.

Scaling laws have been used to predict the model performance in various settings, hyperparameter optimization~\citep{kadra2023scaling}, multi-epoch training~\citep{muennighoff2023scaling},
training sparsely-connected models~\citep{Frantar2023ScalingLF,ludziejewski2024scaling}, length extrapolation~\citep{Liu2023ScalingLO}, transfer learning~\citep{Hernandez2021ScalingLF}, and data mixing~\citep{Hashimoto2021ModelPS,ge2024bimix,Ye2024DataML,jain2024scaling}. However, most existing work neglects the impact of the learning rate, making their predictions unreliable for assessing model performance throughout training~\citep{chinchilla}.
As a result, deriving scaling laws for final losses under specific schedules (e.g., the cosine schedule~\citep{chinchilla,muennighoff2023scaling}) requires more than $10$ full training runs.
Another line of works extrapolate loss curves with Bayesian methods~\citep{Klein2016LearningCP,Domhan2015SpeedingUA,Choi2018OnTD,Adriaensen2023EfficientBL} for hyperparameter optimization, but they also need a large number of training runs.
In comparison, our work introduces a multi-power law that extends the existing power-law form of specific LR schedules by incorporating terms to capture LR schedule effects.
Unlike prior work, our approach is LR-dependent and can predict full loss curves across various schedules using fewer than three training curves.

Concurrent to our work,~\citet{tissue2024scaling} proposed a momentum law that also incorporates LR into scaling laws. However, our work outperforms theirs by identifying a power-law decay rather than an exponential decay in loss reduction relative to LR reduction (detailed in~\Cref{sec:compare_with_baseline}). This results in a more accurate formula capable of inducing optimized schedules, whereas the momentum law provably yields suboptimal collapsed schedules (detailed in~\Cref{sec: momentum-opt-LR schedule}). In another concurrent work, \citet{Xie2024OptimizationHL} derived a LR-dependent scaling law based on theoretical insights drawn from SDE-based continuous-time approximation of training dynamics. 
However, their scaling law relies on manually defined training phases and is limited to predicting the final loss value.
In contrast, our proposed scaling law is arguably more general, as it can predict the entire loss curves for various LR schedules, even including discontinuous and non-monotonic ones. This also allows us to optimize the LR schedule by minimizing the predicted loss over all possible LR decay schedules, which may not be easily achievable using their method.

\paragraph{Theoretical Insights on Scaling Laws.}
Many works have explored theoretical explanations for the observed power-law behavior. 
\citet{sharma2020neural} attribute the power-law behavior to the intrinsic dimension of data manifold.
\citet{hutter2021learning,Michaud2023TheQM} draw insights from a toy case where an infinite amount of distinct knowledge pieces need to be memroized, and the power law of the loss curve can arise when the frequency of knowledge exhibits a power-law distribution.
Many other works analyze the scaling law of linear models~\citep{spigler2020asymptotic,bordelon2020spectrum,maloney2022solvable,bahri2024explaining,wei2022more,bordelon2024dynamical,lin2024scaling,atanasov2024scaling,paquette2024four}, assuming certain power-law properties in the input data or ground truth functions.
A few others examine how powr law behaviors arise in simple neural networks~\citep{nam2024exactly,bordelon2025how,lyu2025a}.
Similar to these works, we also provide a theoretical explanation for our multi-power law assuming certain power-law properties in the optimization landscape,
but our analysis is accurate enough to capture the effects of learning rate schedules on the loss curve.

\paragraph{Optimal Learning Rate Schedule.} Setting a proper schedule for the learning rate is crucial for training deep neural networks.
\citet{He2015DeepRL} introduced the warmup strategy, which is now standard in modern schedules. Early work by~\citet{smith2017cyclical} proposed cyclical LR schedules, featuring periodic linear decay with warmup restarts, later extended to cosine decay with warmup restarts by~\citet{loshchilov2016sgdr}.
Some works explored adaptive approaches, such as Bayesian or reinforcement learning-based methods~\citep{xu2019learning,teng2021autodrop}, but they are computationally expensive for LLMs.
\citet{li2019exponential} demonstrated that training with one schedule accompanied by weight decay is equivalent to training with the same network with an exponentially increasing LR schedule without weight decay. \citet{Goyal2017AccurateLM,Hoffer2017TrainLG,malladi2022on} studied how to scale the LR when increasing the batch size.
\citet{li2019towards,You2019HowDL} showed that LR decay can benefit generalization by suppressing the memorization of noisy data early in training and learning complex patterns late in training.

In the context of large model training,~\citet{hu2024minicpm} introduced the Warmup-Stable-Decay (WSD) schedule, which starts with a warmup phase, continues a main stable phase, and ends with a rapid decay phase. This schedule has shown strong performance in LLM pretraining and efficient continual training. Similar patterns have been adopted in other works~\citep{zhai2022scaling,hagele2024scaling}.~\citet{ibrahim2024simple,zhai2022scaling,Raffel2019ExploringTL} adopt a reciprocal (inverse)-sqrt LR schedule in full process or as a component. \citet{wen2024understanding} analyze
the benefits of WSD schedules by conjecturing that the loss landscape exhibits a river valley structure, and propose alternatives for WSD in continual training. Inspired by these works, recent open-source models advocate schedules with a slow-decay or stable phase followed by a rapid decay~\citep{deepseekv3,olmo20242}.

Our resulting scaling law can induce optimized schedules share a similar pattern with the WSD schedule, even though we do not fit the law on WSD schedules.
Concurrent to our work, other schedules have claimed optimality under specific conditions. \citet{defazio2023optimal} proposed linear decay schedules as optimal based on worst-case analysis. \citet{shen2024power} introduced a power schedule for continual training, which outperforms WSD schedules. \citet{schaipp2025surprising} drew parallels between convex optimization and LR scheduling for LLMs, using simulation results to guide continual training strategies and peak LR selection.~\citet{bergsma2025straight} argued for a linear-to-zero LR schedule as optimal, ablating on peak LR, data size, and model size. \citet{Defazio2024TheRL} proposed a schedule-free approach using weight averaging techniques, but it underperforms WSD schedules~\citep{hagele2024scaling}. Existing methods often optimize schedules under specific constraints, such as ending LR~\citep{bergsma2025straight}, decay ratio~\citep{schaipp2025surprising}, continual training~\citep{shen2024power}, or worst-case convergence bounds~\citep{defazio2023optimal}. In contrast, our approach integrates LR schedules into scaling laws, which enables gradient-based optimization over all possible schedules.

\vspace{-0.1in}
\section{Conclusions}
\vspace{-0.1in}

This paper proposes the Multi-Power Law (MPL) to capture the relationship between loss and LR schedule. The fitted MPL accurately predicts the entire loss curve while requiring much fewer training runs compared to existing scaling laws.
Furthermore, our MPL is accurate enough to be used for optimizing schedules, and we extensively validate the superiority of our optimized schedules over commonly used ones. However, we do observe slight deviations between our predictions and actual training curves, especially for long-horizon and high peak LR cases like in~\Cref{fig:peak-lr-ablation,fig:main-fit-ensemble}. likely due to several simplifications in our derivation: (1) the coefficient $\beta$ remains constant across different LR scales; (2) the intermediate loss reduction does not depend on the LR prefix; (3) variations in LR during the warm-up phase are ignored.

In future work, we aim to (1) further explore the theoretical foundation of our MPL to uncover its underlying mechanisms; (2) investigate empirical laws for schedule-aware loss curve prediction with varying peak LRs and other hyperparameters; and (3) refine our MPL to further enhance its prediction accuracy and generalizability.

\section*{Acknowledgment}
We would like to thank Kaiyue Wen, Huanqi Cao, and all anonymous reviewers, for their insightful comments and feedback. We also thank Hongzhi Zang for improving figure readability. This work is supported by the National Natural Science Foundation of China under Grant Number U20B2044.

\bibliography{iclr2025_conference}
\bibliographystyle{iclr2025_conference}

\newpage

\appendix

\section{Formula Component Ablation}\label{app:simp}

To understand and evaluate the role of each component in our Multi-Power Law (MPL; See~\eqref{equ:multpowerlaw}), we systematically simplify the MPL formula at various levels and explore alternative formulations. \Cref{tab:simplified-laws} summarizes the fitting performance of these simplified versions and variants of the MPL. The fitting experiments are conducted on 25M models, using the same experimental setup described in \Cref{app:setting}.

\paragraph{No Loss Reduction.}\label{app:opl}
The necessity of the loss reduction term $LD(t)$ can be assessed by fitting a One-Power Law (OPL), a simplified MPL where $LD(t)=0$ or equivalently $B=0$:
\begin{equation}
    \cL_{\text{OPL}}(t) = L_0 + A \cdot \left(S_1(t)+S_W\right)^{-\alpha}, \quad S_1(t) := \sum_{\tau=1}^{t} \eta_\tau.
    \label{eq:onepowerlaw}
\end{equation}
This formulation approximates the loss curve by matching the LR sum without correction term, as discussed in \Cref{sec:approach}. The fitted results (first row of \Cref{tab:simplified-laws}) exhibit significant degradation compared to the full MPL, demonstrating the critical role of $LD(t)$. 

\paragraph{Linear Approximation of Loss Reduction.\label{sec:lldl}}
Based on the observation in \Cref{sec:ld-observation}, the loss reduction term $LD(t)$ (defined in \Cref{eq:loss-reduction-def}) can be simplified by treating the scaling function $G(x)$ as a constant:
\begin{equation}
\LD(t) \approx \sum_{k=1}^{t} B (\eta_{k-1}- \eta_k) = B(\eta_0-\eta_t).
\end{equation}
Despite its simplicity, we observe a near-linear relationship between $LD(t)$ and the LR reduction ($\eta_0-\eta_t$), regardless of the LR schedule type, as shown in \Cref{fig:linear-approx}. This motivates the Linear Loss reDuction Law (LLDL):
\begin{equation}
\cL_{\text{LLDL}}(t) = L_0 + A \cdot \left(S_1(t)+S_W\right)^{-\alpha} + B(\eta_0-\eta_t).
\label{eq:linear-loss-reduction-law}
\end{equation}
As shown in \Cref{tab:simplified-laws}, LLDL achieves significantly better accuracy than OPL, although it underperforms the full MPL. However, this formulation is unsuitable for optimizing schedules, as its results collapse to a trivial solution: $\eta_k=\eta_0$ when $k\le T-1$ and $\eta_k=0$ when $k=T$.

\paragraph{Loss Reduction Without $\gamma$.} 

Next, we simplify $G(x)$ by setting $\gamma=0$, yielding the No-$\gamma$ Law:
\begin{equation}
\cL_{\text{No}-\gamma} = L_0 + A \cdot \left(S_1(t)+S_W\right)^{-\alpha} + B \sum_{k=1}^{t} (\eta_{k-1} - \eta_k) \cdot G(S_k(t)).
\label{eq:no-gamma-law}
\end{equation}
Results (third row of \Cref{tab:simplified-laws}) show a slight performance drop, confirming that $\gamma$ enhances fitting accuracy with minimal additional computational cost. Thus, we retain $\gamma$ in the final MPL.

\paragraph{Step-Based Approximation.} An alternative is to replace $G(\eta_k^{-\gamma}S_k(t))$ with a step-based formulation, $G(t-k+1)$. This yields the Step Power Law (SPL):
\begin{equation}
\cL_{\text{SPL}} = L_0 + A \cdot \left(S_1(t)+S_W\right)^{-\alpha} + B \sum_{k=1}^{t} (\eta_{k-1} - \eta_k) \cdot G(t-k+1).
\label{eq:step-power-law}
\end{equation}
While simpler, this approximation reduces prediction accuracy and contradicts empirical results, because it implies loss reduction continues to increase even when LR reaches zero.

\paragraph{Exponential Approximation.} Substituting $G(x)$ with an exponential function $G(x)=1-e^{-Cx}$ gives the Multi-Exponential Law (MEL):
\begin{equation}
\cL_{\text{MEL}} = L_0 + A \cdot \left(S_1(t)+S_W\right)^{-\alpha} + B \sum_{k=1}^{t} (\eta_{k-1} - \eta_k) \cdot G(S_k(t)).
\end{equation}
Results (fifth row of \Cref{tab:simplified-laws}) show a performance drop compared to the power-based MPL, consistent with observations in \Cref{sec:two-stage} that $\Tilde{U}(t,\eta_k)$ takes a power form rather than an exponential form.

\paragraph{Relation to Momentum Law.} \label{app:mtl}
The concurrently proposed MomenTum Law (MTL) is in the form of 
\[
\cL_{\text{MTL}}(t)=L_0+A \cdot \left(S_1+S_W\right)^{-\alpha}+B\cdot S_2, \text{ where }S_1=\sum_{i=1}^{t}\eta_i,\ S_2=\sum_{i=1}^{t}\sum_{k=1}^{i}
(\eta_{k-1}-\eta_k)\lambda^{i-k}, 
\]
where $\lambda$ is a hyper-parameter of MTL and $\lambda<1$. It is indeed a variant of MPL since
\[
S_2
=\sum_{i=1}^{t}\sum_{k=1}^{i}
(\eta_{k-1}-\eta_k)\lambda^{i-k}
=\sum_{k=1}^{t}(\eta_{k-1}-\eta_k)\sum_{i=k}^{t}\lambda^{i-k}
=\sum_{k=1}^{t}(\eta_{k-1}-\eta_k)
\left( \frac{1-\lambda^{t-k+1}}{1-\lambda} \right).
\]
Thus, MTL is a variant of MPL with an exponential step-based approximation:
\[
\cL_{\text{MTL}}(t) = L_0 + A \cdot \left(S_1(t)+S_W\right)^{-\alpha} + B' \cdot G(t-k+1), \quad G(x) = 1-e^{-C'x}.
\]
Here, $B'=\frac{B}{1-\lambda},\ C'=-\log \lambda$. MTL incorporates step-based decay and its performance (last second row of \Cref{tab:simplified-laws}) even lags behind MEL, highlighting the limitations of step-based approximations.

\begin{figure}[t]
    \centering
    \subfigure
    {
    \includegraphics[width=0.5\linewidth]{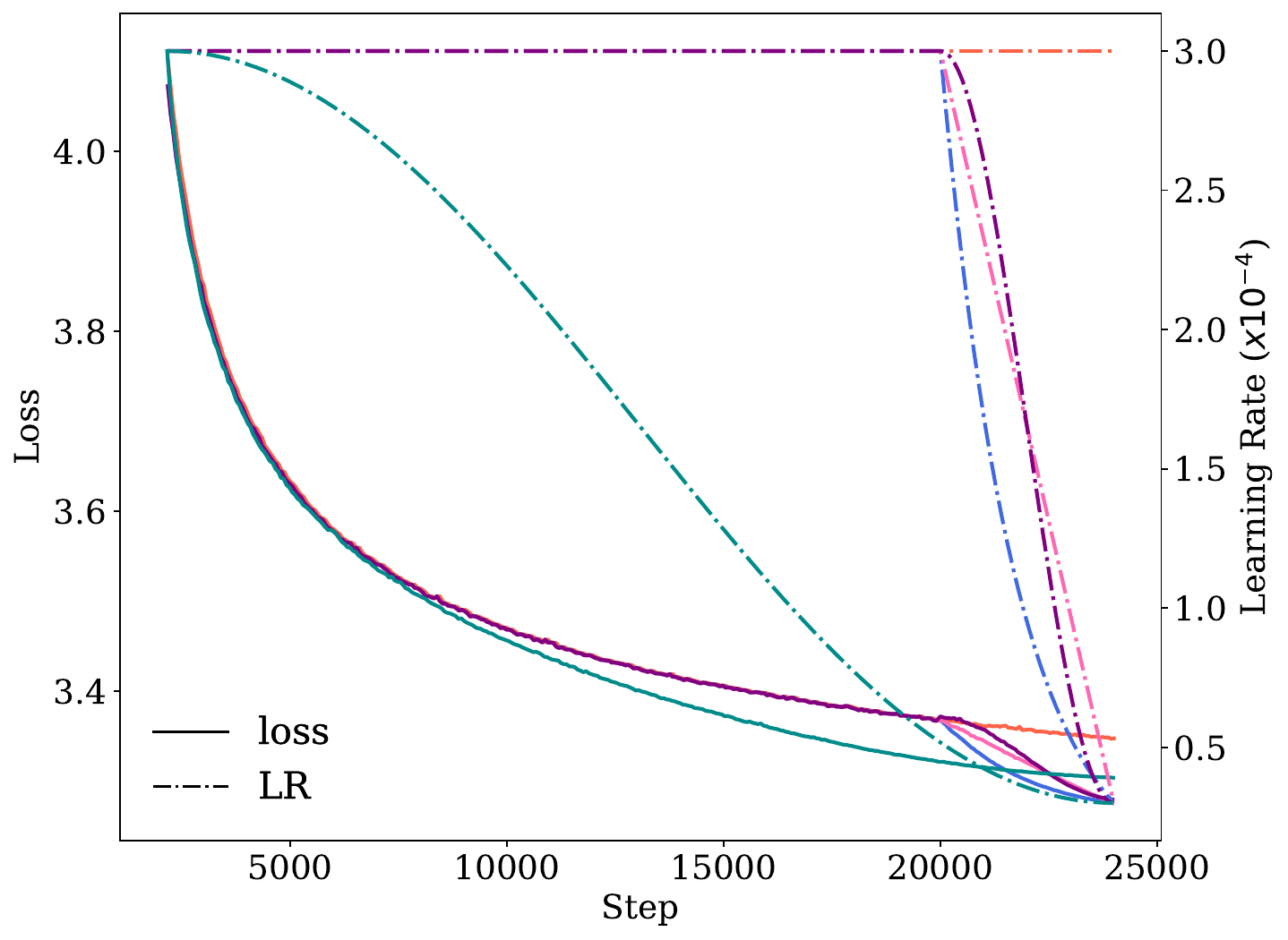}}
    \subfigure
    {
    \includegraphics[width=0.46\linewidth]{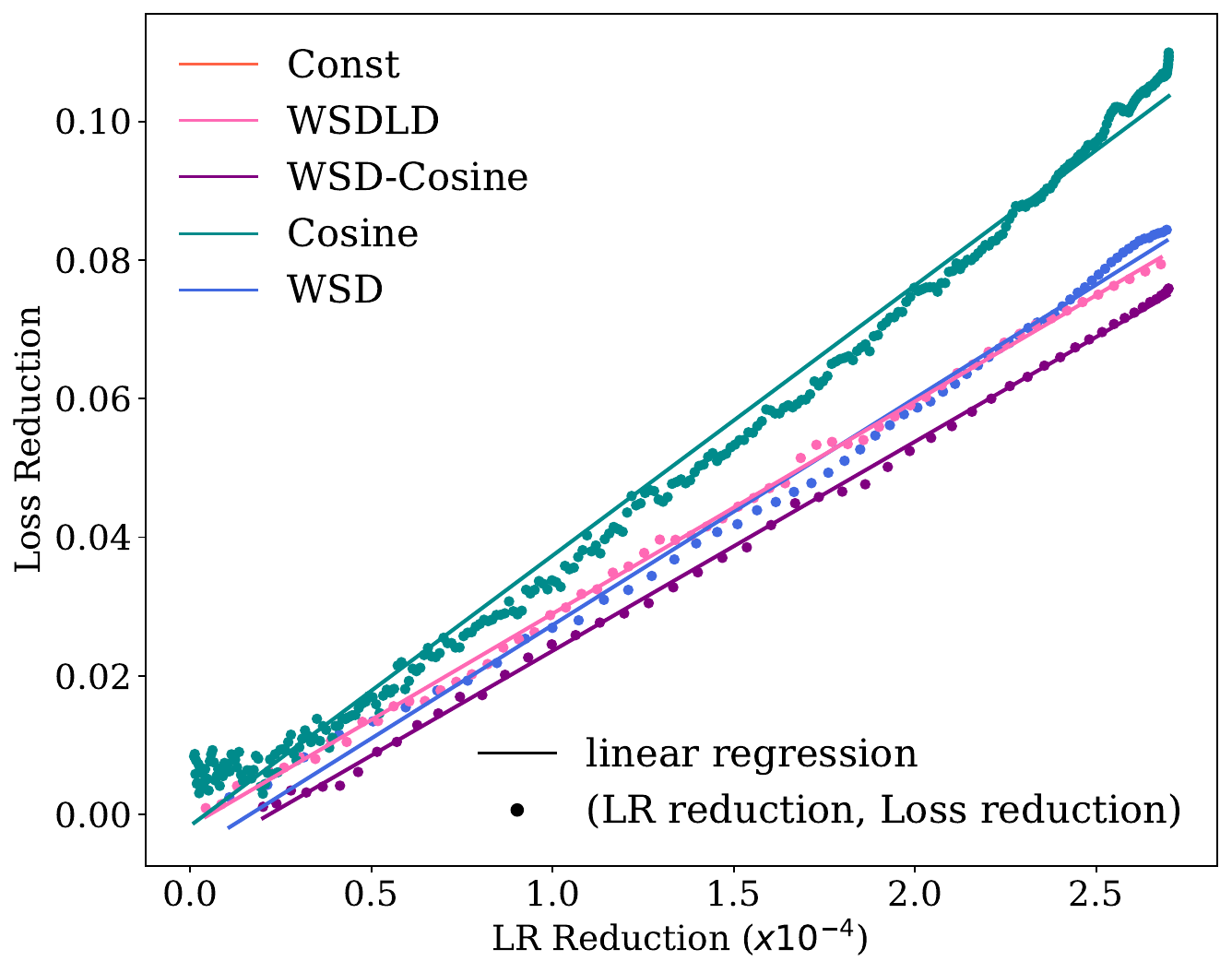}
    }
    \vspace{-0.1in}
    \caption{\small Linear regression of loss reduction versus LR reduction across different schedules for a 25M model over 24000 steps. Decay steps are set at 4000 for WSD and its variants, among which WSD-Cosine specifically denotes the WSD schedule with cosine decay function. \textbf{Left:} Visualization of learning rate schedules and their corresponding loss curves. \textbf{Right:} Scatter plot of loss reductions against LR reductions, accompanied by a linear regression fit (mean $R^2=0.9980$), demonstrating a strong linear relationship between the two variables.}
    \label{fig:linear-approx}
\end{figure}

\begin{table}[t]
\centering
\caption{Summary of fitting results for simplified versions and variants of the MPL. Metrics include $R^2$, MAE, RMSE, PredE, and WorstE, where higher $R^2$ values and lower values of other metrics
indicate better fitting performance. See \Cref{tab:comp} for metric definitions.}
\vspace{2mm}
\label{tab:simplified-laws}
\begin{tabular}{clccccc}
\toprule
\textbf{Formula} & \textbf{Features} & \textbf{$R^2\uparrow$} & \textbf{MAE$\downarrow$} & \textbf{RMSE$\downarrow$} & \textbf{PredE$\downarrow$} & \textbf{WorstE$\downarrow$} \\
\midrule
OPL & $LD(t)=0~(B=0)$   & 0.8309     & 0.0378      & 0.0412      & 0.0111       & 0.0241        \\
LLDL & $G(x)=1$          & 0.9797     & 0.0077      & 0.0101      & 0.0023       & 0.0108        \\
No-$\gamma$ & $\gamma=0$                  & 0.9961     & 0.0046      & 0.0053      & 0.0014       & 0.0041        \\
SPL & $x=t-k$                     & 0.9921     & 0.0066      & 0.0075      & 0.0020       & 0.0069        \\
MEL & $G(x)=1-e^{-Cx}$, $\gamma=0$          & 0.9934     & 0.0044      & 0.0057      & 0.0013       & 0.0047        \\
MTL&  $G(x)=1-e^{-Cx}$, $x=t-k$ & 0.9904     & 0.0047      & 0.0060      & 0.0014       & 0.0047        \\
\textbf{MPL}
& $G(x)=1-(Cx+1)^{-\beta}$, 
& \multirow{2}{*}{\textbf{0.9975}}
& \multirow{2}{*}{\textbf{0.0039}}
& \multirow{2}{*}{\textbf{0.0046}}
& \multirow{2}{*}{\textbf{0.0012}}
& \multirow{2}{*}{\textbf{0.0040}}\\
\textbf{(Ours)} & $x=\eta_k^{-\gamma}\sum_{\tau = k}^{t} \eta_{\tau}$
& & & & &\\
\bottomrule
\end{tabular}
\end{table}

\section{Experiment Setting}\label{app:setting}

Unless otherwise specified, the model training in the \Cref{sec:approach_section},~\ref{sec:validation} and~\ref{sec:opt-lr-schedule} follows the following settings.

\begin{table*}[ht]
\centering
\begin{tabular}{cccccc}
\toprule
\text{Codename} & \text{Embedding Dimension} & \text{\#Heads} & \text{\#Layers} & \text{\#Non-embeddings} & \text{\#Params}\\
\midrule
25M & 640  & 5  & 5  & 25 & 89\\
100M & 1024 & 8  & 8  & 101 & 205\\
400M & 1536 & 12 & 12 & 340 & 493\\
1B & 2048 & 32 & 16 & 822 & 1026\\
GPT-2 & 768 & 12 & 12 & 85 & 162\\
\bottomrule
\end{tabular}
\caption{The model series run in all the experiments. \citet{chinchilla} utilizes the number of non-embedding parameters (\#Non-embeddings) to count model sizes, while \citet{kaplan} counts the total number of parameters (\#Params). The unit of the Parameter is M in this table.}
\label{tab:model_parameters_summary}
\end{table*}

\begin{table}[h]
\centering
\begin{tabular}{ll}
\toprule
\textbf{Default Hyperparameter} & \textbf{Value} \\ 
\midrule
Sequence Batch Size & 128 \\
Sequence Length & 4096 \\
Optimizer Type & AdamW \\
$\beta_1$ & 0.9 \\
$\beta_2$ & 0.95 \\
$\epsilon$ & $1 \times 10^{-8}$ \\
Weight Decay & 0.1 \\
Gradient Clipping & 1.0 \\
Peak Learning Rate & $3 \times 10^{-4}$ \\
Final Learning Rate & $3 \times 10^{-5}$ \\
Warmup Steps & 2160 \\
\bottomrule
\end{tabular}
\caption{Hyperparameters related to model training.}
\label{tab:model_training_hyperparameters}
\end{table}

\section{Discussions of Multi-Power Law Derivation (Section~\ref{sec:approach_section})}

\subsection{Constant Process Loss Approximation (Section~\ref{sec:lr-sum-matching})} \label{app:lr-sum-matching}

The constant process employs a constant LR schedule with the same warmup phase and peak LR as the actual process schedule. We validate~\eqref{eq:auxiliary-loss2}, the LR sum power law of the loss curves for constant schedules, through two series of experiments. First, we conduct ablation over the peak LR, ranging from $3.0\times 10^{-4}$ to $3.6\times 10^{-3}$ over 14,400 steps, achieving an MSE of $1.55\times 10^{-5}$ and $R^2$ of 0.9976 (\Cref{fig:const-peak-lr}). Second, we validate the power form over long-horizon curves (72000 steps) for model sizes of 25M, 100M, and 400M, with a peak LR of $3.0\times 10^{-4}$, yielding an average MSE of $8.04\times 10^{-5}$ and R$^2$ of 0.9947 (\Cref{fig:const-long-loss}).

\begin{figure}
    \centering
    \subfigure[Peak LR Ablation for Constant Schedules\label{fig:const-peak-lr}]{\includegraphics[width=0.48\linewidth]{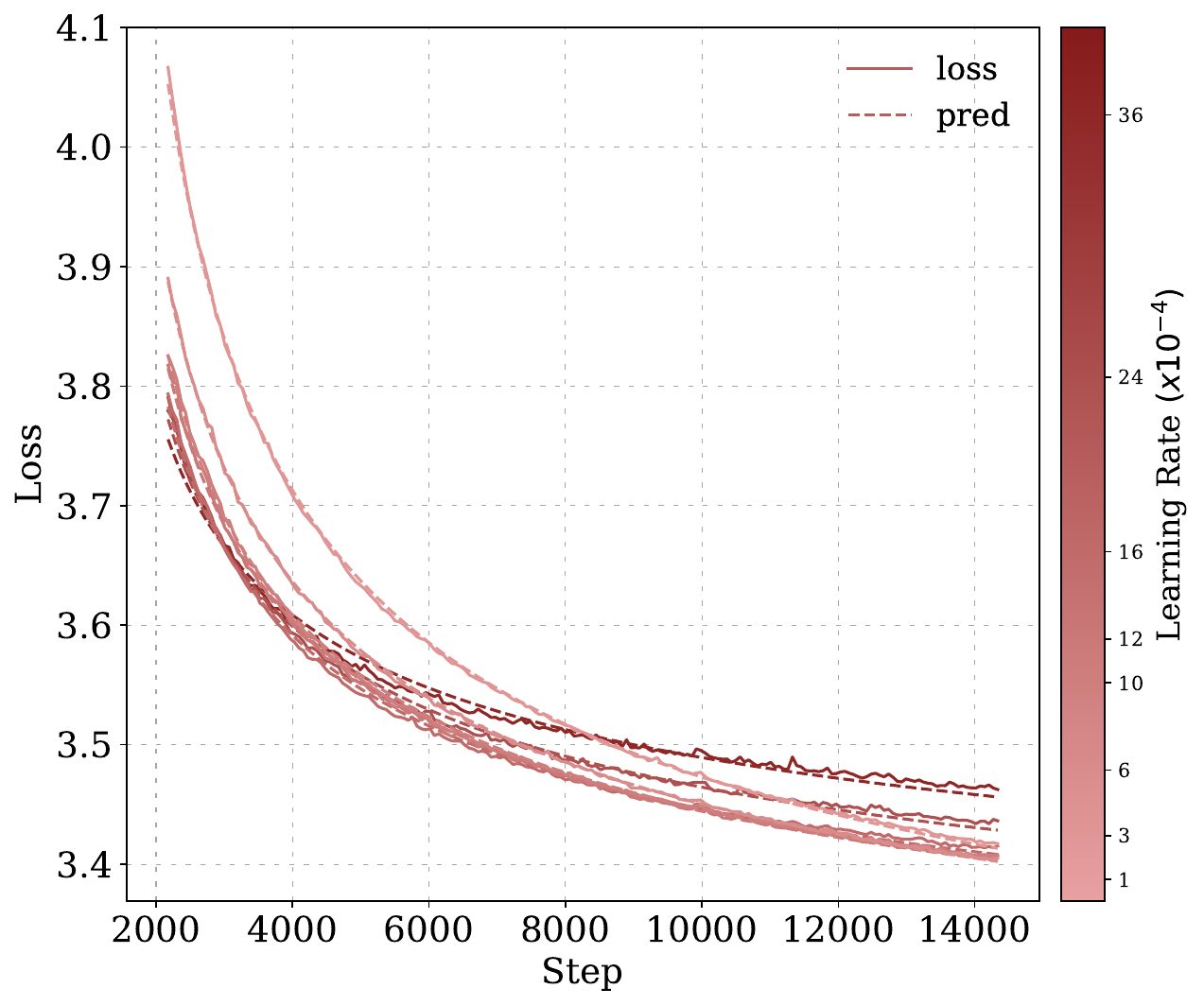}}
    \subfigure[Model Size Ablation for Long-Horizon Training\label{fig:const-long-loss}]{\includegraphics[width=0.48\linewidth]{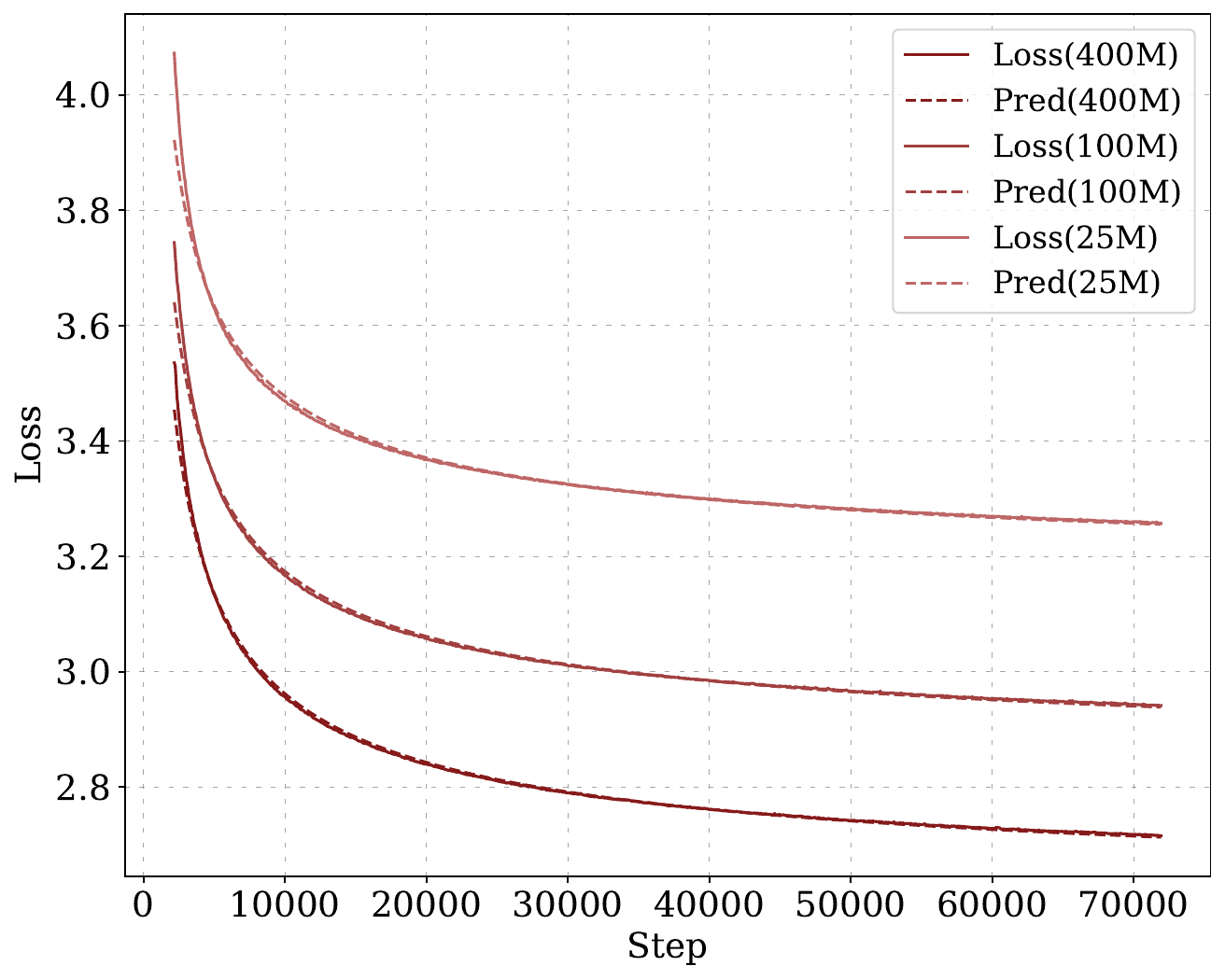}}
    \caption{\small Loss curves for constant LR schedules. \textit{pred} denotes the fitted law prediction and \textit{loss} represents the ground-truth loss curve. See \Cref{app:lr-sum-matching} for details.}
    \label{fig:constant-fit}
\end{figure}

\subsection{Two-Stage Experiments (Section~\ref{sec:two-stage})}\label{appendix-two-stage}\label{app:two-stage}

In this section, we provide details on the investigation of the variation of coefficients in the power law for two-stage LR schedules.

\paragraph{Experiment Setting and Law Fitting.} The experiment setting aligns with \Cref{app:setting}. Default configuration uses $\etaa = 3 \times 10^{-4}$, $\etab = 3 \times 10^{-5}$, $\Ta = 8000$. In the ablation experiments, $\etaa$ ranges from $5\times 10^{-5}$ to $1\times 10^{-3}$, $\etab$ ranges from $4\times 10^{-5}$ to $2.9\times 10^{-4}$, and $\Ta$ ranges from 4000 to 28000. The second stage lengths spanning 1000 to over 6000 steps. Validation loss is sampled every 2 steps due to the rapid loss changes after the stage switch.
Following \citet{chinchilla}, we fit the law utilizing Huber loss as the objection function~\citep{huber1992robust}, 
\begin{equation}\label{eq:two-stage-param-fitting}
\min_{\Theta} \sum_{x} \text{Huber}_{\delta}(
\log\hatLD_{\Theta}(T_A + x)- \log\LD(T_A + x)
),    
\end{equation}
where $\Theta=\{\Tilde{B},\Tilde{C},\beta\}$, and we set $\delta=1\times10^{-2}$. For each experiment, we use the Adam optimizer with a learning rate at $1\times 10^{-4}$ and total steps of 20000. 
Here we do not conform to the L-BFGS algorithm like \citet{chinchilla} due to its sensitivity to the initialization. In our fitting, the parameters are initialized based on the loss reduction curve shape: $\Tilde{B}$ corresponds to the estimation of asymptotic values of loss reduction and $\Tilde{C}$ can be estimated according to the slope at $x=0$ step (\Cref{eq:2stage-fit}).

\paragraph{Fixed $\beta$ Experiments for Parameter Patterns.} We fit the power-law form in \Cref{eq:2stage-fit} across ablation experiments to identify the loss curve shape and power-law parameter patterns. For the sake of further derivation, we fix the exponent $\beta$ as LR-independent parameter $0.4$ based on the warmup experiments. Then we re-fit the loss curves fixing $\beta=0.4$ to confirm the validity of the power form. \Cref{fig:2stage-ensemble} includes the re-fitted curves and ground truths for the ablation experiments over $\etaa$ and $\etab$, showing feasible error margins for further derivation despite fixed $\beta$. We further investigate the dependency of different parameters on the $\etaa$, $\etab$, and $\Ta$, with pair-wise relations presented in \Cref{fig:coef-law} and summarized in \Cref{sec:two-stage}.

\begin{figure}[ht]
    \centering
    \subfigure[$\etaa$ Ablation]{
    \includegraphics[width=0.48\linewidth]{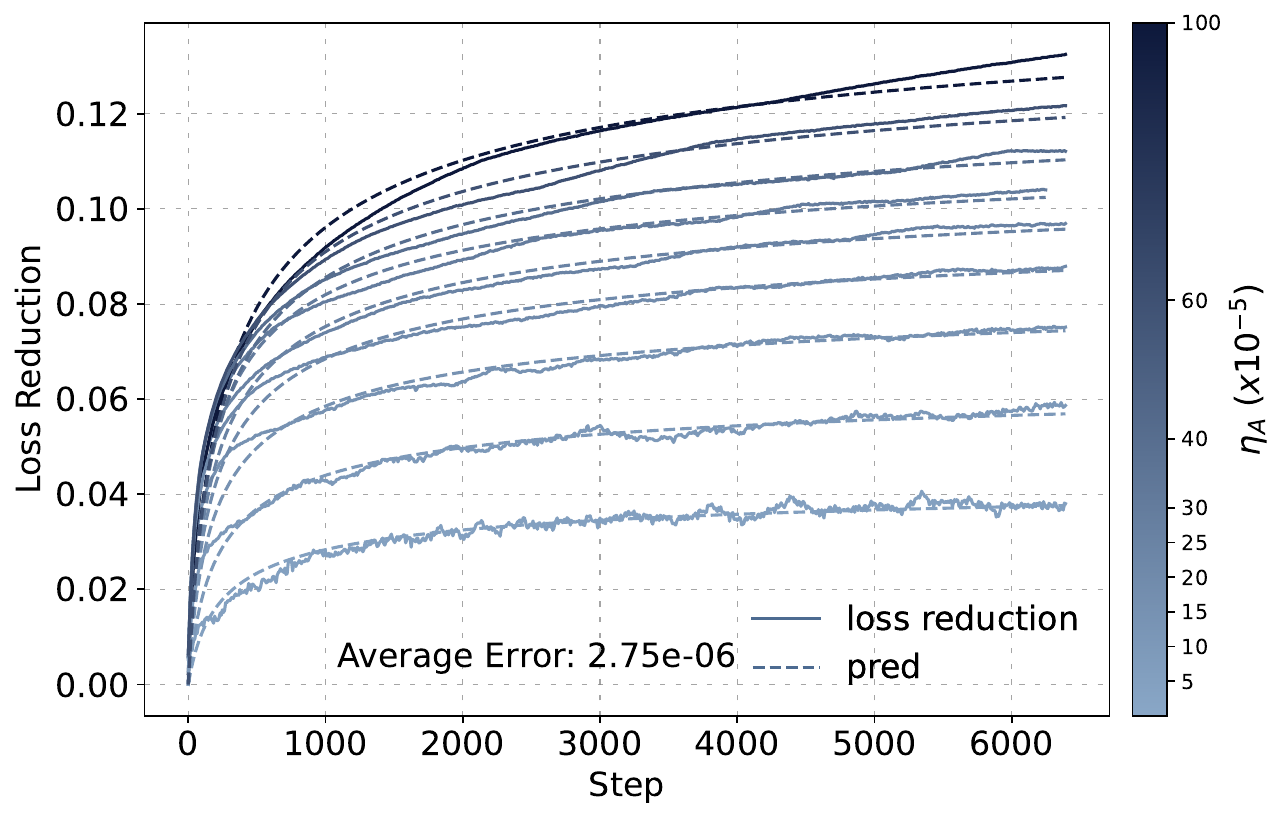}}
    \subfigure[$\etab$ Ablation]{
    \includegraphics[width=0.48\linewidth]{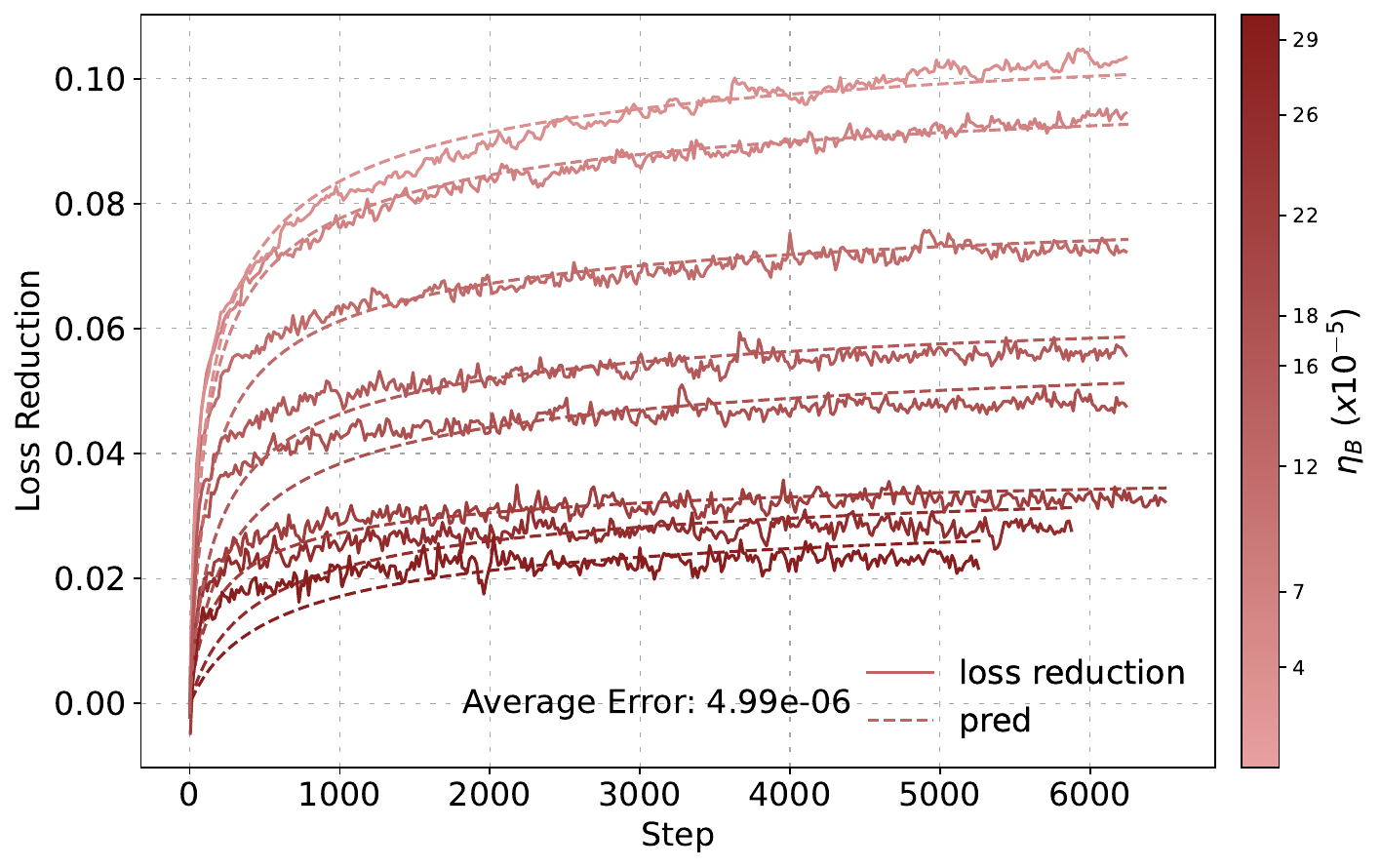}}
    \caption{\small Power-law fitting of loss reductions versus steps $x$ for two-stage LR schedules.}
    \label{fig:2stage-ensemble}
\end{figure}

\subsection{Multi-Stage Experiments (Section~\ref{sec:multi-stage})}\label{app:mulit-stage}

We analyze intermediate loss reduction dependency on the LR prefix, through experiments of a multi-stage schedule and its auxiliary (intermediate) processes. 
As shown in \Cref{fig:demo-multi-stage}, our multi-stage schedule consists of $9$ stages, with a first stage of 8000 steps at $3 \times 10^{-4}$, followed by eight 90-step stages. The validation interval is also set to 2 steps. For adjacent stages $i-1$ and $i$ ($x\le 90$), We compute $\LD^{(i)}(T_{i}+x)$, as defined in \Cref{sec:multi-stage} and fit it going through the same law fitting process as \Cref{eq:two-stage-param-fitting}. The fitting of loss reductions for different stages is presented in the left panel of \Cref{fig:multi-ld-power-law}. Moreover, parameter trends, analogous to the two-stage findings, reveal $\Tilde{B}^{(i)}$ changing with $\eta^{(i-1)}-\eta^{(i)}$ and $\Tilde{C}^{(i)}$ changing with $\eta^{(i)}$, shown in the right two sub-figures of \Cref{fig:multi-ld-power-law}.

\section{Details of Validation Experiments (Section~\ref{sec:validation})}
\label{app:validation}

\subsection{Training Set and Test Set}\label{sec:train-set}  

\begin{table}[h!]
    \centering
    \begin{tabular}{cccccc}    
        \toprule
        Set & Schedule Type & Total Lengths & $\eta_B / \eta_A$ \\
        \midrule
        \multirow{3}{*}{Training} & Constant & 24000 &  \\
        & Cosine & 24000 &  \\
        & Two-stage & 16000 & 0.3 \\
        \midrule
        \multirow{6}{*}{Test} 
        & WSD & 24000 &  \\
        & WSDLD & 24000 &   \\
        & Two-stage & 16000 & 0.1  \\
        & Two-stage & 16000 & 0.6  \\
        & Constant & 72000 &  \\
        & Cosine & 72000 &  \\
    \bottomrule
    \end{tabular}
    \caption{Summary of training and test sets.\label{tab:set-details}}
\end{table}

Our validation frames the Multi-Power Law (MPL) fitting as a machine learning task, training on schedule-loss curve pairs from the training set and predicting loss curves for the test set. 
The training set contains a 24000-step constant and cosine schedule pair, and a 16000-step two-stage schedule with $\etab = 0.3\etaa$. The test set includes a 72000-step constant and cosine schedule, a 24000-step unseen WSD and WSDLD schedule, and 16000-step two-stage schedules with $\etab = 0.1\etaa$ and $\etab = 0.6\etaa$. The peak learning rate is $3 \times 10^{-4}$, and the ending learning rate is $3 \times 10^{-5}$ for the cosine, WSD, and WSDLD schedules. For all two-stage schedules, $\Ta = 8000$. All schedules include a warmup phase of 2,160 steps. Detailed descriptions of the training and test sets are summarized in \Cref{tab:set-details}.

\begin{figure}[t]
    \vspace{-0.4in}
    \centering
    \subfigure[Loss Curve Comparison for GPT-2\label{fig:gpt2-opt}]{\includegraphics[width=0.48\linewidth]{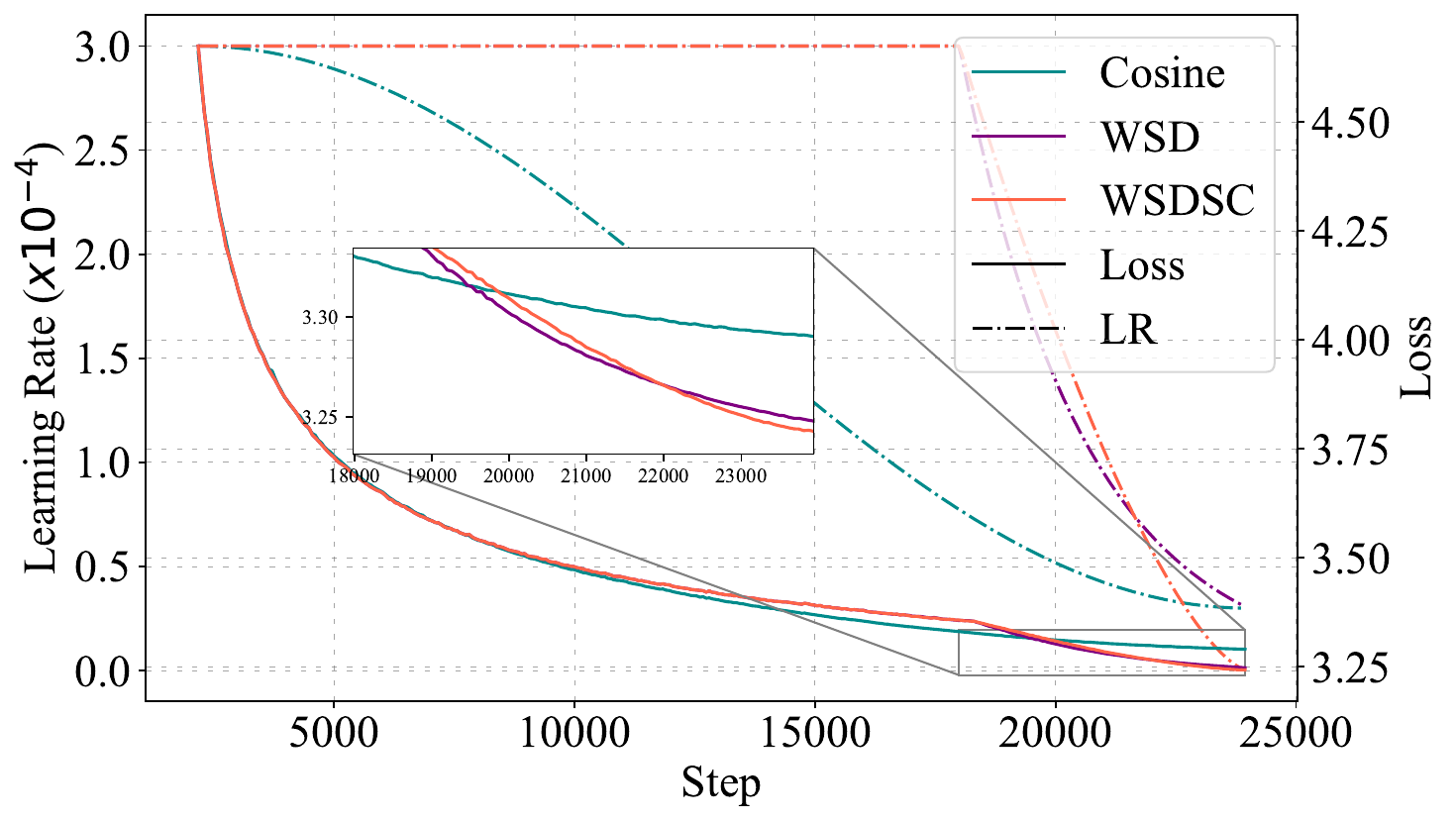}}
    \subfigure[Long-Horizon Prediction for GPT-2\label{fig:gpt2-long}]{\includegraphics[width=0.50\linewidth]{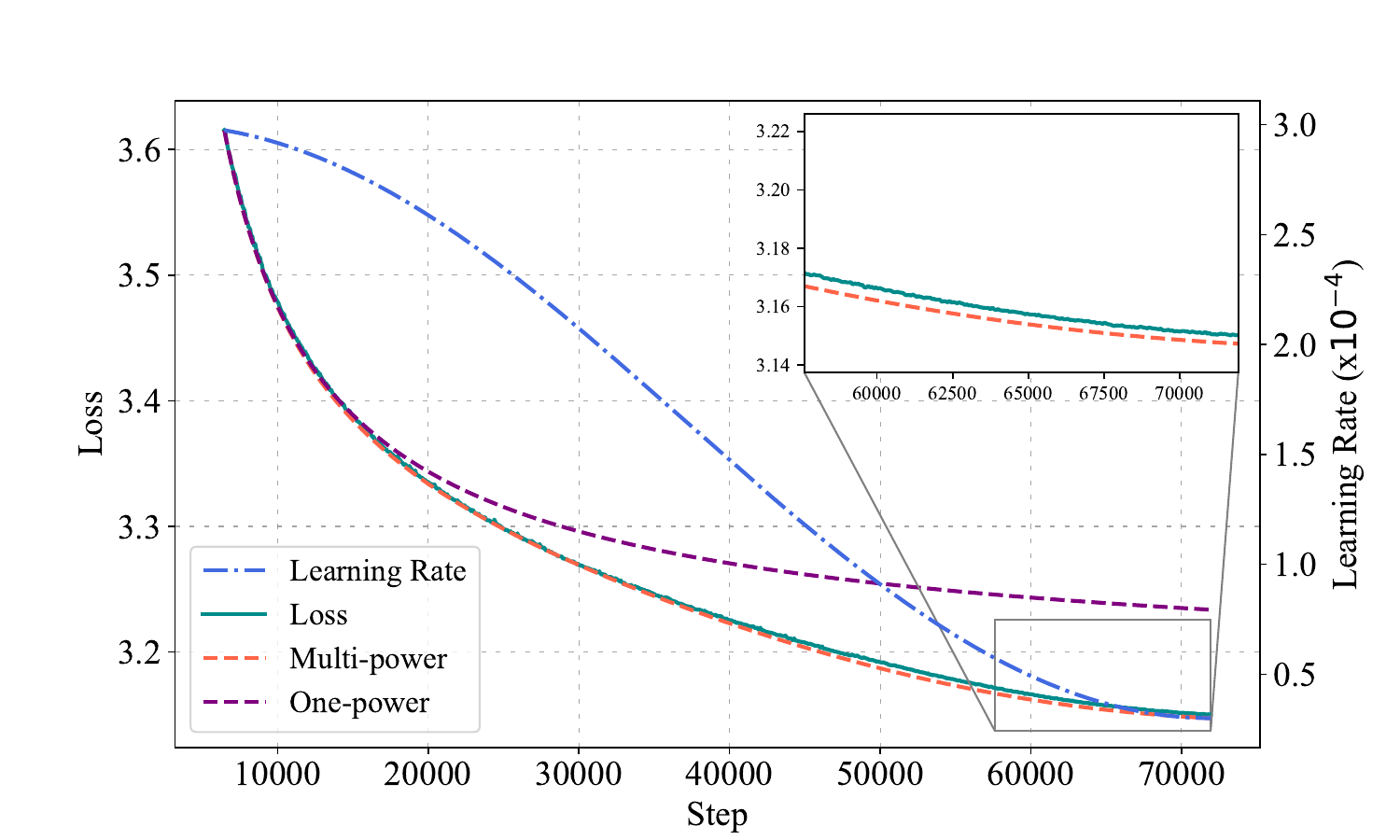}}
    \vspace{-2mm}
    \caption{\small Loss curves of GPT-2 models with Multi-Power Law fitted over 24000-step constant and cosine schedule losses. \textbf{(a)} Comparison between the cosine, WSD, and WSDSC schedules (see \Cref{sec:lessons}); \textbf{(b)} Prediction for a 72000-step cosine schedule loss curve.}
    \label{fig:gpt}
\end{figure}
\begin{figure}[t]
    \centering   
    \subfigure[Random Seed Ablation\label{fig:random-seed}]{
    \includegraphics[width=0.46\linewidth]{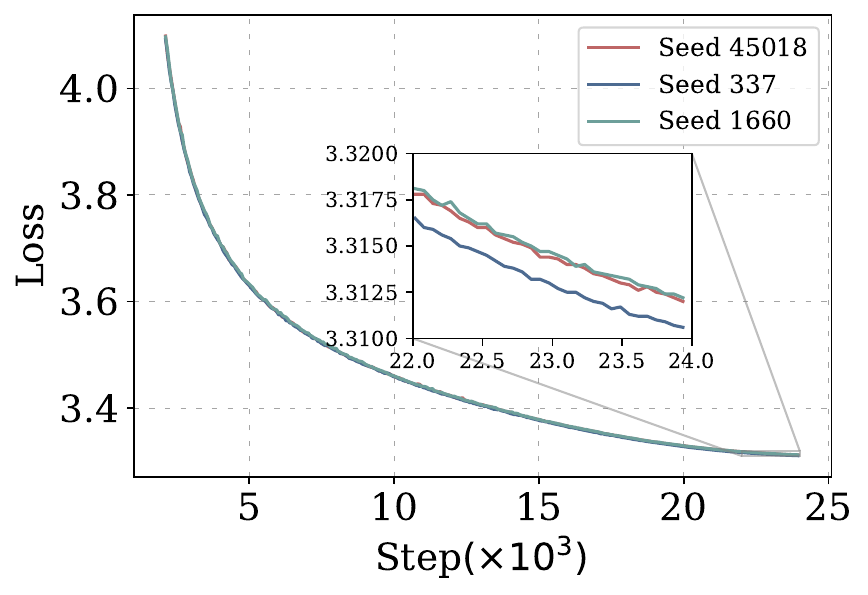}}
    \subfigure[Comparison with Momentum Law\label{fig:mtl-comp}]{
    \includegraphics[width=0.51\linewidth]{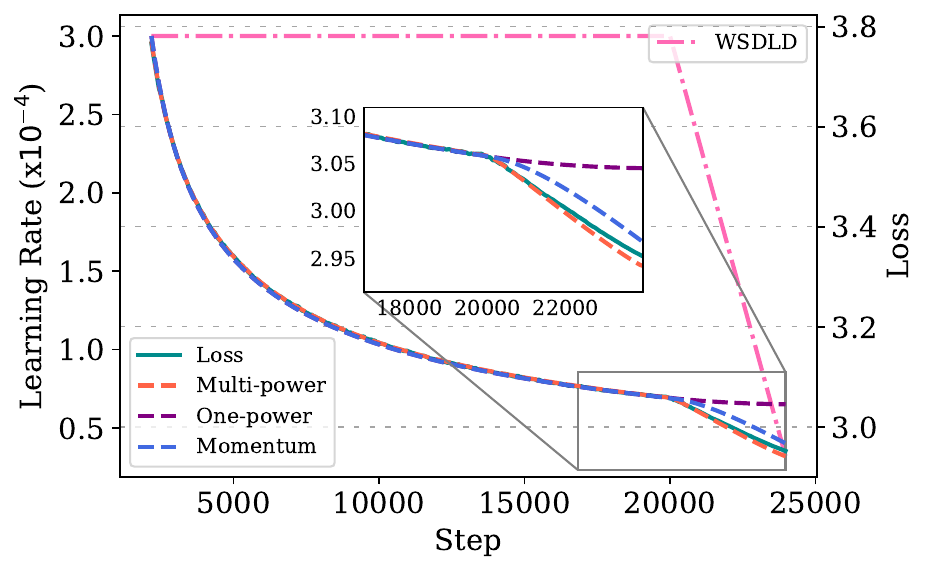}
    }
    \vspace{-2mm}
    \caption{\small \textbf{(a)} Experiments with a 25M model over 24000 steps across different seeds, showing a final loss standard deviation of 0.0007 and a maximum gap of 0.0014. \textbf{(b)} Comparison between Multi-Power Law (MPL) and Momentum Law (MTL). In the decay stage, MPL achieves higher fitting accuracy and matches the curvature of the loss curve, whereas MTL fits the stable stage but predicts a counterfactual concave curve during the decay stage.}
    \label{fig:baseline-comp}\label{fig:seed}
\end{figure}

\subsection{Fitting the Law}

Similar to the two-stage fitting, we fit the parametric law using the Huber loss as the objective~\citep{huber1992robust}:
\begin{equation}\label{eq:huber-loss}
\min_{\Theta} \sum_{t} \text{Huber}_{\delta}(
\log\cL_{\Theta}(X_t)- \log \cL_{\text{gt}}(X_t)
),
\end{equation}
where $\cL_{\text{gt}}(X_t)$ denotes the ground truth of validation loss, $\cL_{\Theta}(X_t)$ is the predicted loss, $\delta$ is a hyperparameter for the Huber loss, and $\Theta$ denotes parameters to fit. The total fitting loss sums up the Huber loss over the validation steps. In practice, we compute the area under the linearly interpolated polyline of the learning rate at validation steps as a surrogate for the LR sum. This approach reduces the computational cost since requiring only step numbers, learning rates, and losses at validation steps.

\paragraph{Multi-Power Law.} For the Multi-Power Law (MPL), $\Theta = \{A, B, C, \alpha, \beta, \gamma, L_0\}$, and $X_t = \{\eta_1, \dots, \eta_t\}$. We use the Adam optimizer to fit the MPL due to its flexibility, with a learning rate of $5 \times 10^{-3}$ for the index parameters ($\alpha$, $\beta$, and $\gamma$) and $5 \times 10^{-2}$ for the coefficient or constant parameters ($A$, $B$, $C$, and $L_0$). We also perform a second optimization with a learning rate of $1 \times 10^{-5}$ and $1 \times 10^{-6}$, initialized with parameters from the first optimization. Each optimization runs for $5 \times 10^4$ steps, selecting the lowest training loss result. Fitted parameters are listed in \Cref{tab:parameter_summary}. In the discussion of \Cref{app:simp}, we also fit simplified MPL or MPL variants in this manner, except for the momentum law (\Cref{app:mtl-fitting}). In \Cref{fig:main-fit-ensemble}, we present the fitting and prediction results for a subset of experiments, with a zoom-in window highlighting predictions near the end of training. In long-horizon experiments, the zoomed-in view reveals slight discrepancies between the MPL predictions and the actual training curves, targeted for future refinement.

\begin{table}[t!]
    \centering
    \caption{Parameter values for optimized schedules across different model sizes, rounded to two decimal places.}
    \vspace{1mm}
    \begin{tabular}{lccccccc}
        \toprule
        \textbf{Model Size} & \(A\) & \(B\) & \(C\) & \(\alpha\) & \(\beta\) & \(\gamma\) & \(L_0\) \\
        \midrule
        400M & 0.66 & 614.30 & 0.16 & 0.42 & 0.88 & 0.56 & 2.52 \\
        100M & 0.59 & 521.40 & 0.24 & 0.46 & 0.60 & 0.65 & 2.79 \\
        25M  & 0.51 & 446.40 & 2.07 & 0.53 & 0.41 & 0.52 & 3.17 \\
        \bottomrule
    \end{tabular}
    \label{tab:parameter_summary}
    \vspace{-0.2in}
\end{table}

\begin{figure}[hb]
    \centering
    \subfigure{
    \includegraphics[width=0.48\linewidth]{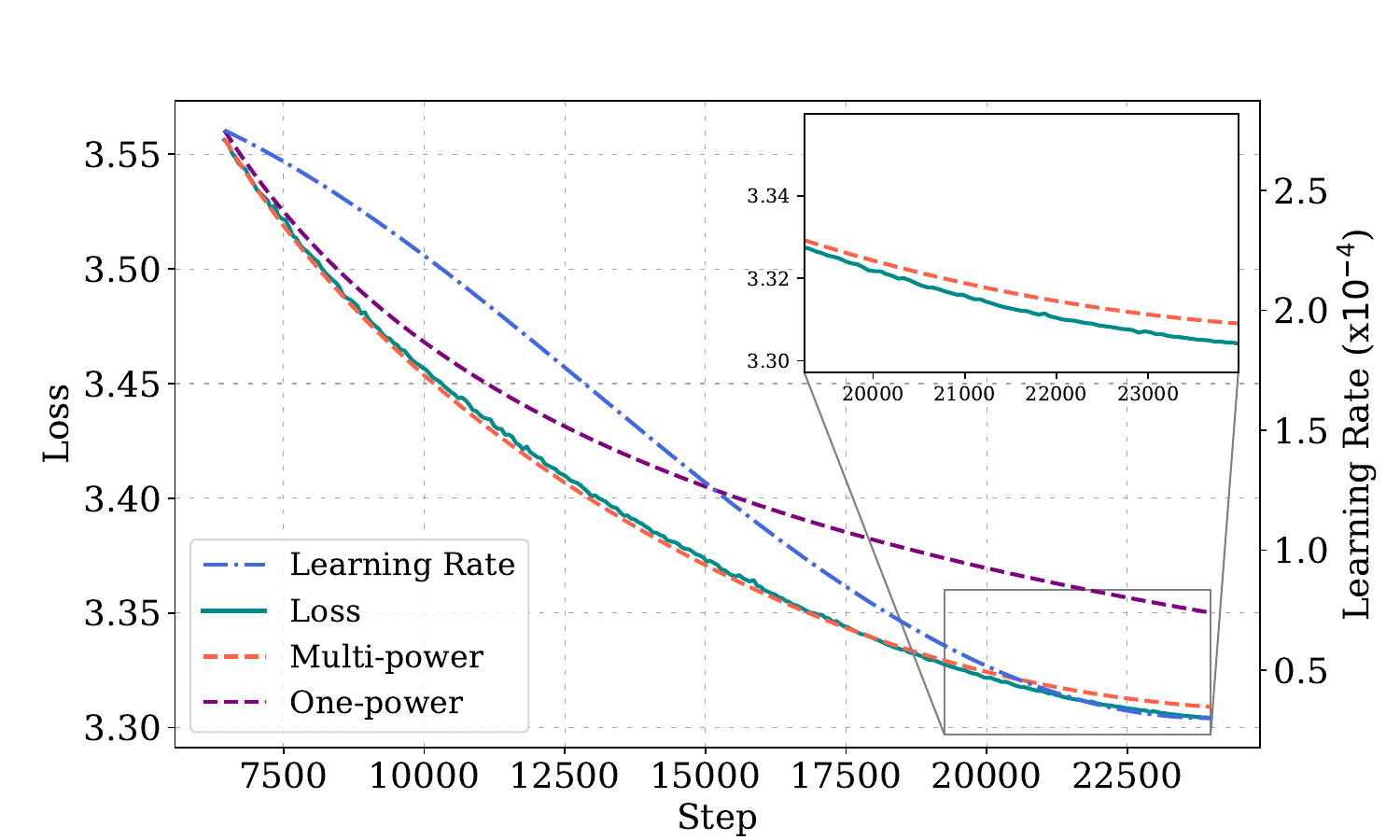}}
    \subfigure{
    \includegraphics[width=0.48\linewidth]{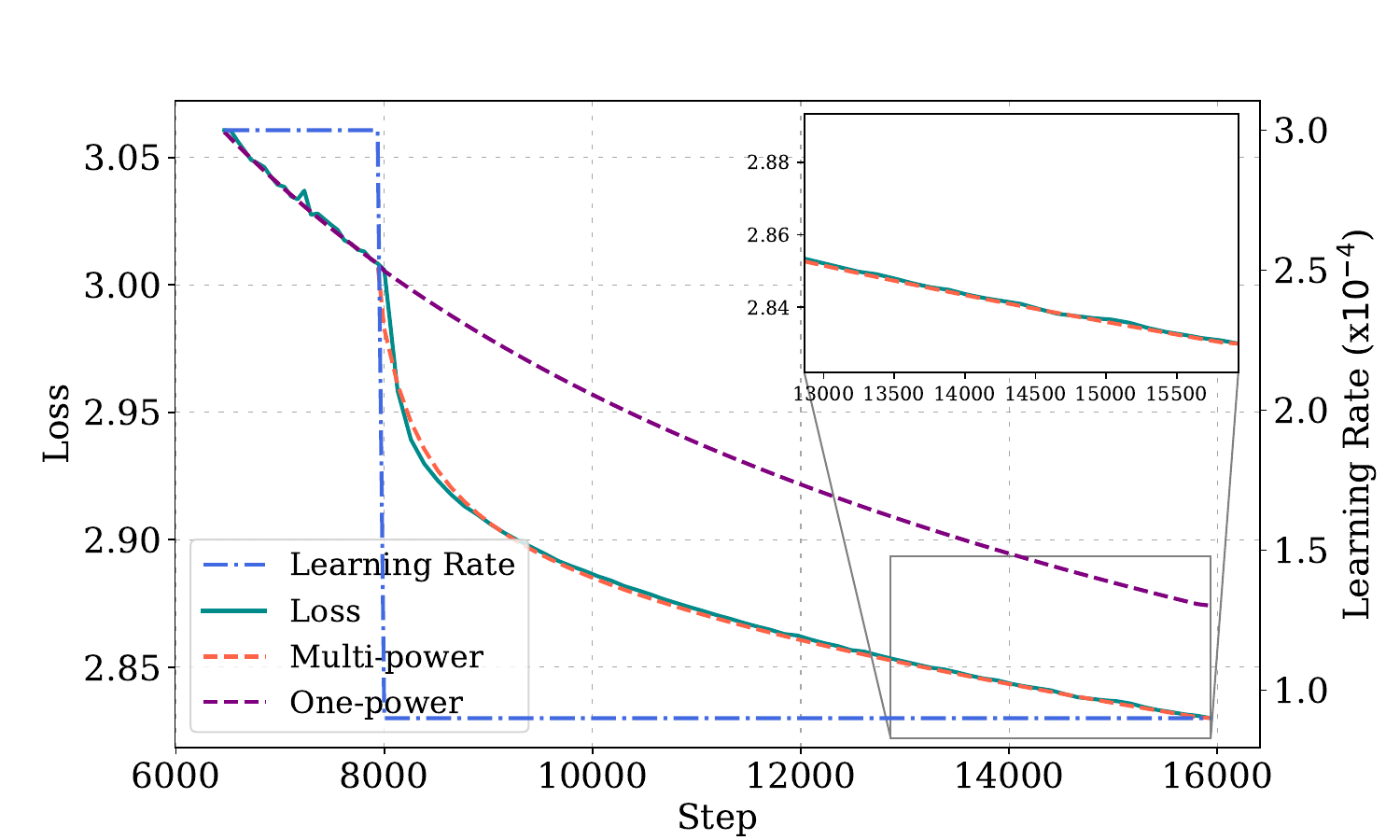}}
    \subfigure{
    \includegraphics[width=0.48\linewidth]{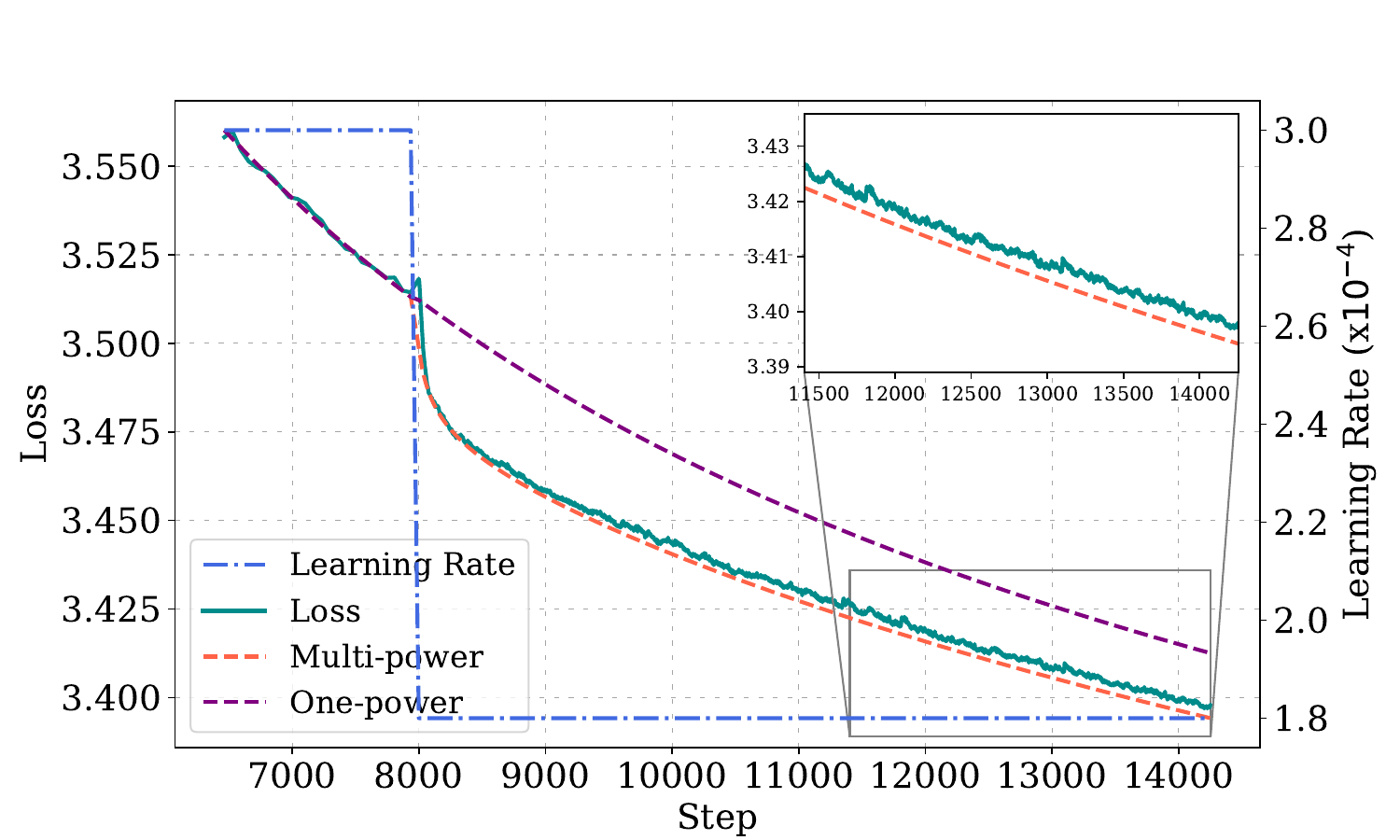}}
    \subfigure{
    \includegraphics[width=0.48\linewidth]{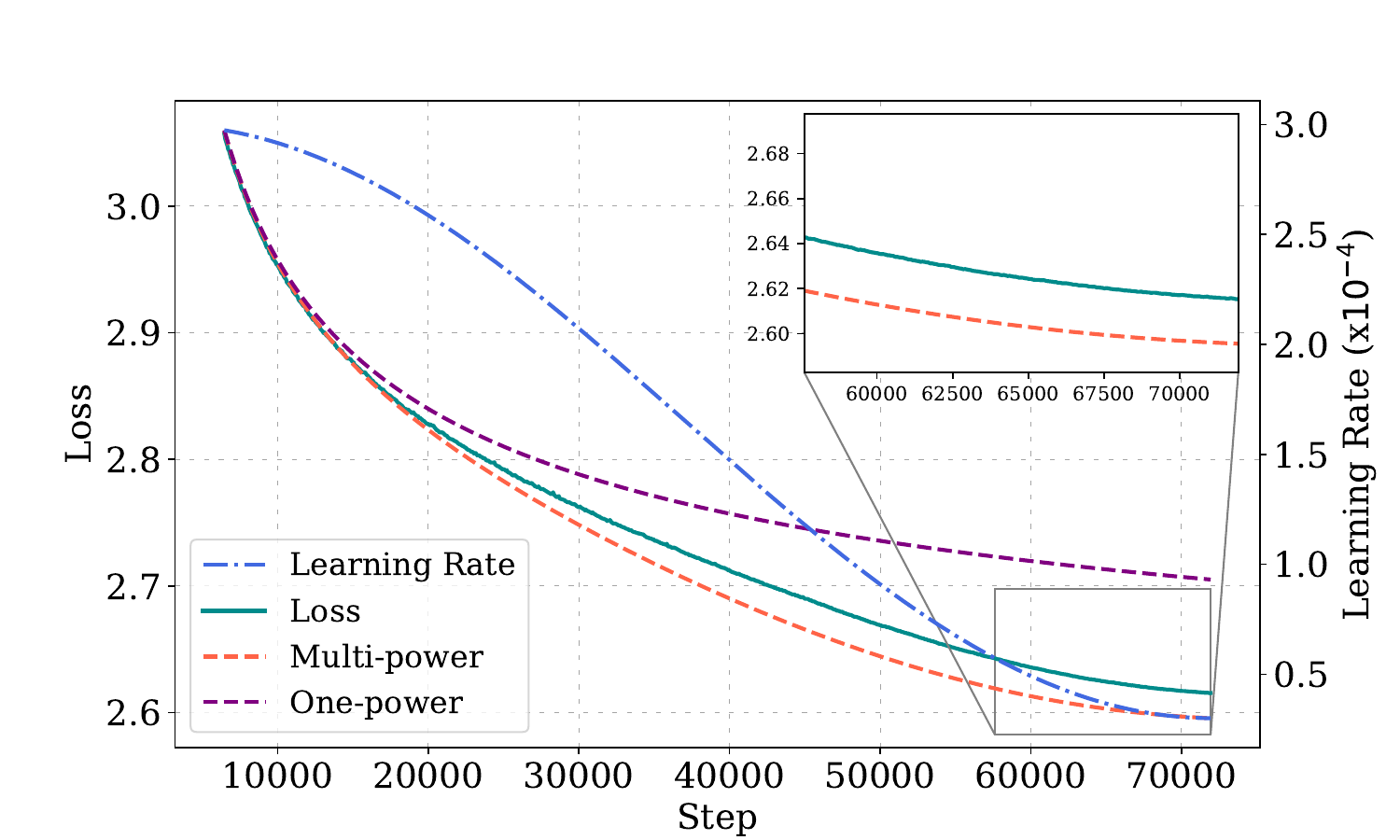}}
    \subfigure{
    \includegraphics[width=0.48\linewidth]{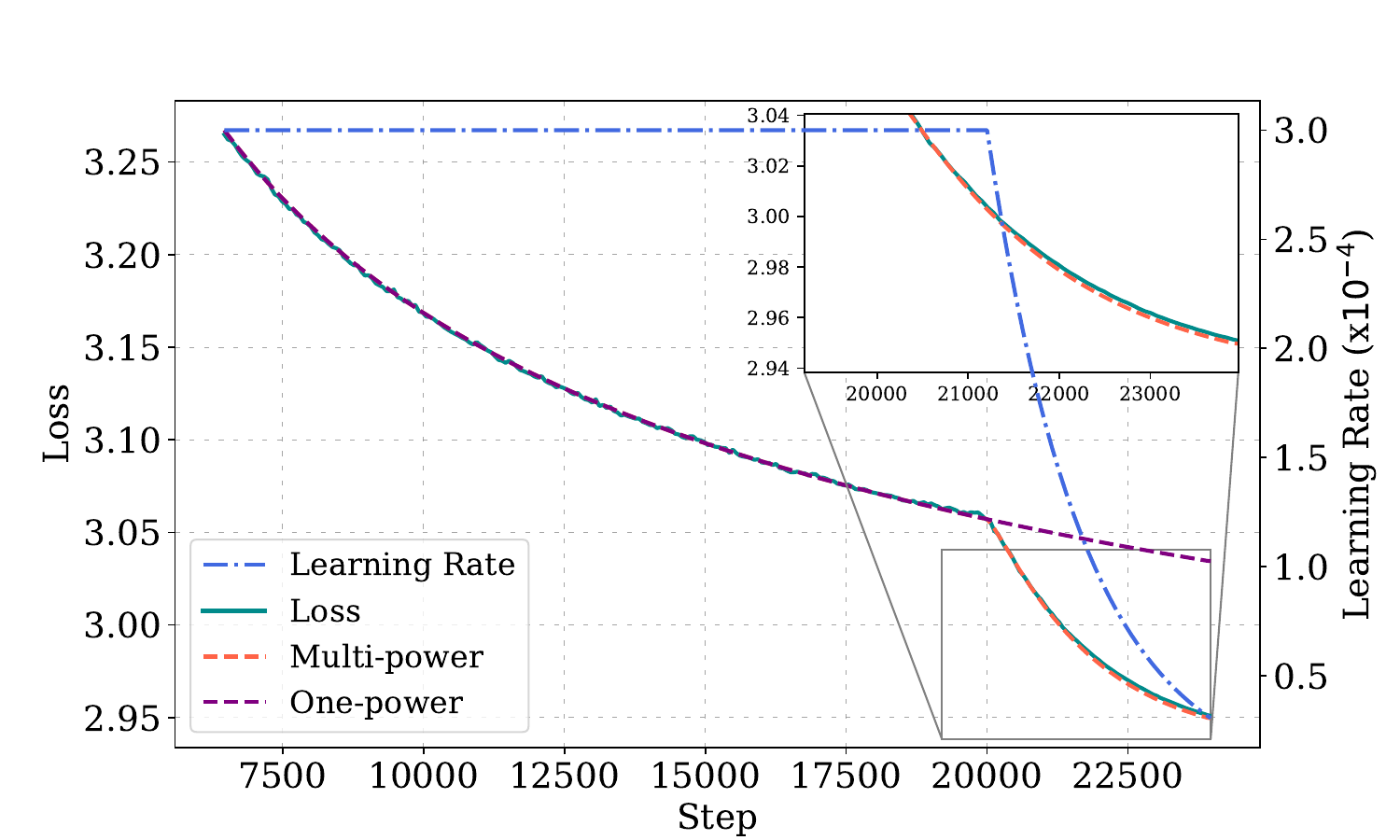}}
    \subfigure{
    \includegraphics[width=0.48\linewidth]{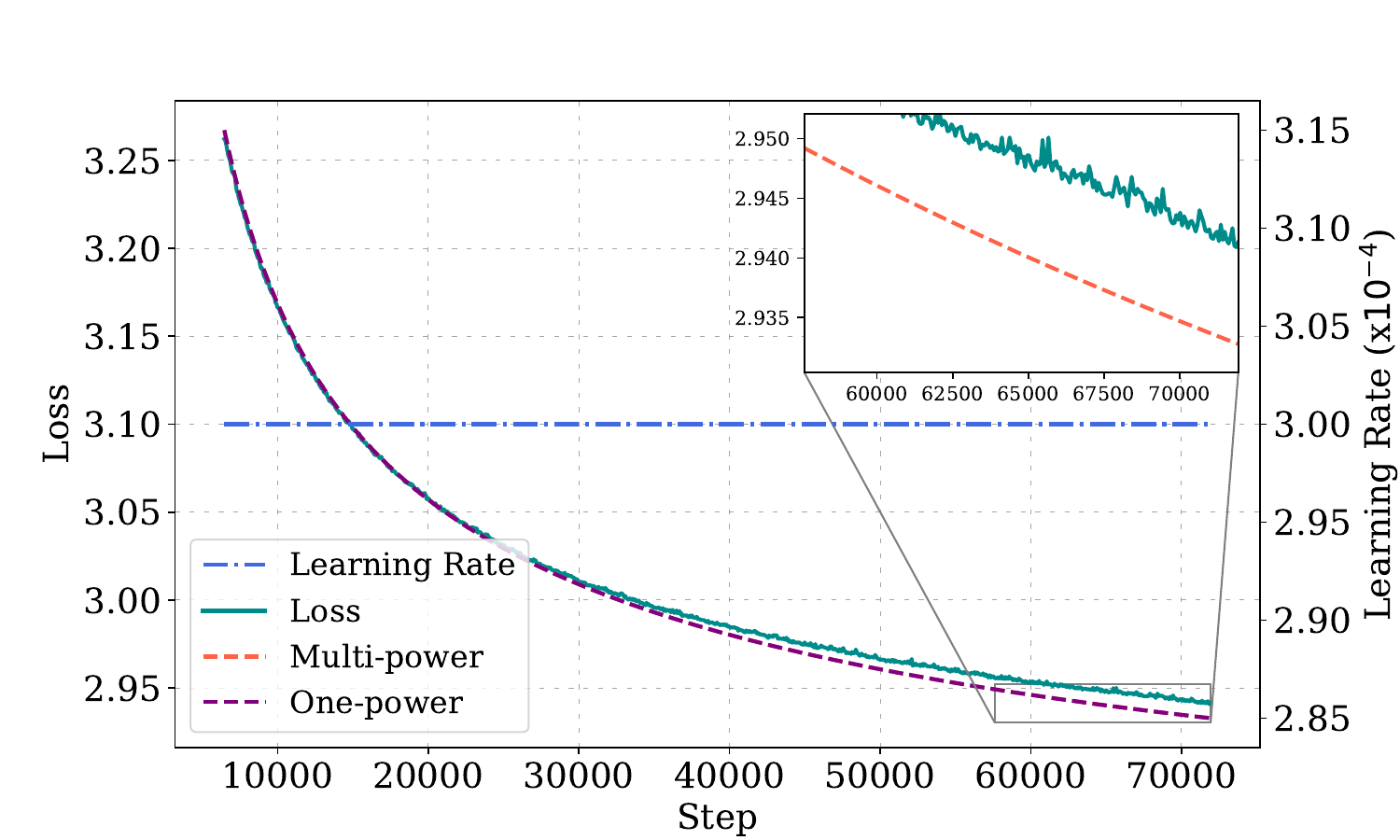}}
    \caption{\small \textbf{Fitting and Prediction Details.} Subfigures depict loss curve fitting (training set) and prediction (test set) across various configurations, labeled as $(X, Y)$ for row $X$, column $Y$. The columns in the accompanying table indicate: \textbf{F/P} for Fitting (F) or Prediction (P), \textbf{Model Size}, \textbf{Step Length}, and \textbf{Learning Rate Schedule}. 
    Subfigure details follow:}
    \vspace{0.5cm}
    \centering
    \begin{tabular}{ccccc}
        \toprule
        $(X, Y)$ & F/P & Model Size & Step Length & LR Schedule\\
        \midrule
        (1, 1) & F & 25M & 24000 & Cosine \\
        (1, 2) & F & 400M & 16000 & 2-stage ($3 \times 10^{-4} \to 9 \times 10^{-5}$) \\
        (2, 1) & P & 25M & 16000 & 2-stage ($3 \times 10^{-4} \to 1.8 \times 10^{-4}$) \\
        (2, 2) & P & 400M & 72000 & Cosine \\
        (3, 1) & P & 100M & 24000 & WSD \\
        (3, 2) & P & 100M & 72000 & Constant \\
        \bottomrule
    \end{tabular}
    \label{fig:main-fit-ensemble}
\end{figure}

\paragraph{Momentum Law.}\label{app:mtl-fitting} For the momentum law (MTL; \Cref{app:mtl}), $\Theta = \{A, B, \alpha, L_0\}$, with $\lambda$ as a tunable hyperparameter. The input $X_t$ for MTL is the same as MPL's input. Following \citet{tissue2024scaling}, we use L-BFGS to minimize Equation~\eqref{eq:huber-loss}, grid-searching \(\lambda \in \{0.95, 0.99, 0.995, 0.999, 0.9995\}\) and selecting the best fit based on training accuracy. Predictions are evaluated across the test set (\Cref{tab:set-details}), with comparisons to MPL in \Cref{tab:comp} and \Cref{fig:baseline-comp}. In \Cref{fig:baseline-comp}, we compare them specifically over the WSDLD schedule. In the decay stage, MPL not only achieves higher fitting accuracy but also aligns with the curvature of the loss curve. In contrast, MTL fits the stable stage well but predicts a counterfactual concave curve during the decay stage.

\paragraph{Chinchilla Data Scaling Law.}\label{app:chinchilla-fitting} The Chinchilla Data Scaling Law (CDSL) is similar to the one-power law mentioned in \Cref{app:opl}, but uses the power of steps instead of the LR sum, with $\Theta = \{A, \alpha, L_0\}$, and $X_t = t$ (final steps only) for \Cref{eq:huber-loss}. The fitting of CDSL follows \citet{chinchilla} and uses the L-BFGS algorithm to minimize the Huber loss. With regard to sample efficiency (\Cref{fig:chinchilla-sample-efficiency}), CDSL uses cosine curves at 14960, 20080, 27760, 40560, 53360, and 72000 steps, requiring 4.8 times more compute than MPL (two 24000-step curves), with prediction errors of 0.007 (MPL) versus 0.024 (CDSL). MPL achieves less than one-third the prediction error of CDSL. In \Cref{fig:chinchilla-whole-curve}, CDSL fits all intermediate steps, ignoring the effect of LR schedule and loss reductions for the comparison with MPL.

\paragraph{Discussion on the Optimization Method.} We also explored the use of the L-BFGS algorithm for fitting MPL but found it highly sensitive to parameter initialization. For instance, under certain initializations, the fitted parameters may include a high $\beta$ value and a near-zero $C$. Note that $1 - (1 + Cx)^{-\beta} = 1 - \exp(-\beta \log(1 + Cx)) \approx 1 - \exp(-\beta Cx)$ in this case, making MPL resemble a multi-exponential form. In practice, this issue can be mitigated by constraining parameters such as $\beta$ and $\gamma$ to the interval $(0, 1)$. Additionally, we can initialize $C$, $\beta$, and $\gamma$ through grid search to obtain more feasible results. However, using the Adam optimizer is not without limitations, as it lacks theoretical convergence guarantees. Future work will focus on enhancing the fitting process to achieve greater robustness and stability.

\begin{figure}
    \centering
    \subfigure{\includegraphics[width=0.49\linewidth]{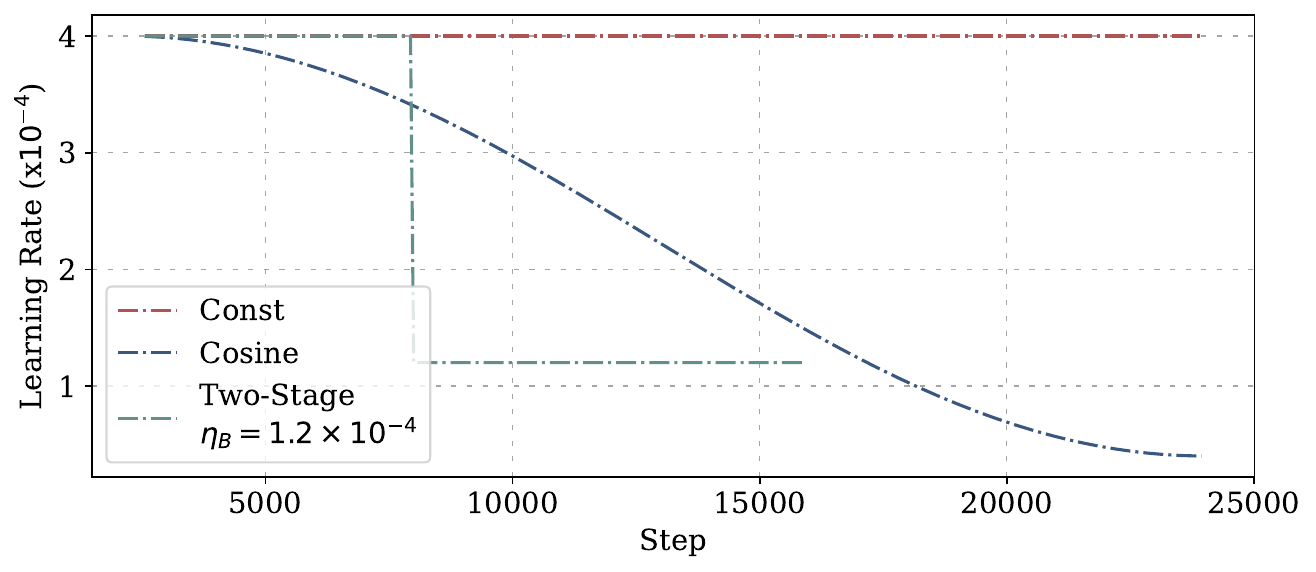}}
    \subfigure{\includegraphics[width=0.49\linewidth]{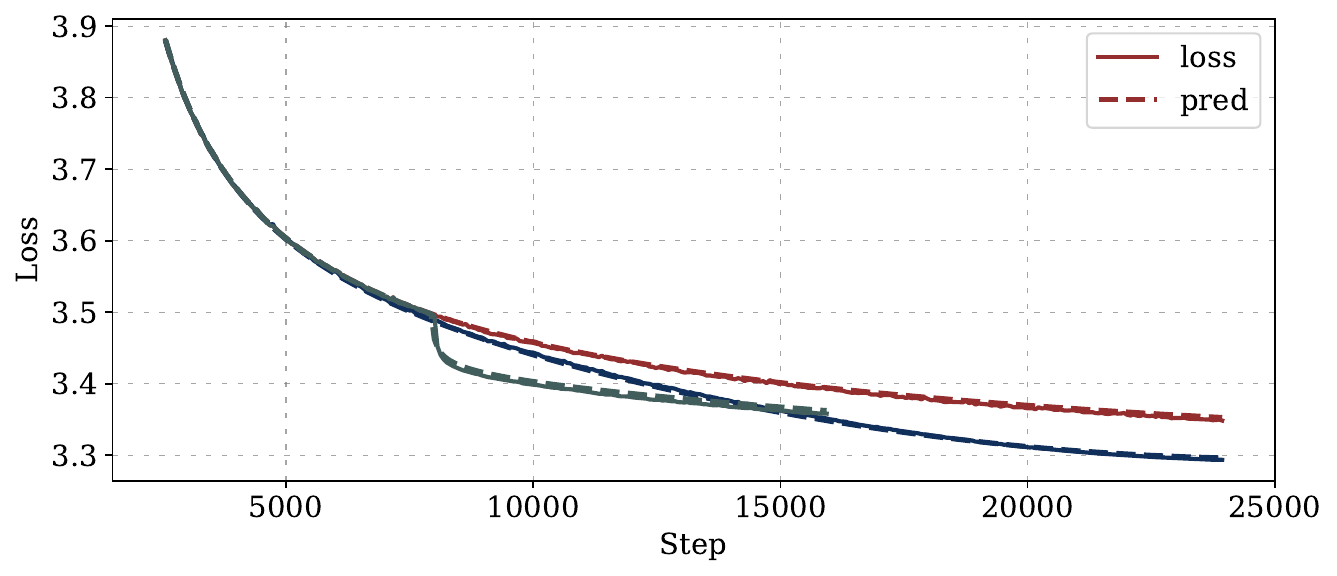}}
    \subfigure{\includegraphics[width=0.49\linewidth]{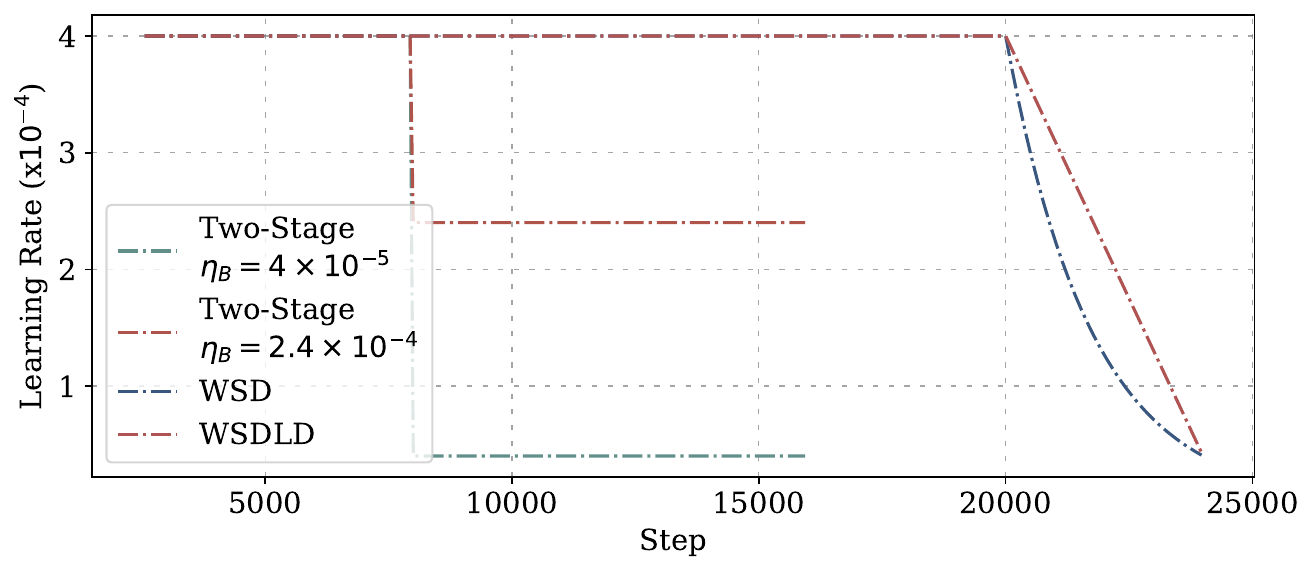}}
    \subfigure{\includegraphics[width=0.49\linewidth]{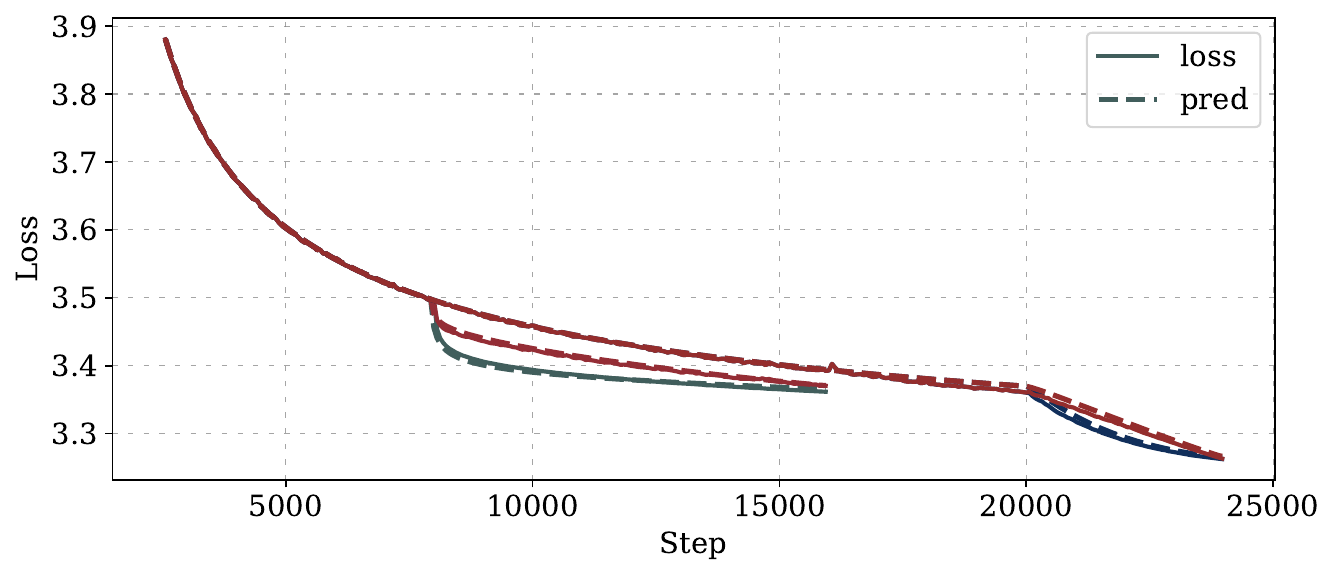}}
    \subfigure{\includegraphics[width=0.49\linewidth]{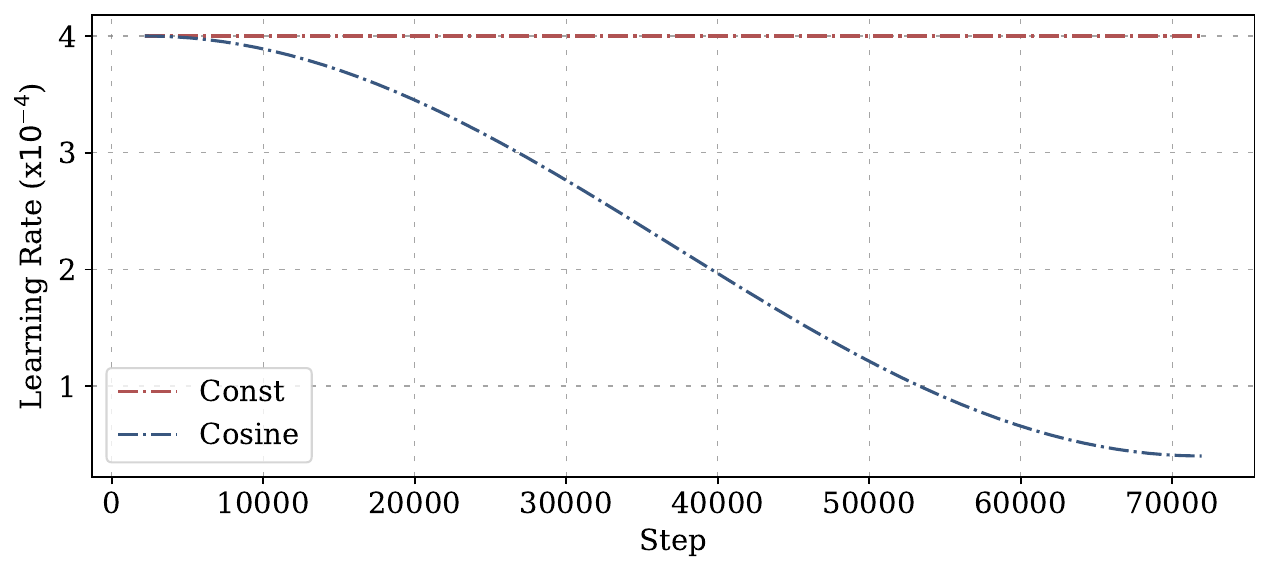}}
    \subfigure{\includegraphics[width=0.49\linewidth]{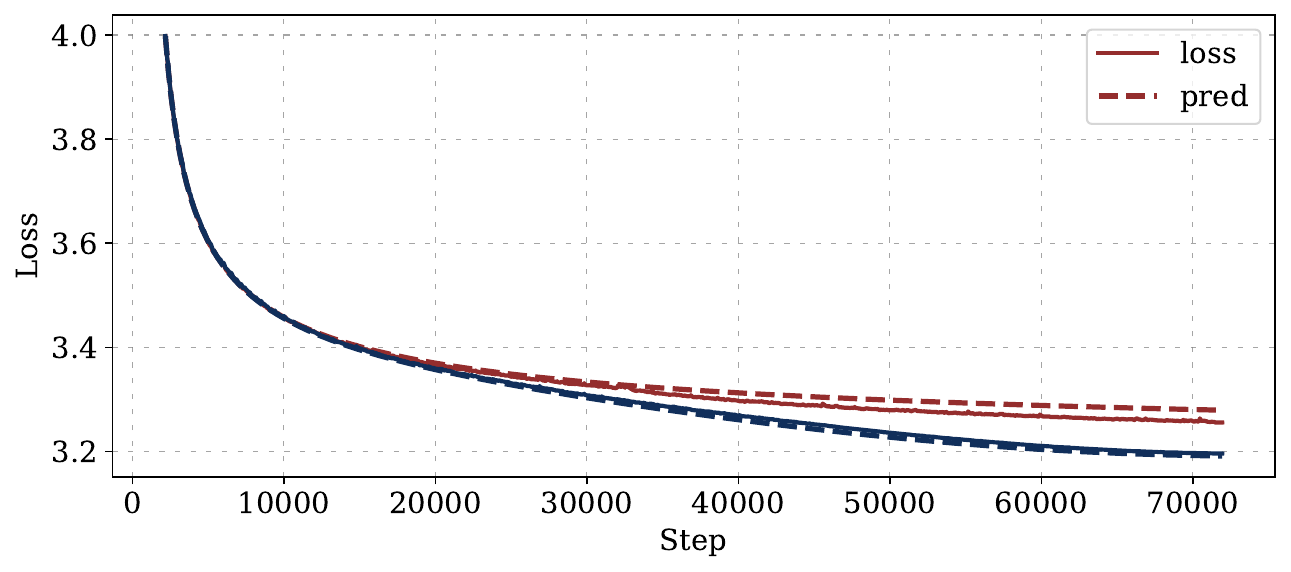}} 
    \subfigure{\includegraphics[width=0.49\linewidth]{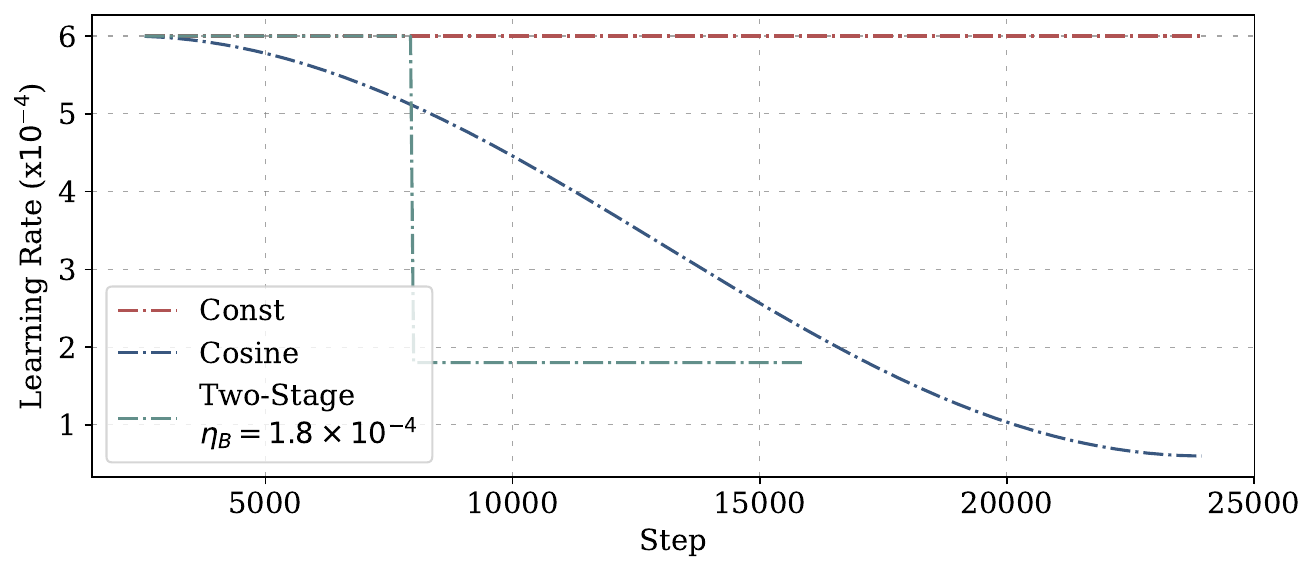}}
    \subfigure{\includegraphics[width=0.49\linewidth]{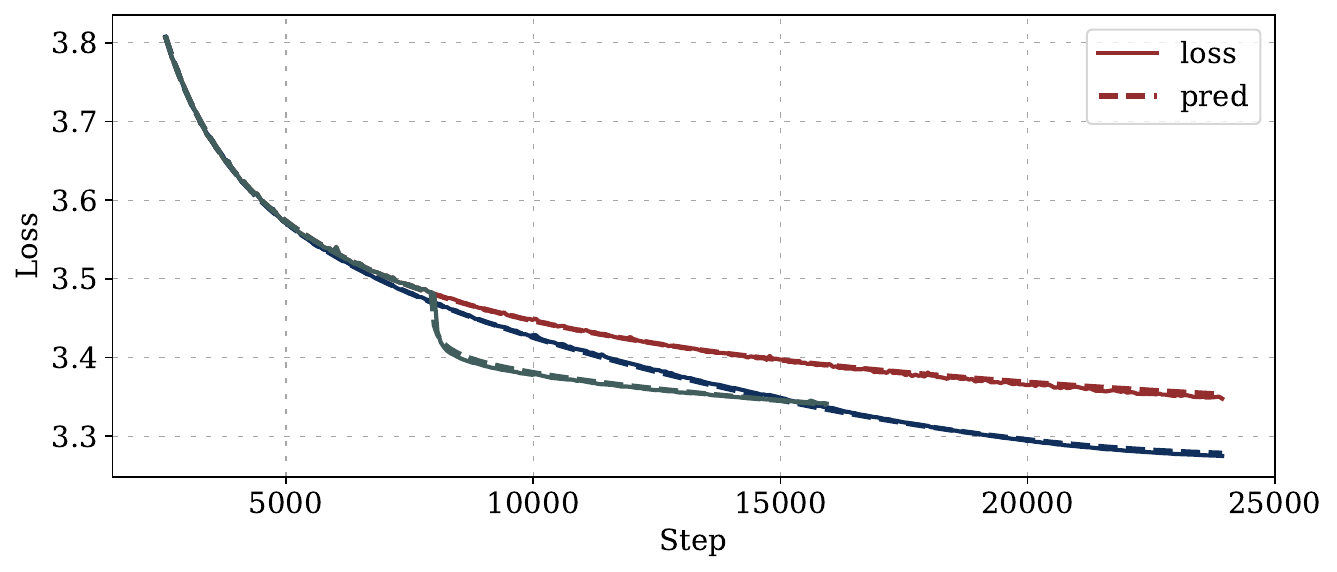}}
    \subfigure{\includegraphics[width=0.49\linewidth]{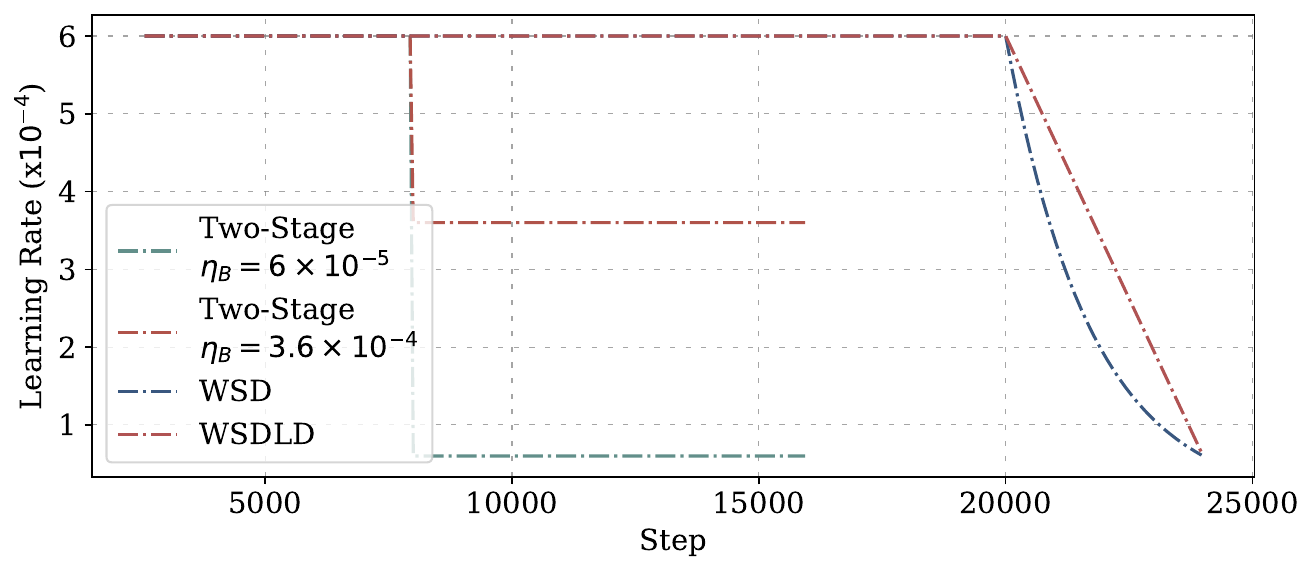}}
    \subfigure{\includegraphics[width=0.49\linewidth]{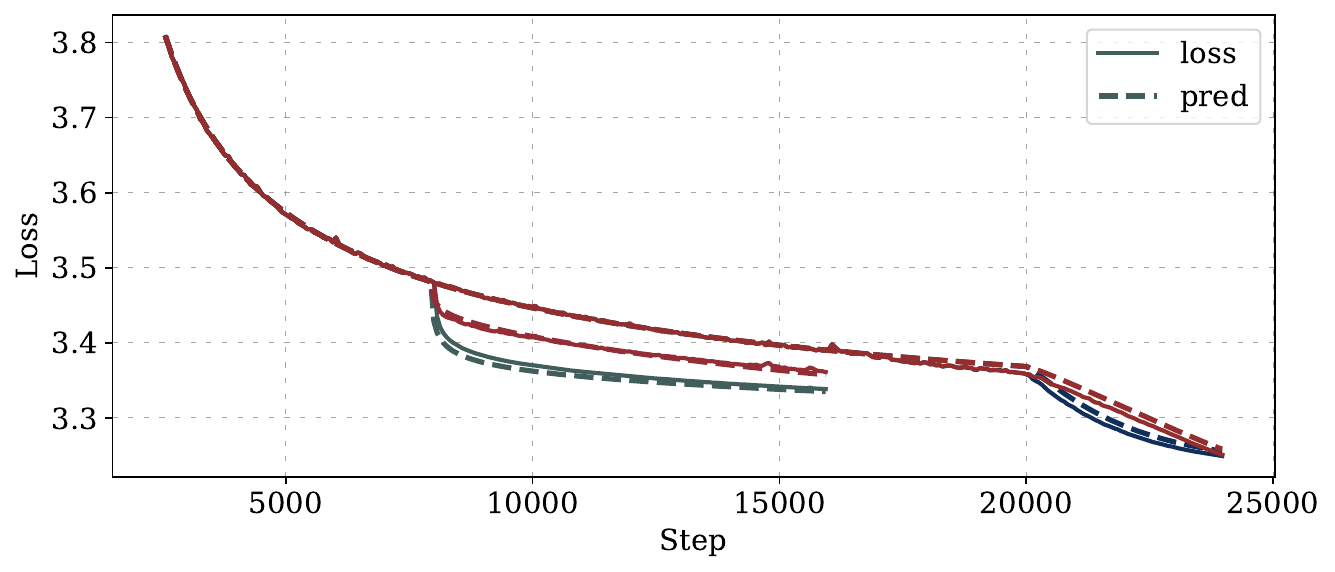}}
    \subfigure{\includegraphics[width=0.49\linewidth]{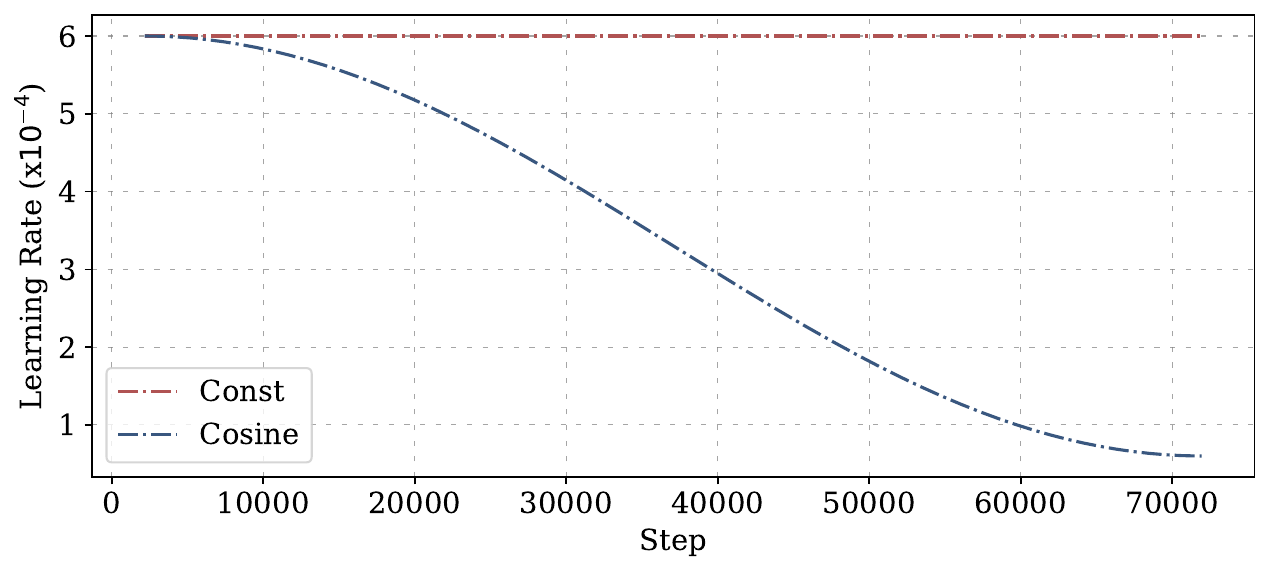}}
    \subfigure{\includegraphics[width=0.49\linewidth]{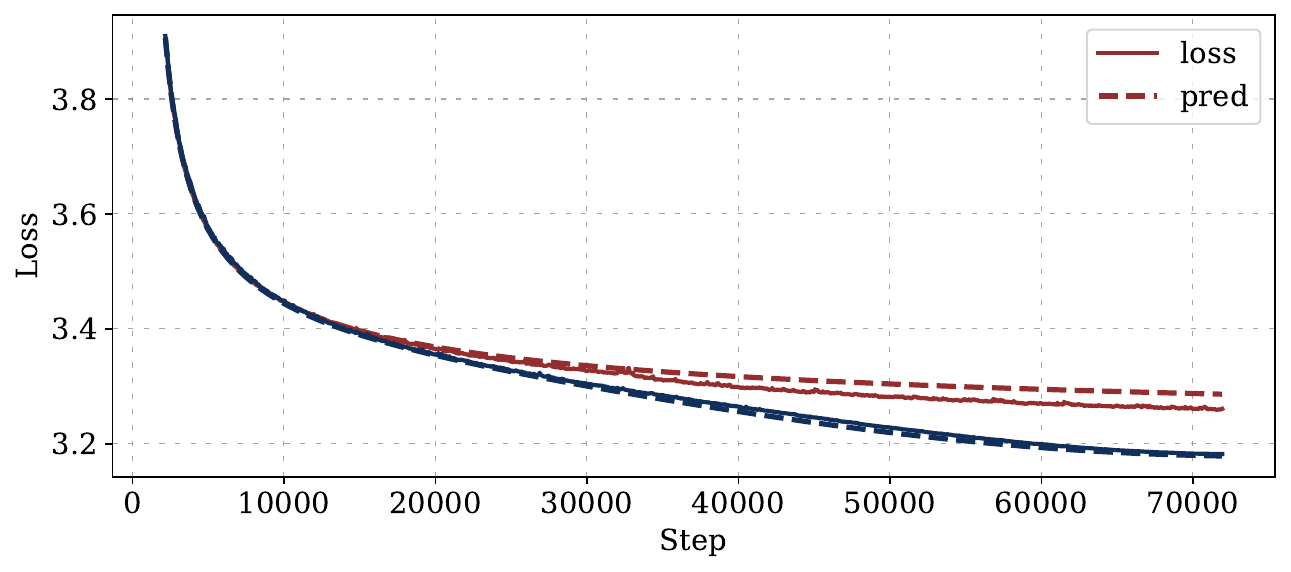}} 
    \caption{\small Ablation study on peak learning rates. \textbf{Left:} Learning rate schedules; \textbf{Right:} Corresponding loss curves. 
    \textbf{Layout:} The first three rows show the results for a peak LR of $4\times 10^{-4}$ while the last 
    three rows are for the peak LR of $6\times 10^{-4}$. 
    Within each set of the three rows, the first row shows the fitting on the training set, the second row displays the prediction over unseen schedules and the third row demonstrates the extrapolation capability on a long horizon loss curve.
    }
    \label{fig:peak-lr-ablation}
\end{figure}

\begin{figure}[t]
\vspace{-1.0in}
    \centering
    \subfigure{\includegraphics[width=0.32\linewidth]{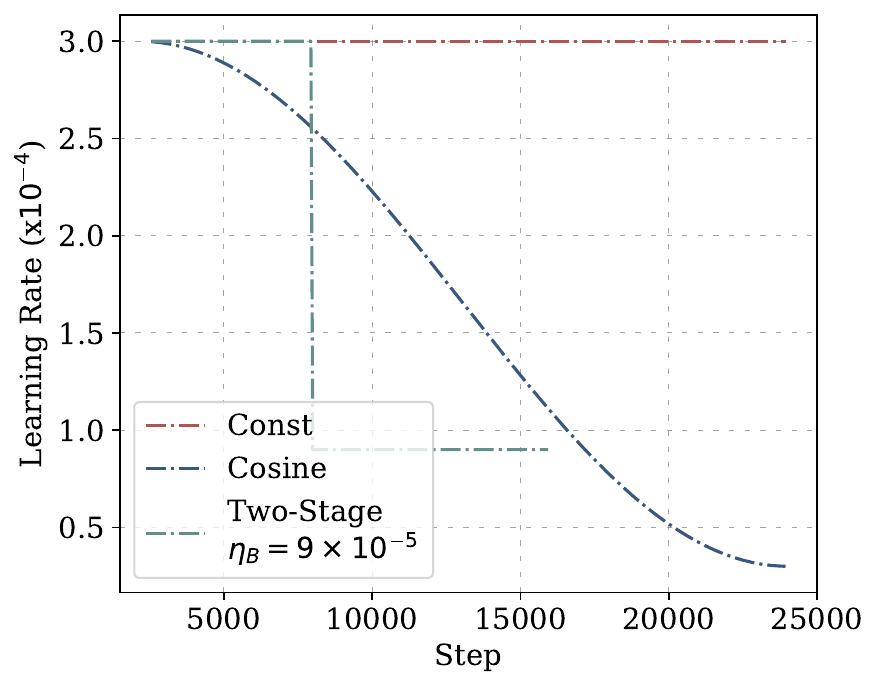}}
    \subfigure{\includegraphics[width=0.32\linewidth]{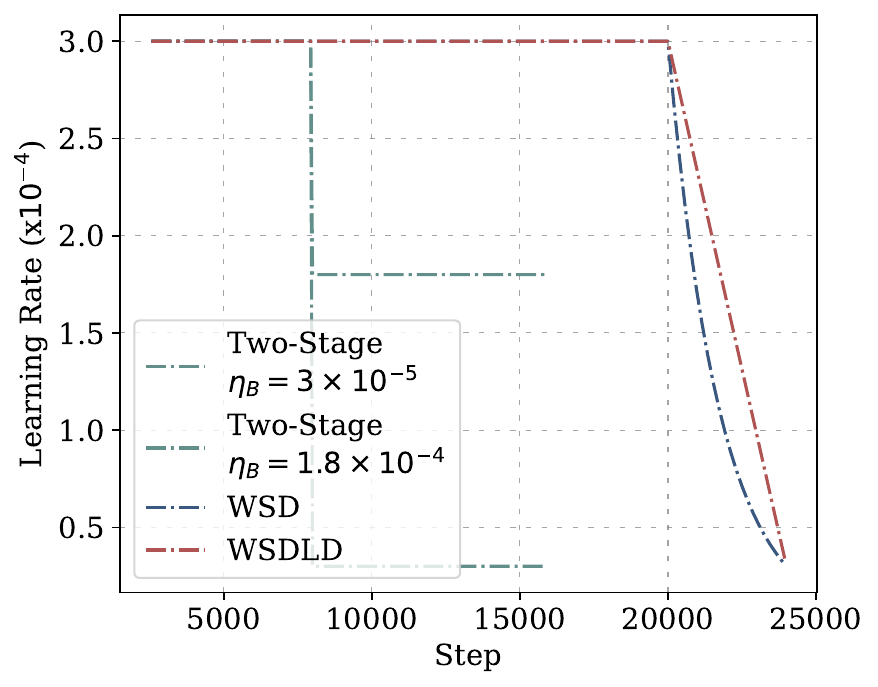}}
    \subfigure{\includegraphics[width=0.30\linewidth]{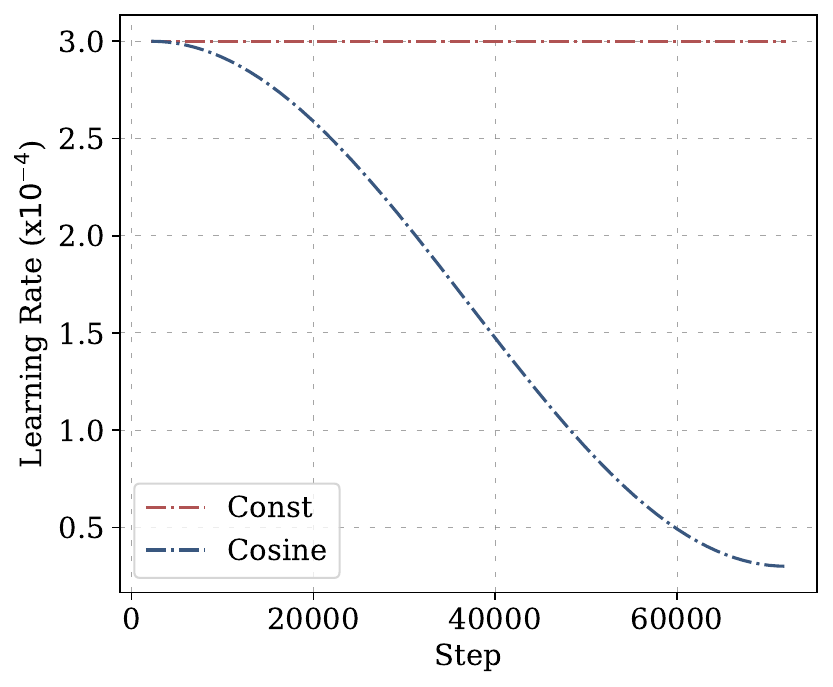}}
    \subfigure{\includegraphics[width=0.32\linewidth]{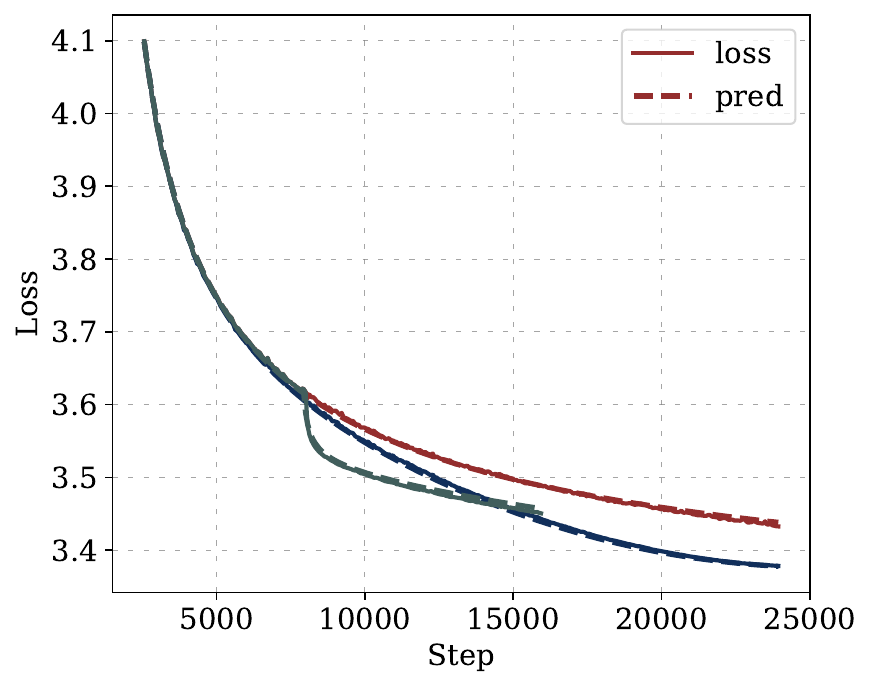}}
    \subfigure{\includegraphics[width=0.32\linewidth]{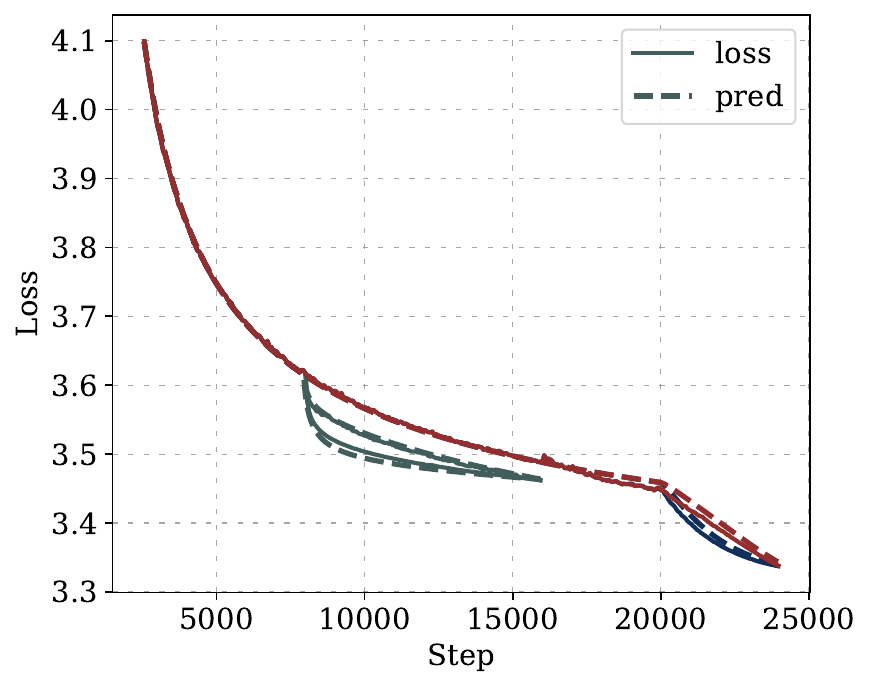}}
    \subfigure{\includegraphics[width=0.30\linewidth]{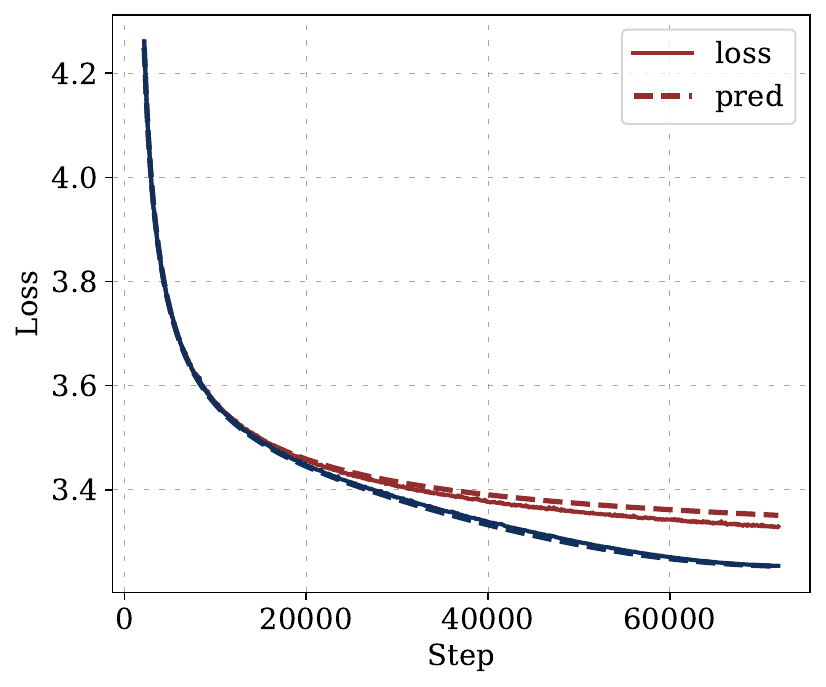}}
    \subfigure{\includegraphics[width=0.32\linewidth]{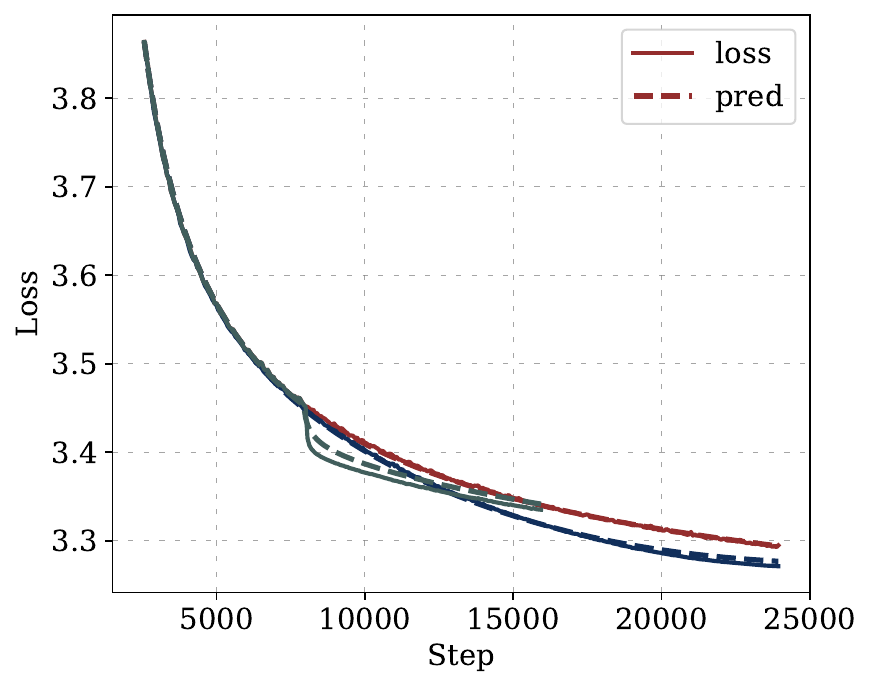}}
    \subfigure{\includegraphics[width=0.32\linewidth]{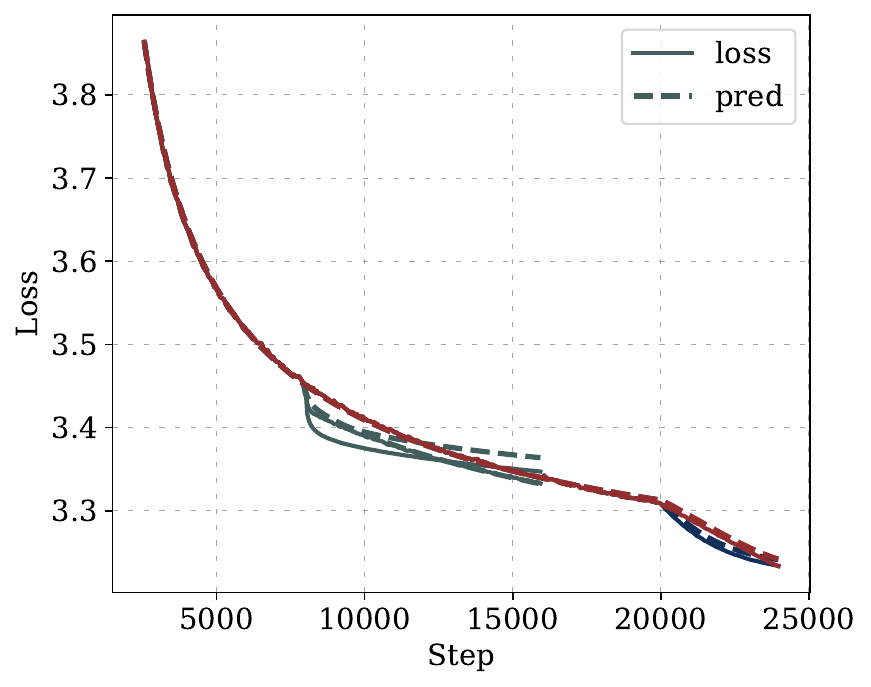}}
    \subfigure{\includegraphics[width=0.30\linewidth]{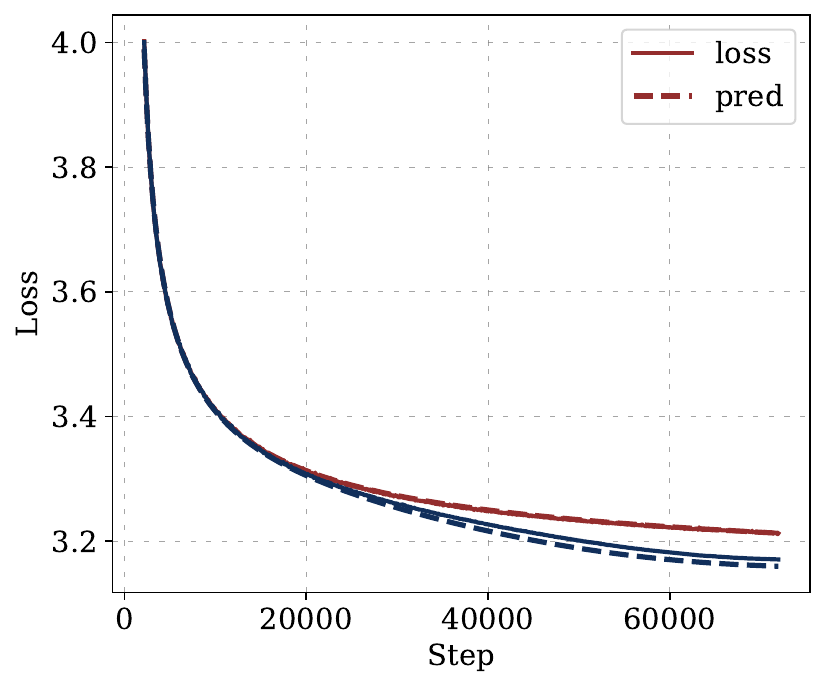}} 
    \caption{\small Ablation study on batch sizes, with \( R^2 \) values of 0.9977 (batch size 64) and 0.9973 (batch size 256). The subfigure layout is as follows. \textbf{Rows:} (1) Learning rate schedules, (2) Loss curves for batch size 64, (3) Loss curves for batch size 256. \textbf{Columns:} (1) Training set results, (2) Test set results (same horizon as training), (3) Test set results (extended horizon).}
    \label{fig:batch-size-ablation}
\end{figure}

\clearpage

\section{Details of LR Schedule Optimization (Section~\ref{sec:opt-lr-schedule})}
\label{app:optimizingLR-schedule}

\paragraph{Optimizing the Surrogate Objective.}
To enhance optimization stability, 
we redefine the learning rate schedule \(\LRS = \{\eta_1, \dots, \eta_T\}\) using $\Delta = \{\Delta_1, \Delta_2, \dots, \Delta_T\}$, where $\Delta_t = \eta_{t-1} - \eta_t$ and $\eta_0$ denotes the initial peak LR. Thus, $\eta_t = \eta_0 - \sum_{k=1}^{t} \Delta_k$, establishing a one-to-one mapping between \(\LRS\) and $\Delta$. We transform the objective \(\cL_{\Theta}(\LRS)\) from \eqref{equ optimization problem multi-power} into \(\Tilde{\cL}_{\Theta}(\Delta)\) and alternatively optimize:
\begin{align*}
    \min_{\Delta} \quad &\Tilde{\cL}_{\Theta}(\Delta) \\
    \text{s.t.} \quad
    &\Delta_t \ge 0, \quad \forall t = 1, \dots, T, \\
    &\sum_{t=1}^T \Delta_t \leq \eta_0.
\end{align*}
In practice, we enforce these constraints through clipping: after each optimization step, we restrict $\Delta_t$ into $[0, \eta_0]$ and set $\Delta_t=0$ when $\eta_t\leq \epsilon$, with $\epsilon=10^{-10}$, to ensure numerical stability. Applied to the MPL fitted from \Cref{sec:train-set}, this reformulation empirically stabilizes optimization by aligning learning rate reductions with zero initialization. For optimization, we use the Adam optimizer with a constant learning rate, grid searched from $2 \times 10^{-8}$ to $1 \times 10^{-9}$, over 50,000 to 200,000 for better convergence.

\begin{figure}[t]
    \centering
    \subfigure{\includegraphics[width=0.49\linewidth]{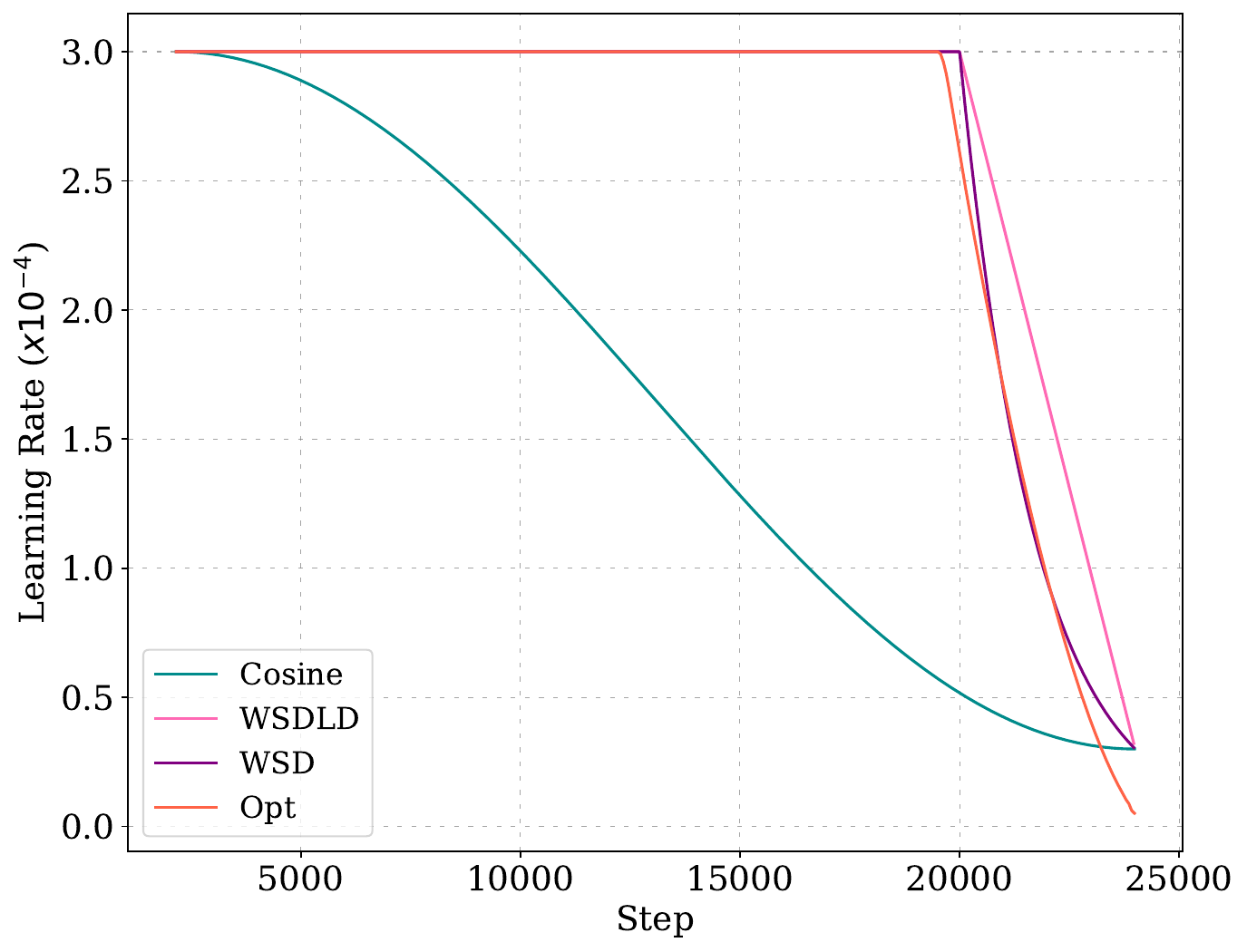}}
    \subfigure{\includegraphics[width=0.49\linewidth]{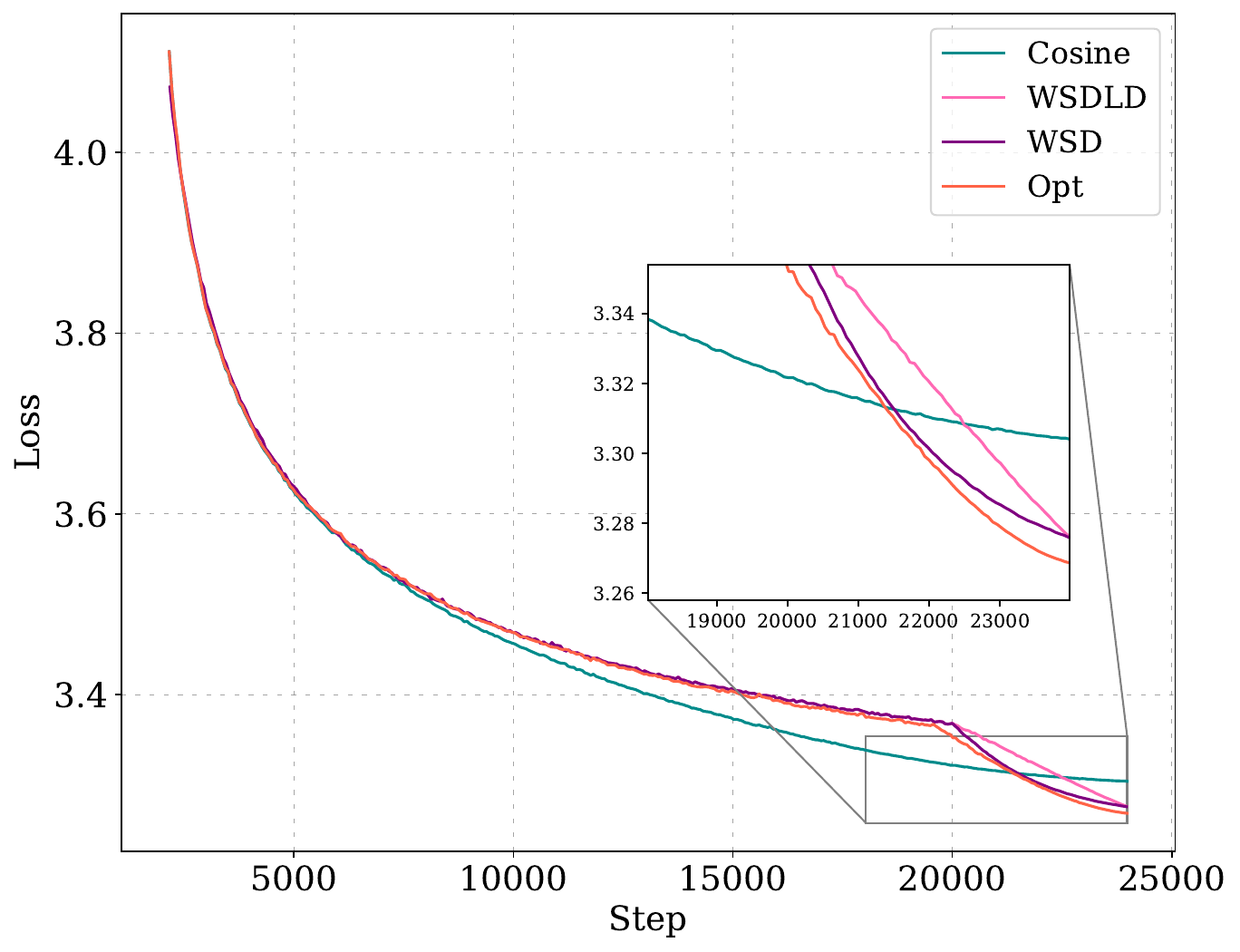}}
    \subfigure{\includegraphics[width=0.49\linewidth]{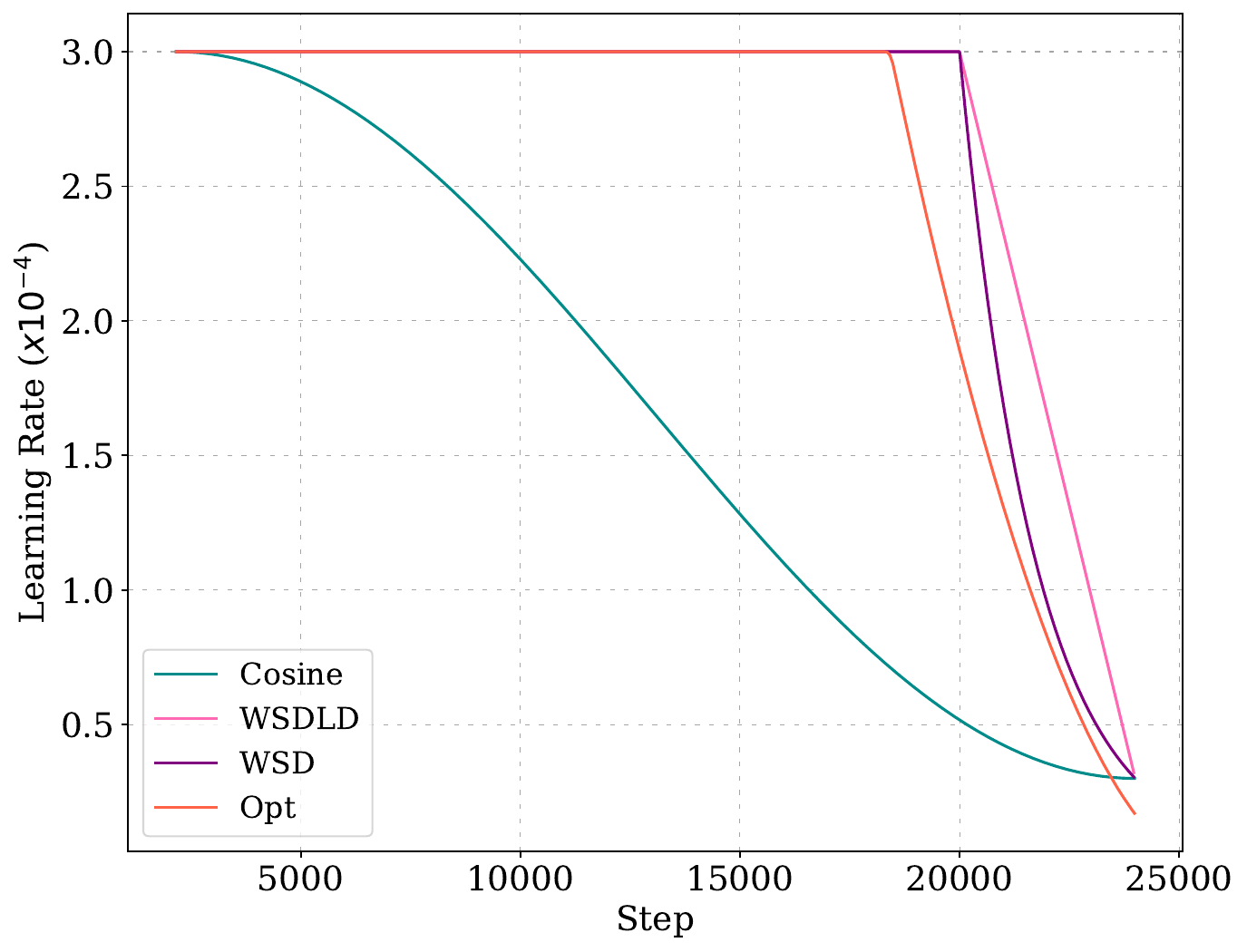}}
    \subfigure{\includegraphics[width=0.49\linewidth]{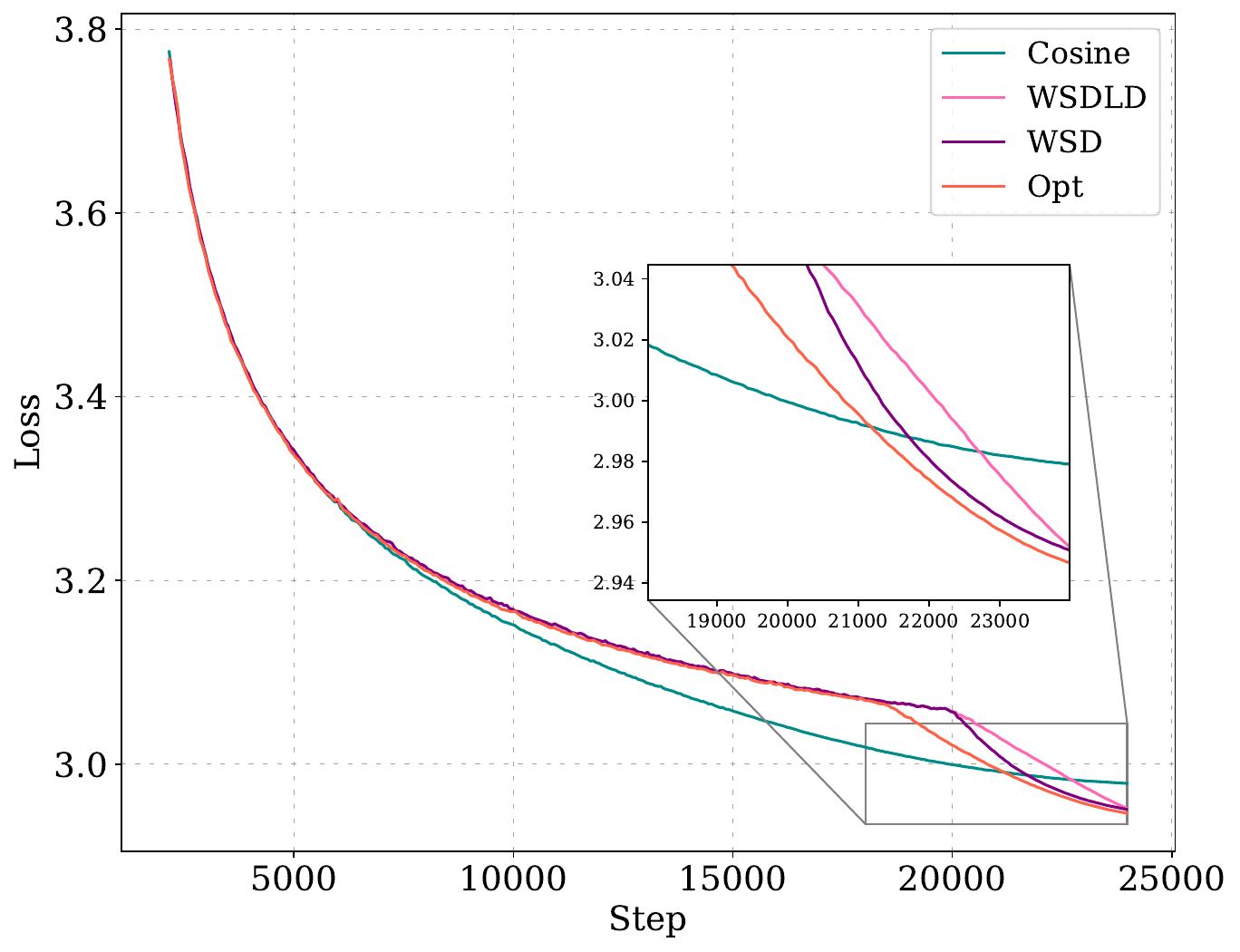}}
    \caption{\small Comparison of our optimized LR schedules and their loss curves with cosine, WSD, and WSDLD schedules over 24000 steps. The decay step for WSD and WSDLD is set to 4000. \textbf{Upper:} 25M model; \textbf{Lower:} 100M model. \textbf{Left:} Learning rates over steps. \textbf{Right:} Losses over steps.}
    \label{fig:opt-model-step}
\end{figure}

\paragraph{Optimized Schedule of Longer Horizons and Different Model Sizes.}\label{app:ablation-opt-lrs}

\begin{figure}[t]
    \vspace{-0.2in}
    \centering
    \subfigure{\includegraphics[width=0.48\linewidth]{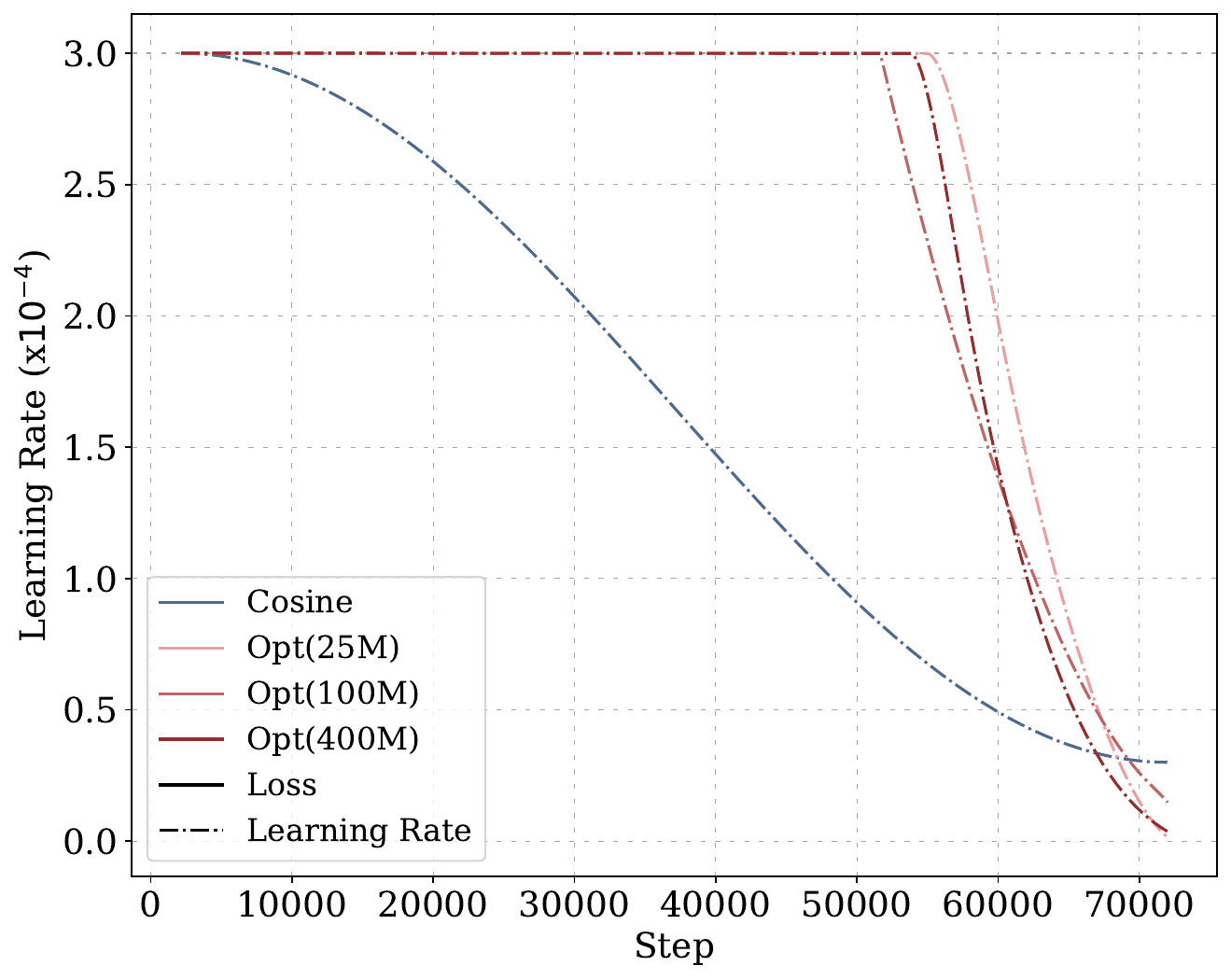}}
    \subfigure{\includegraphics[width=0.48\linewidth]{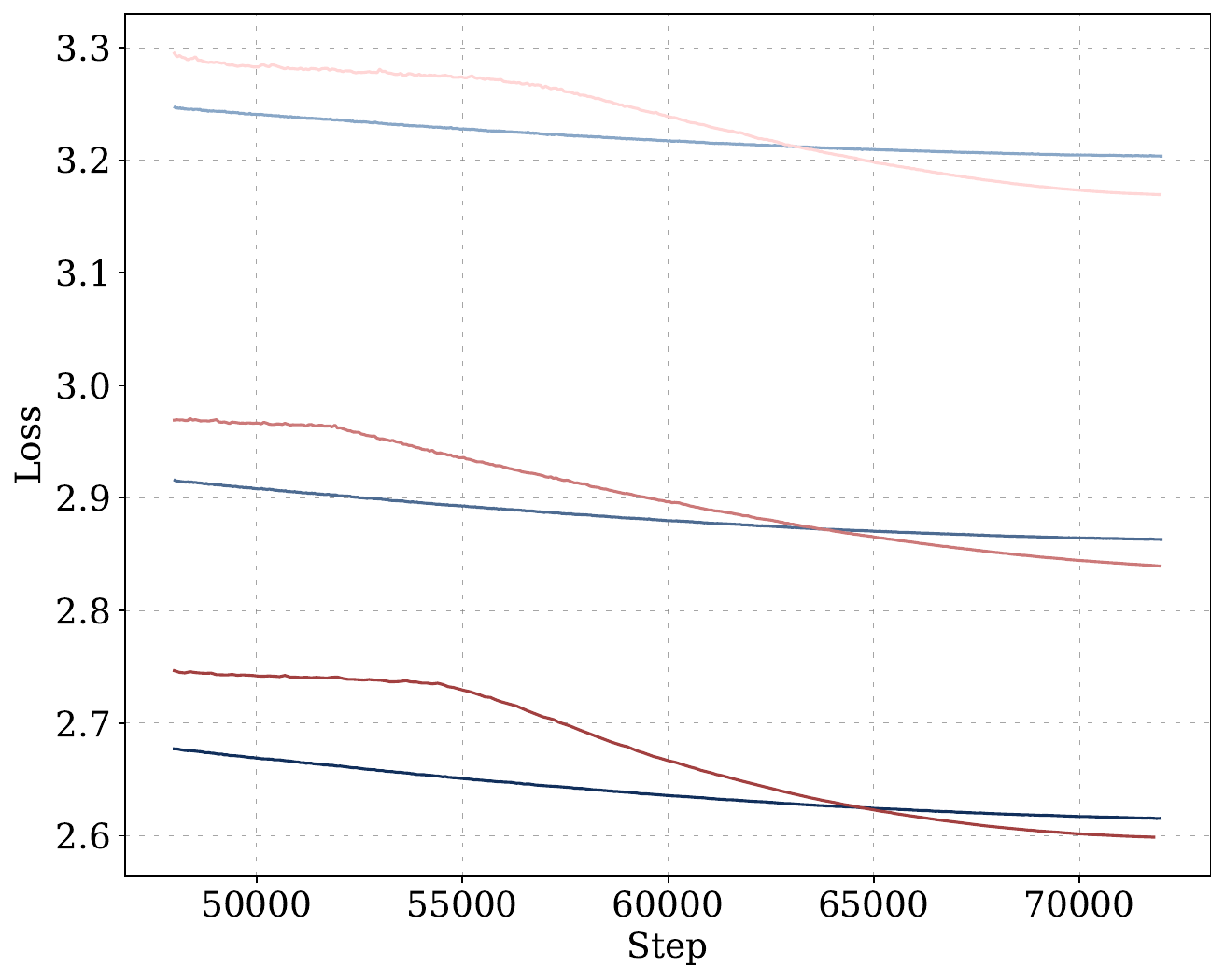}}
    \caption{\small \textbf{Left:} Optimized and cosine LR schedules over 72000 steps for models ranging from 25M to 400M. \textbf{Right:} Corresponding loss curves for optimized and cosine schedules.}
    \label{fig:long-opt}
\end{figure}

Beyond \Cref{fig:main-opt} and \Cref{fig:opt-model-step}, we validate the optimized schedules for extended horizons and different model sizes. For models ranging from 25M to 400M, we optimize LR schedules for 72000-step training based on the MPL fit over the training set. As shown in \Cref{fig:long-opt}, the resulting schedules exhibit a WSD-like shape, consisting of a stable phase and a decay phase, outperforming cosine schedules across sizes. For the 1B model, we derive a 72000-step schedule based on the MPL fitted from 24000-step constant and cosine schedule curves, with results in \Cref{fig:long-loss-1b-opt-opt} confirming superiority over the cosine schedule. Additionally, for the 1B model, we evaluate the downstream performance of the MPL-induced schedule against the cosine schedule across several tasks, including LAMBADA~\citep{paperno2016lambada}, HellaSwag~\citep{zellers2019hellaswag}, PIQA~\citep{bisk2020piqa}, ARC-easy~\citep{gu2023mamba, clark2018think}, \textbf{C}$^3$~\citep{sun2020investigating}, and RTE~\citep{wang2019superglue}. The MPL-induced schedule achieves an average score improvement of 1.03 compared to the cosine schedule, as shown in \Cref{tab:downstream_performance}. This highlights the effectiveness of the MPL-induced schedule in enhancing model performance across diverse downstream tasks.

\begin{figure}[t]
    \centering
    \subfigure{\includegraphics[width=0.48\linewidth]{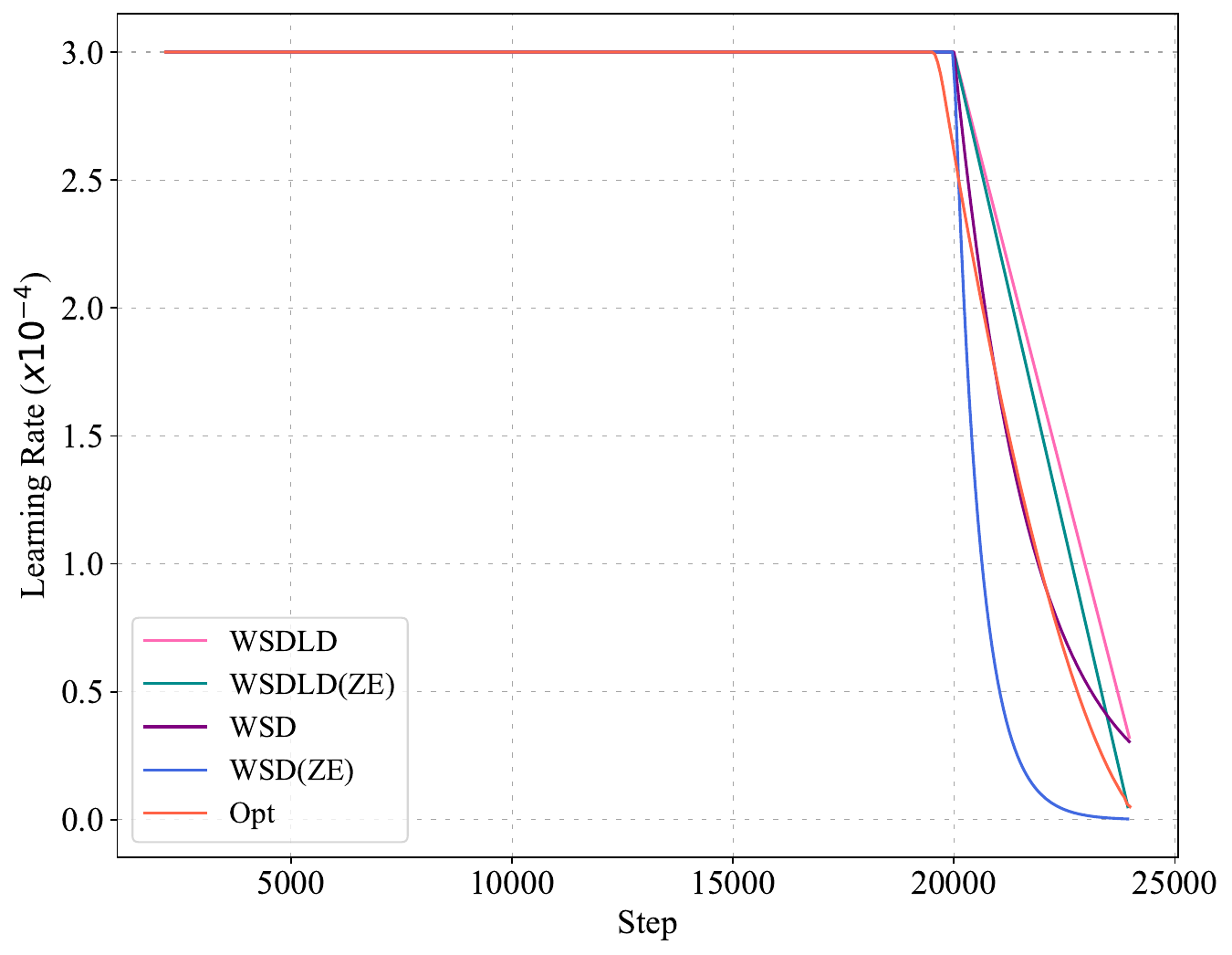}}
    \subfigure{\includegraphics[width=0.48\linewidth]{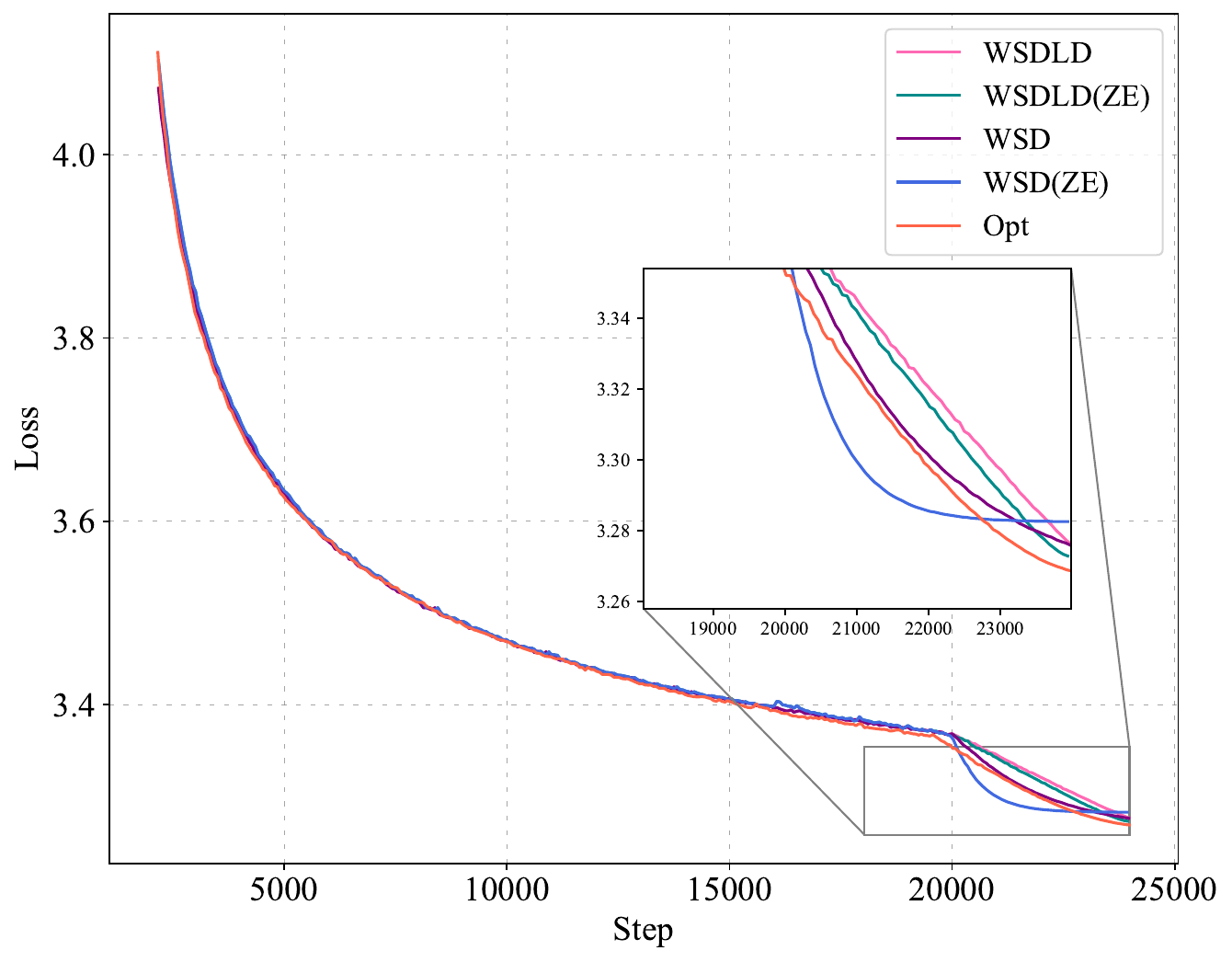}}
    \caption{\small Comparison between the optimized schedules and WSD variants with a near-zero ending LR. WSD (ZE) and WSDLD (ZE) denote WSD and WSDLD schedules with an ending learning rate of $3 \times 10^{-7}$. \textbf{Left:} Learning rate comparison. \textbf{Right:} Loss comparison.}
    \label{fig:zero-end}
\end{figure}

\paragraph{Zero-Ending Learning Rate Experiments.}\label{app:zero-end-lr} Optimized schedules consistently outperform WSD variants with near-zero learning rates ($3 \times 10^{-7}$, approximately 1/100 of the default setting). To test if a higher ending LR (e.g., $1/10$ peak LR) degrades baseline performance, we compare the optimized schedules against WSD(LD) variants with near-zero ending learning rates and the original ones. 
As shown in \Cref{fig:zero-end}, the optimized schedule still outperforms these WSD variants. In addition, lower ending learning rates do not consistently improve the final loss (e.g., zero-ending WSD exceeds baseline loss), suggesting a complex interaction between the ending learning rate and the decay function. This highlights the advantage of the optimized schedule in reducing the need for extensive hyperparameter tuning.

\paragraph{WSD with Sqrt-Cube Decay (WSDSC).}\label{app:wsdsc} 
We derive the decay function for optimized schedules by analyzing the decay phase across Llama2 models ranging from 25M to 400M. We compute normalized steps and learning rates (LRs) within the decay phase for schedules of varying step counts and model sizes. After averaging the normalized LRs, we perform symbolic regression against the normalized steps and approximate the decay function as $f(t-T_{\text{stable}}) = \left(\frac{T_{\text{total}}-t}{T_{\text{total}}-T_{\text{stable}}}\right)^{1.5}$. Validation experiments on 1B Llama2 and GPT models confirm its efficacy: \Cref{fig:long-loss-1b-opt-opt} shows that WSDSC outperforms the cosine schedule for the 1B model, though it falls short of the MPL-optimized schedule. \Cref{fig:gpt2-opt} demonstrates WSDSC’s superiority over both the standard WSD and cosine schedules for GPT.

\clearpage
\section{Optimal Learning Rate Schedule for Momentum Law}\label{sec: momentum-opt-LR schedule}

In this section, we derive the optimal learning rate schedules for the Momentum Law~\citep{tissue2024scaling}:
\begin{align*}
    L(T) = L_0 + A \cdot S_1^{-\alpha} - C \cdot S_2,
\end{align*}
where $S_1 = \sum_{t=1}^T \eta_t$ and $S_2 = \sum_{t=1}^T\sum_{k=1}^t (\eta_{k-1}-\eta_k) \cdot \lambda^{t-k}$. $\lambda$ is a hyperparameter typically ranges from $0.99$ to $0.999$, and $L_0,A,C > 0$ are parameters.

Similar to Section \ref{section optimal lr schedule}, here we could also optimize this law to get a learning rate schedule achieving lowest final loss by solving
\begin{align*}
    \min_{\eta_1,\eta_2,\dots,\eta_T}\quad &L_{\Xi} (\eta_1,\eta_2,\dots,\eta_T) \tag{A} \label{equ optimize problem for howe}\\
    \text{s.t.}\quad &0\le \eta_t \le \eta_{t-1},\ \forall 1\le t \le T,
\end{align*}
where $\Xi = \{ L_0, A, C, \lambda \}$ represents the hyperparameters and parameters in $L(T)$. 
For simplicity of derivation, we introduce $\eta_0$ in front of $\LRS$ as the maximal LR. Compared with our Multi-Power Law (MPL), this optimization problem is obviously convex, so we can characterize its minimizer more easily in math. 
In our result, the Momentum Law yields optimal schedules that go through a stable phase at peak LR and then directly drop to zero LR in no more than two steps.
However, these kinds of schedules are clearly far from the optimal schedules in practice. As a comparison, in \Cref{section optimal lr schedule}, MPL can induce a WSD-like schedule, which is empirically effective. 

Next, we formalize the above arguments mathematically. 

\begin{theorem}\label{thm;opt-lr-schedule}
    For any LR schedule $E^* := \{\eta_1^*,\dots,\eta_T^*\}$ that minimizes the optimization problem~\eqref{equ optimize problem for howe},
    there exists $k \in \{0, 1, \dots, T\}$ such that the following holds for all $t \in \{1, \dots, T\}$:
    \begin{enumerate}
        \item If $t \le k$, then $\eta_t^* = \eta_0$;
        \item If $t \ge k + 2$, then $\eta_t^* = 0$.
    \end{enumerate}
\end{theorem}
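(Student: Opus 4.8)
The plan is to reparametrize by the per-step learning-rate reductions and combine convexity with a discrete strict-convexity property of the gradient. Fix $\eta_0$ as the peak LR and set $\Delta_k:=\eta_{k-1}-\eta_k$ for $1\le k\le T$, so the feasible set $\{0\le\eta_t\le\eta_{t-1}\}$ becomes the polytope $P=\{\Delta:\Delta_k\ge 0,\ \sum_{k=1}^{T}\Delta_k\le\eta_0\}$ (the intermediate partial-sum constraints are implied). Writing $S_1=T\eta_0-\sum_{k=1}^{T}(T-k+1)\Delta_k$ and, via the telescoping identity from \Cref{app:mtl}, $S_2=\sum_{k=1}^{T}\Delta_k\,\frac{1-\lambda^{T-k+1}}{1-\lambda}$, the objective becomes $f(\Delta)=A\,S_1(\Delta)^{-\alpha}-C\,S_2(\Delta)$, which is convex on $P$ (a convex, decreasing function of an affine map, minus a linear term). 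Since $f\to+\infty$ only at the single vertex $\Delta=(\eta_0,0,\dots,0)$ and is smooth and finite elsewhere on the compact set $P$, a minimizer $\Delta^*$ exists with $S_1^*>0$; Slater's condition holds (take $\Delta_k\equiv\eta_0/(2T)$), so the KKT conditions are necessary at $\Delta^*$.

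The next step is to compute the gradient and isolate its structure. One gets $\partial f/\partial\Delta_k=g(T-k+1)$, where $g(m):=Pm-C\,\frac{1-\lambda^m}{1-\lambda}$ with $P:=A\alpha(S_1^*)^{-\alpha-1}>0$. The crucial observation is that $g$ is strictly convex on the integers: $g(m+1)-g(m)=P-C\lambda^m$ is strictly increasing in $m$ (as $0<\lambda<1$ and $C>0$), and moreover $g(0)=0$. A standard telescoping argument then yields, for integers $a<b<c$, the three-point inequality $g(b)<\frac{(c-b)g(a)+(b-a)g(c)}{c-a}$; in particular, for any constant $c_0$ the set $\{m\in\sZ:g(m)=c_0\}$ is contained in $\{m_0,m_0+1\}$ for some $m_0$.

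I would then read the schedule shape off the KKT conditions. With multiplier $\mu\ge 0$ for $\sum_k\Delta_k\le\eta_0$ and $\nu_k\ge 0$ for $\Delta_k\ge 0$, stationarity and complementary slackness give $g(T-k+1)\ge -\mu$ for every $k$, with equality whenever $\Delta_k^*>0$, and $\mu>0\Rightarrow\sum_k\Delta_k^*=\eta_0$. Hence the active set $I^*=\{k:\Delta_k^*>0\}$ maps injectively into $\{m:g(m)=-\mu\}$, which by the previous paragraph is contained in two consecutive integers; translating back, $I^*\subseteq\{k+1,k+2\}$ where $k:=\min I^*-1$ (and $k:=T$ if $I^*=\emptyset$), with $\Delta_{k+1}^*>0$ when $I^*\ne\emptyset$. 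Since $\Delta_1^*=\dots=\Delta_k^*=0$, this already gives $\eta_t^*=\eta_0$ for $t\le k$, and it makes $\eta_t^*$ constant, equal to $c^*:=\eta_0-\Delta_{k+1}^*-\Delta_{k+2}^*$, for all $t\ge k+2$.

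It remains to show $c^*=0$ whenever $k+2\le T$ (otherwise condition 2 is vacuous) and $I^*\ne\emptyset$ (if $I^*=\emptyset$ then $\eta^*\equiv\eta_0$ and one takes $k=T$). Suppose $c^*>0$; then $\sum_k\Delta_k^*=\eta_0-c^*<\eta_0$, so complementary slackness forces $\mu=0$, hence $g(m)\ge 0$ for all $m\in\{1,\dots,T\}$. But $\Delta_{k+1}^*>0$ gives $g(T-k)=-\mu=0$, so $g$ has roots at both $0$ and $T-k$ with $T-k\ge 2$; strict convexity then makes $g(m)<0$ for $0<m<T-k$, in particular $g(T-k-1)<0$ with $T-k-1\in\{1,\dots,T\}$ — contradicting $g(T-k-1)\ge 0$. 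Thus $c^*=0$, i.e. $\eta_{k+2}^*=0$, and since $\eta^*$ is non-increasing and nonnegative, $\eta_t^*=0$ for all $t\ge k+2$. I expect the last step to be the main obstacle: the ``at most two consecutive reductions'' structure drops out cleanly from discrete convexity, but excluding a schedule that settles at a positive LR floor requires combining complementary slackness with the special root $g(0)=0$ and careful bookkeeping of the degenerate cases ($I^*=\emptyset$, $k\ge T-1$, $\mu=0$).
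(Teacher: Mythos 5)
Your proposal is essentially the paper's own proof: the same reparameterization $\Delta_k=\eta_{k-1}-\eta_k$, the same KKT analysis, the same observation that the gradient has the form $K\,f(T-k+1)$ with $f(x)=x-M(1-\lambda^x)$ strictly convex and $f(0)=0$, the same ``active indices lie in at most two consecutive positions'' step, and the same complementary-slackness argument forcing the trailing learning rate to zero (the paper phrases this last step via a monotonicity property of $f$ after it becomes nonnegative, you via the two roots $0$ and $T-k$, which is the same mechanism). One small slip: your intermediate claim that every level set $\{m:g(m)=c_0\}$ of a (discretely) strictly convex function lies in two \emph{consecutive} integers is false in general (e.g.\ $g$ can take the same value at two non-adjacent integers); what you actually need, and already have available, is that at the optimum $g(T-k+1)\ge-\mu$ for \emph{all} $k$ with equality on the active set, so the active set sits in the integer argmin over the range, and your three-point inequality then does give at most two consecutive indices --- a one-line patch that leaves the argument intact.
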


For convenience, we first prove the following lemma.
\begin{lemma}\label{lemma:momentum-f-property}
    For a function $f(x)=x-M(1-\lambda^x)$ with $M>0$ and $0<\lambda<1$,
    we have the following properties:
    \begin{enumerate}
        \item $f(x)$ is strictly convex and has a unique minimizer over $x\in [0,\infty)$.
        \item If $f(y) \ge 0$ for some $y\in [0,\infty)$, then $f(x) \ge f(y)$ for all $x\in [y,\infty)$.
    \end{enumerate}
\end{lemma}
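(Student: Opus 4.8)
The plan is to analyze the function $f(x) = x - M(1-\lambda^x)$ directly, treating $M > 0$ and $0 < \lambda < 1$ as fixed, and establish the two claimed properties through elementary calculus on $[0,\infty)$.

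For the first property, I would compute derivatives: $f'(x) = 1 + M \lambda^x \ln\lambda$ and $f''(x) = M \lambda^x (\ln\lambda)^2$. Since $M > 0$, $\lambda^x > 0$, and $(\ln\lambda)^2 > 0$ (as $\lambda \neq 1$), we get $f''(x) > 0$ everywhere, so $f$ is strictly convex on $[0,\infty)$. For the existence and uniqueness of a minimizer, I would note that $f'$ is strictly increasing (by strict convexity), $f'(x) \to 1 > 0$ as $x \to +\infty$ (since $\lambda^x \to 0$), so either $f'(0) = 1 + M\ln\lambda \ge 0$, in which case $f' > 0$ on $(0,\infty)$ and the minimizer is $x = 0$; or $f'(0) < 0$, in which case there is a unique $x^* \in (0,\infty)$ with $f'(x^*) = 0$, and this is the unique minimizer. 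Either way strict convexity gives uniqueness of the minimizer over $[0,\infty)$.

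For the second property, suppose $f(y) \ge 0$ for some $y \in [0,\infty)$; I want $f(x) \ge f(y)$ for all $x \ge y$. The cleanest route: let $x^*$ be the unique minimizer from part 1. If $y \ge x^*$, then by strict convexity $f$ is strictly increasing on $[x^*,\infty) \supseteq [y,\infty)$, so $f(x) > f(y)$ for $x > y$ and we are done. The remaining case is $y < x^*$. Here I would use the hypothesis $f(y) \ge 0$: observe that $f(0) = 0$, and $f$ is strictly convex with minimizer $x^* > y \ge 0$, so $f$ is strictly decreasing on $[0, x^*]$; hence $f(y) \le f(0) = 0$, which combined with $f(y) \ge 0$ forces $f(y) = 0$, and strict monotonicity on $[0,x^*]$ then forces $y = 0$. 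So this case reduces to $y = 0$, where the claim is $f(x) \ge f(0) = 0$ for all $x \ge 0$ — but this need \emph{not} hold in general (if $f'(0) < 0$ then $f$ dips below $0$). Therefore I should instead argue more carefully: actually the correct statement is that the hypothesis $f(y) \ge 0$ with $y > 0$ already forces $y \ge x^*$, because on $(0, x^*)$ we have $f(x) < f(0) = 0$. So the only way $f(y) \ge 0$ with $y > 0$ is $y \ge x^*$ (using $f(x^*) \le f(0) = 0$ and $f$ strictly increasing past $x^*$, together with continuity to locate the unique positive root), reducing everything to the first case. I would phrase the write-up around this dichotomy: $f(0)=0$; $f<0$ strictly on $(0,x^*)$; $f$ strictly increasing on $[x^*,\infty)$; hence $\{x \ge 0 : f(x) \ge 0\} = \{0\} \cup [\,x_1, \infty)$ for the unique $x_1 \ge x^*$ with $f(x_1)=0$ (or just $\{0\}$ if no such positive root, but then no $y>0$ satisfies the hypothesis), and on each connected piece relevant to "$x \ge y$" monotonicity gives the result.

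The main obstacle is the second property: one must handle the subtlety that $f$ can be negative on an initial interval, so "$f(y)\ge 0$" is a genuine constraint that pins $y$ to lie at or beyond the minimizer (or at $y=0$, an edge case that must be checked separately — and at $y=0$, $f(0)=0$ so the claim $f(x)\ge f(0)$ fails unless $f'(0)\ge 0$; I suspect the intended reading excludes $y=0$ or the downstream use only invokes $y>0$, so I would state the lemma's second part for the regime actually needed and note the $y=0$ caveat). Everything else — the convexity computation and the existence/uniqueness of the minimizer — is routine single-variable calculus.
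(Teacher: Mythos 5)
Your proposal is correct and follows essentially the same route as the paper's own proof: compute $f'(x)=1+M\lambda^x\log\lambda$ and $f''(x)=M\lambda^x(\log\lambda)^2>0$ for strict convexity, split on the sign of $f'(0)$ to locate the unique minimizer, and use monotonicity beyond the minimizer (equivalently, beyond the positive zero of $f$) for the second property. The caveat you raise at $y=0$ is genuine rather than a defect of your argument: since $f(0)=0$, the hypothesis $f(y)\ge 0$ is always met at $y=0$, yet in the regime $f'(0)<0$ one has $f(x^*)<0=f(0)$, so part 2 as literally stated fails there; the paper's proof has the same blind spot, as it infers ``$f(y)\ge 0 \Rightarrow y\ge\tilde{x}$'' while silently ignoring $y=0$. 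This does not affect anything downstream, because the lemma is invoked only in the proof of the momentum-law schedule theorem with $y=1$ and $f(1)>0$, where your corrected statement (restricting to $y>0$, for which $f(y)\ge 0$ indeed forces $y$ past the minimizer) applies verbatim. One trivial remark: in your case 2 the positive root of $f$ always exists since $f(x)\ge x-M\to\infty$, so the hedge about ``no such positive root'' can be dropped.
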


\begin{proof}

First, it is easy to check 
\begin{equation*}
    f(0)=0,\ 
    \frac{df}{dx}(x)=1+M\lambda^x\log \lambda, \ 
    \frac{d^2f}{dx^2}(x)=M\lambda^x(\log \lambda)^2>0.
\end{equation*}
Therefore, $f(x)$ is strictly convex. Then we can discuss the property of $f(x)$ over $x\in(0, \infty)$ by discussing $\frac{df}{dx}(0)$. 
\begin{enumerate}[(1)]
    \item When $\frac{df}{dx}(0)\ge 0$, $\frac{df}{dx}(x')>\frac{df}{dx}(0)\ge 0$ for all $x'\in (0,x)$. Thus, $f(x)$ is monotonically increasing over $(0,x)$ and $f(x)>f(0)=0$. So $x=0$ is the unique minimizer over $x\in [0, \infty)$ and $f(x)\ge f(y)$ when $x\ge y \ge 0$.
    \item When $\frac{df}{dx}(0)<0$, $\lim_{x\to\infty}\frac{df}{dx}(x)=1$. Then there exists $x^*\in (0,\infty)$ such that $\frac{df}{dx}(x^*)=0$. Thus, $f(x)$ monotonically decreases over $(0,x^*)$ and monotonically increases over $(x^*,\infty)$. Hence $x^*$ is the unique minimizer over $x\in [0,\infty)$. Moreover, $f(x^*)<f(0)=0$ and $\lim_{x\to\infty}f(x)=\infty$, so there exists $\tilde{x}\in (x^*,\infty)$, such that $f(x)<0$ over $(0,\tilde{x})$ and $f(x)> 0$ over $(\tilde{x},\infty)$. Clearly, $f(x)$ monotonically increases over $x\in [\tilde{x}, \infty)$. Therefore, if $f(y)\ge 0$ for some $y\in [0,\infty)$, then $y\ge \tilde{x}$ and $f(x)\ge f(y)$ for all $x\in [y,\infty)$.
\end{enumerate}
This completes the proof.
\end{proof}

Next, we prove \Cref{thm;opt-lr-schedule}.
\begin{proof}[Proof for \Cref{thm;opt-lr-schedule}]
First, we reparameterize $\eta_t$ as $\eta_t = \eta_0 - \sum_{k=1}^{t} \Delta_k$, then the optimization problem~\eqref{equ optimize problem for howe} becomes
\begin{align*}
        \min_{\Delta_1,\Delta_2,\dots,\Delta_T}\quad &\hat{L}_{\Xi} (\Delta_1,\Delta_2,\dots,\Delta_T)\\
    \text{s.t.} \quad & \Delta_t \ge 0, \quad \forall 1\le t \le T,\\
&\sum_{i=1}^T \Delta_i \le \eta_0,
\end{align*}
where $\hat{L}_{\Xi} (\Delta_1,\Delta_2,\dots,\Delta_T)$ is given by
\begin{align*}
    \hat{L}_{\Xi}(\Delta_1,\Delta_2,\dots,\Delta_T) = L_0 + A \cdot \left(T\eta_0 - \sum_{t=1}^T\sum_{k=1}^t \Delta_k\right)^{-\alpha} - C \cdot \sum_{t=1}^T\sum_{k=1}^t \Delta_k \lambda^{t-k}.
\end{align*}

Define the Lagrangian by
\[
\mathcal{L}(\Delta, \lambda, \mu) = \hat{L}_{\Xi} (\Delta_1,\dots,\Delta_T) - \sum_{t=1}^T \lambda_t \Delta_t + \mu\Biggl(\sum_{t=1}^T \Delta_t - \eta_0\Biggr),
\]
where $\lambda_1, \dots, \lambda_T$ and $\mu$ are the Lagrange multipliers associated with the constraints $\Delta_t \ge 0$ and $\sum_{i=1}^T \Delta_t \le \eta_0$, respectively. 
By Karush-Kuhn-Tucker (KKT) conditions, there exist $\lambda_1, \dots, \lambda_T \ge 0$ and $\mu \ge 0$ such that the following conditions hold:
\begin{itemize}
    \item Complementary Slackness: $\lambda_t \Delta_t = 0$ for all $t=1,\dots,T$ and $\mu\Biggl(\sum_{t=1}^T \Delta_t - \eta_0\Biggr) = 0$.
    \item Stationary: $\frac{\partial \hat{L}_{\Xi}}{\partial \Delta_t} (\Delta_1,\dots,\Delta_T) - \lambda_t + \mu = 0$ for all $t=1,\dots,T$.
\end{itemize}
Here, we have
\begin{align*}
    \frac{\partial \hat{L}_{\Xi}}{\partial \Delta_t} &= \alpha A \Phi^{-\alpha-1} \cdot (T-t+1)- C \cdot (\lambda^0+\lambda^1+\dots+\lambda^{T-t})\\
    &=\alpha A \Phi^{-\alpha-1} \cdot (T-t+1)- C \cdot \frac{1-\lambda^{T-t+1}}{1-\lambda}\\
    &=K f(T - t + 1),
\end{align*}
where $\Phi:= T\eta_0 - \sum_{t=1}^T\sum_{k=1}^t \Delta_k$,
$K := \alpha A \Phi^{-\alpha-1} > 0$,
and $f(x) := x - M(1-\lambda^x)$ with $M:=\frac{C}{(1-\lambda)K} > 0$. 

Note that $\Phi$ does not depend on $t$. We can rewrite the stationary condition as
\begin{align*}
    \lambda_t = K f(T - t + 1) + \mu.
\end{align*}
By~\Cref{lemma:momentum-f-property}, $f(x)$ is strictly convex and has a unique minimizer over $x\in [0,\infty)$.
Let $x^*$ be this unique minimizer.
Let $f_{\min} := \min_{t \in \{1, \dots, T\}}\{ f(T - t + 1) \}$
be the minimum value of $f(T - t + 1)$,
and $S$ be the set of indices that minimize $f(T - t + 1)$. Then $\abs{S} \le 2$ and $S \subseteq \{\lfloor T - x^* + 1 \rfloor, \lceil T - x^* + 1 \rceil\}$.

Now we discuss the following two cases by the value of $K f_{\min} + \mu$.

\paragraph{Case 1.} If $K f_{\min} + \mu > 0$, then $\lambda_t > 0$ for all $t=1,\dots,T$. By the complementary slackness condition, $\Delta_t = 0$ for all $t=1,\dots,T$.
This implies that $E^*$ is a constant schedule, $\eta_0 = \eta^*_1 = \eta^*_2 = \cdots = \eta^*_T$.

\paragraph{Case 2.} If $K f_{\min} + \mu = 0$, then $\lambda_t = 0$ for all $t \in S$ and $\lambda_t > 0$ for all $t \notin S$. By the complementary slackness condition, the latter implies that $\Delta_t = 0$ for all $t \notin S$.
Then $E^*$ falls into one of the two categories:
\begin{enumerate}
    \item If $S = \{s\}$ for some $s$, then $\eta_0 = \eta^*_1 = \eta^*_2 = \cdots = \eta^*_{s-1}$ and $\eta^*_s = \eta^*_{s+1} = \cdots = \eta^*_T$;
    \item If $S = \{s-1, s\}$ for some $s$, then $\eta_0 = \eta^*_1 = \eta^*_2 = \cdots = \eta^*_{s-2}$ and $\eta^*_{s} = \eta^*_{s+1} = \cdots = \eta^*_T$.
\end{enumerate}
We claim that $\eta^*_T = 0$ if $s < T$. If not, then we have $\mu = 0$ due to the complementary slackness condition. Moreover, $s<T$ implies $T\notin S$, and then we have $0 < \lambda_T = Kf(1) + \mu = Kf(1)$. 
By~\Cref{lemma:momentum-f-property}, $f(x) \ge f(1) > 0$ for all $x \ge 1$, which implies that $\lambda_t = Kf(T-t+1) + \mu = Kf(T-t+1) > 0$ for all $t=1,\dots,T$, which contradicts the fact that $\lambda_t = 0$ for all $t \in S$.

Putting all these together, we conclude that $E^*$ must exhibit the pattern described in the theorem.
\end{proof}

\newpage
\section{Proof of Theorem \ref{thm:multi-power}} \label{appendix theory multi power}

The proof of \Cref{thm:multi-power} consists of two main parts. First, we explicitly derive the formula for any quadratic loss function after $t$ steps, without making~\Cref{asp:power-spectra}.
Then, we take the expectation over $\mSigma$, $\mH$, and the initialization $\vtheta_0$ to prove the theorem.

WLOG, we assume that $\mH = \diag(\lambda_1, \dots, \lambda_d)$, and set $\vtheta_* = 0$. Otherwise, due to the rotational and translational invariance of SGD,
we can always transform the coordinate system so that the eigenbasis of $\mH$ is the standard basis, and the optimal solution is at the origin.
In this case, $\Sigma_{ii}$ is just the $i$-th diagonal entry of $\mSigma$ in the standard basis.

\subsection{General Theorem for All Quadratic Loss Functions}

In the first part, we establish the following theorem that characterizes the expected loss. We use $\Phi(\vtheta_0, \LRS)$ to denote the distribution of the $T$-th iteration $\vtheta_T$ of SGD with initialization $\vtheta_0$ and LR schedule $\LRS := \{\eta_1, \dots, \eta_T\}$.
\begin{theorem} \label{thm:time-varying}    
    For $\vtheta_T \sim \Phi(\vtheta_0, \LRS)$ and any fixed $\eta_0 > 0$, we have the following estimate of $\E[\cL(\vtheta_T)]$:
    \begin{align*}
        M(\vtheta_0, \LRS) := &\frac{1}{2} \sum_{i=1}^{d} \left(
            \theta_{0,i}^2 \lambda_i \exp(-2\lambda_i S_1)
            + \eta_0 \Sigma_{ii} \cdot \frac{1 - \exp(-2\lambda_i S_1)}{2}
        \right)  \\
        &- \frac{1}{2}\sum_{k=1}^{T} (\eta_{k-1}-\eta_k)
        \sum_{i=1}^{d} \Sigma_{ii}\frac{1-\exp(-2\lambda_iS_k)}{2} ,
    \end{align*}
    where $S_k := \sum_{\tau = k}^{T} \eta_{\tau}$.
    The estimation error is bounded as
    \begin{align*}
        \labs{\E[\cL(\vtheta_T)] - M(\vtheta_0, \LRS)} \le 
        5\eta_{\max} \sum_{i=1}^{d} \lambda_i^3S_1 \exp(-2\lambda_iS_1) \theta_{0,i}^2
        + \frac{15}{2} \eta_{\max}^2 \sum_{i=1}^{d} \Sigma_{ii}\lambda_i,
    \end{align*}
    where $\eta_{\max} := \max_{0 \le k \le T} \eta_k$.
\end{theorem}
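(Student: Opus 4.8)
The plan is to track the SGD iterates coordinate-wise in the eigenbasis of $\mH$, which decouples the dynamics. Writing $\theta_{t,i}$ for the $i$-th coordinate, the recursion $\vtheta_t = \vtheta_{t-1} - \eta_t \vg_t$ with $\vg_t \sim \gN(\mH\vtheta_{t-1}, \mSigma)$ gives $\theta_{t,i} = (1-\eta_t\lambda_i)\theta_{t-1,i} - \eta_t \xi_{t,i}$, where $\xi_{t,i}$ has variance $\Sigma_{ii}$ (cross-coordinate correlations do not matter since the loss only sees the diagonal). Hence $\E[\theta_{t,i}^2]$ satisfies a scalar linear recursion: $\E[\theta_{t,i}^2] = (1-\eta_t\lambda_i)^2 \E[\theta_{t-1,i}^2] + \eta_t^2 \Sigma_{ii}$, and $\E[\cL(\vtheta_T)] = \frac12\sum_i \lambda_i \E[\theta_{T,i}^2]$. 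So the first step is to solve (or at least tightly estimate) this recursion in closed form: the homogeneous part contributes $\theta_{0,i}^2 \prod_{t=1}^{T}(1-\eta_t\lambda_i)^2$, and the inhomogeneous part is $\Sigma_{ii}\sum_{k=1}^{T}\eta_k^2\prod_{t=k+1}^{T}(1-\eta_t\lambda_i)^2$.

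The second step is to approximate the discrete products $\prod_{t=k+1}^{T}(1-\eta_t\lambda_i)^2$ by the exponential $\exp(-2\lambda_i S_{k+1})$ where $S_{k+1} = \sum_{\tau=k+1}^{T}\eta_\tau$, using $\log(1-x) = -x + O(x^2)$ valid for small $\eta_{\max}\lambda_i$ (here $\lambda_i \le \Lambda$ is bounded, and $\eta_{\max}$ small is a hypothesis; one should be slightly careful that $\eta_t\lambda_i < 1$ so the factors are positive, which holds once $\eta_{\max}\Lambda < 1$). The accumulated error in the exponent is $O(\eta_{\max}\lambda_i^2 S_{k+1})$, which when multiplied by $\lambda_i$ and summed produces the first error term $\eta_{\max}\sum_i \lambda_i^3 S_1 \exp(-2\lambda_i S_1)\theta_{0,i}^2$ for the homogeneous piece. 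For the noise piece, the key manipulation is Abel summation / summation by parts: one rewrites $\sum_{k=1}^{T}\eta_k^2 \exp(-2\lambda_i S_{k+1})$ — or rather first $\sum_k \eta_k \exp(-2\lambda_i S_{k+1})$ — as a telescoping-type sum. Concretely, $\eta_k \approx \tfrac{1}{2\lambda_i}(\exp(2\lambda_i\eta_k)-1)\exp(-2\lambda_i\eta_k)$ lets $\eta_k\exp(-2\lambda_i S_k)$ collapse: $\sum_{k=1}^{T}\eta_k e^{-2\lambda_i S_k} \approx \frac{1}{2\lambda_i}(1 - e^{-2\lambda_i S_1})$, with an $O(\eta_{\max}^2\lambda_i)$-per-term correction giving the $\eta_{\max}^2\sum_i \Sigma_{ii}\lambda_i$ error term. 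This produces the ``$\eta_0\Sigma_{ii}\cdot\frac{1-\exp(-2\lambda_i S_1)}{2}$'' main term after multiplying by $\frac{\lambda_i}{2}$ — wait, the prefactor $\eta_0$ appears because the target expression is written relative to a reference $\eta_0$; the honest computation gives $\frac{\Sigma_{ii}}{4\lambda_i}\cdot 2\lambda_i\cdot(\dots)$, and the $\eta_0$-dependent reshuffling into $L_0$ plus the $(\eta_{k-1}-\eta_k)$ sum is an algebraic rearrangement: one writes $\eta_k = \eta_0 - \sum_{j=1}^{k}(\eta_{j-1}-\eta_j)$ and expands.

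The third step is precisely this rearrangement: substitute the Abel-summation identity and split $\sum_k \eta_k^2 \prod(\cdots)$ using $\eta_k^2$, or more cleanly split the running sum $S_k$ versus $S_1$. Grouping terms so that the ``full decay'' part $\frac{1-\exp(-2\lambda_i S_1)}{2}$ carries a coefficient $\eta_0$ and each learning-rate drop $\eta_{k-1}-\eta_k$ carries $-\frac{1-\exp(-2\lambda_i S_k)}{2}$ is bookkeeping that mirrors the empirical decomposition in Section~3, and it should fall out once the geometric-sum identity is in hand. Finally, collecting all approximation errors — exponent-linearization error on both homogeneous and inhomogeneous parts, and the Abel-summation remainder — gives the stated bound $5\eta_{\max}\sum_i \lambda_i^3 S_1 e^{-2\lambda_i S_1}\theta_{0,i}^2 + \frac{15}{2}\eta_{\max}^2\sum_i \Sigma_{ii}\lambda_i$; the explicit constants $5$ and $15/2$ are obtained by being careful with the $O(\cdot)$ constants (e.g. bounding $|\log(1-x)+x| \le x^2$ for $x\le 1/2$ and $|e^x - 1 - x|\le x^2$ for $|x|\le 1$, then tracking how many such terms accumulate).

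I expect the main obstacle to be the Abel-summation step for the noise term: one must convert a sum $\sum_k \eta_k^2 \prod_{t>k}(1-\eta_t\lambda_i)^2$ into the clean telescoped form while controlling the error uniformly in $\lambda_i \in [0,\Lambda]$ — including the delicate regime $\lambda_i \to 0$, where $\frac{1-e^{-2\lambda_i S_k}}{2\lambda_i} \to S_k$ and the naive bounds with $1/\lambda_i$ factors blow up, so one needs the combination $\lambda_i \cdot \frac{1-e^{-2\lambda_i S_1}}{2\lambda_i}$ to stay bounded and the error estimates to be phrased in a $\lambda_i$-uniform way (using $\lambda_i S_1 e^{-2\lambda_i S_1} = O(1)$ and $x(1-e^{-x}) \le x$, etc.). Once that uniformity is handled, the rest is the linearization of products (standard) plus the algebraic regrouping (mechanical).
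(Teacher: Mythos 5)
Your proposal is sound and would yield the theorem, but it is organized differently from the paper's proof. The paper never writes down the closed-form solution of the variance recursion: instead it introduces the potential $U(\vtheta,\eta,S)=\frac12\sum_i\bigl(\theta_i^2\lambda_i e^{-2\lambda_i S}+\eta\Sigma_{ii}\frac{1-e^{-2\lambda_i S}}{2}\bigr)$, writes $\E[\cL(\vtheta_T)]=A_0+\sum_{k=1}^T(A_k-A_{k-1})$ with $A_k=\E[U(\vtheta_k,\eta_k,S_{k+1})]$, and shows each one-step difference equals $-\frac12(\eta_{k-1}-\eta_k)\sum_i\Sigma_{ii}\frac{1-e^{-2\lambda_i S_k}}{2}$ up to an error $\epsilon_k$ controlled via the bracket $(1-x)^2=e^{-2x}(1+\xi x^2)$, $\xi\in[-10,0]$, a Darboux-sum bound on $\sum_t\eta_t e^{-2\lambda_i S_t}$, and a bound on the intermediate moments $\E[\theta_{k-1,i}^2]$. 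Your route — solve the recursion exactly (variation of constants), replace $\prod_{t>k}(1-\eta_t\lambda_i)^2$ by $e^{-2\lambda_i S_{k+1}}$, then use Abel summation plus $1-e^{-2\lambda_i\eta_k}\approx 2\lambda_i\eta_k$ to recast $\sum_k\eta_k^2 e^{-2\lambda_i S_{k+1}}$ into the stated $\eta_0$-plus-$(\eta_{k-1}-\eta_k)$ form — is algebraically equivalent (your bracket indeed collapses to $\sum_k\eta_k e^{-2\lambda_i S_{k+1}}(1-e^{-2\lambda_i\eta_k})$), and your identification of the delicate point ($\lambda$-uniform error control, using $\lambda_i S e^{-2\lambda_i S}=O(1)$ and $\sum_k\eta_k e^{-2\lambda_i S_k}\lesssim 1/\lambda_i$) is exactly where the paper's Lemmas on Darboux sums do the work. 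What each approach buys: the paper's telescoping produces the $(\eta_{k-1}-\eta_k)$-indexed loss-reduction form directly and mirrors the auxiliary-process decomposition used in the empirical derivation, at the cost of needing the intermediate second-moment bound; your explicit solution avoids that bound for the main term and makes the structure of the noise sum transparent, at the cost of an extra rearrangement step. One caveat: your sketch asserts the specific constants $5$ and $\frac{15}{2}$ will "fall out," but you have not tracked them, and constant-chasing here is genuinely fiddly (the paper spends three lemmas on it, with explicit brackets like $\xi\in[-10,0]$ and factors $4$, $\frac12$ in the moment and Darboux bounds); if your accounting produced larger constants the theorem as stated would not be established, so that bookkeeping would need to be carried out, not just asserted.
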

To prove the theorem, we first introduce some notations and auxiliary expectations. 
We define
\begin{align*}
    U(\vtheta, \eta, S) := \frac{1}{2}
    \sum_{i=1}^{d}
    \left( \theta_{i}^2 \lambda_i \exp(-2\lambda_i S)
    + \eta \Sigma_{ii} \cdot \frac{1 - \exp(-2\lambda_i S)}{2} \right).
\end{align*}
We decompose the expected loss $\E_{\vtheta_T \sim \Phi(\vtheta_0, \LRS)}[\cL(\vtheta_T)]$ into a telescoping sum of $T+1$ auxiliary expectations $A_0, A_1, \dots, A_T$:
\begin{equation}
    \begin{aligned}
        \E_{\vtheta_T \sim \Phi(\vtheta_0, \LRS)}[\cL(\vtheta_T)]
        &= A_0 + \sum_{k=1}^{T} (A_{k} - A_{k-1}), \\
        A_k
        &:= \E_{\vtheta_k \sim \Phi(\vtheta_0, E_{\le k})}[U(\vtheta_k, \eta_k, S_{k+1})], 
    \end{aligned}\label{equation update rule}
\end{equation}
where we define $S_{T+1} = 0$, so $A_T = \E_{\vtheta_T \sim \Phi(\vtheta_0, \LRS)}[\cL(\vtheta_T)]$. 

The above theorem needs the following two lemmas.
\begin{lemma}\label{lm:exp-2x}
    If $x \in [0, 1]$,
    then 
    \begin{align*}
        \exists \xi_1 \in [-10, 0] \quad \text{s.t.} \quad &(1-x)^2 = \exp(-2x)(1 + \xi_1 x^2), \\
        \exists \xi_2 \in [-10, 0] \quad \text{s.t.} \quad &(1-2x) = \exp(-2x)(1 + \xi_2 x^2).
    \end{align*}
\end{lemma}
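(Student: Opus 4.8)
The plan is to treat the two identities separately but with the same template, because in each case the claimed relation, for $x \neq 0$, simply \emph{defines} the constant: for the first identity $\xi_1 = (e^{2x}(1-x)^2 - 1)/x^2$, and for the second $\xi_2 = (e^{2x}(1-2x) - 1)/x^2$, while at $x = 0$ both sides equal $1$ and any choice of $\xi_j$ works. So it suffices to show that each of these two explicit ratios lies in $[-10, 0]$ for $x \in (0,1]$.

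For the upper bound $\xi_j \le 0$, I would show that $h_1(x) := e^{2x}(1-x)^2$ and $h_2(x) := e^{2x}(1-2x)$ are non-increasing on $[0,1]$ with $h_j(0) = 1$, so that $h_j(x) \le 1$ and the numerators $h_j(x) - 1$ are non-positive. This is immediate from $h_1'(x) = -2x e^{2x}(1-x)$ and $h_2'(x) = -4x e^{2x}$, both of which are manifestly $\le 0$ on $[0,1]$.

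For the lower bound $\xi_j \ge -10$, i.e.\ $h_j(x) \ge 1 - 10x^2$, I would introduce the auxiliary functions $r_1(x) := e^{2x}(1-x)^2 - 1 + 10x^2$ and $r_2(x) := e^{2x}(1-2x) - 1 + 10x^2$ and show $r_j \ge 0$ on $[0,1]$. Both satisfy $r_j(0) = 0$. We have $r_1'(x) = 2x\bigl(10 - e^{2x}(1-x)\bigr)$, and since $e^{2x}(1-x) \le e^2 < 10$ on $[0,1]$ this gives $r_1' \ge 0$, hence $r_1 \ge r_1(0) = 0$. For $r_2$ we compute $r_2'(x) = 4x\bigl(5 - e^{2x}\bigr)$, which is $\ge 0$ for $x \le \tfrac12\ln 5$ and $\le 0$ afterward; thus $r_2$ increases and then decreases on $[0,1]$, so its minimum is attained at an endpoint, and $r_2(0) = 0$ together with $r_2(1) = 9 - e^2 > 0$ forces $r_2 \ge 0$.

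The \textbf{main subtlety} is exactly this last step for the second identity: the crude estimate $1 - h_2(x) = \int_0^x 4t e^{2t}\,dt \le 4 e^2 \int_0^x t\,dt = 2 e^2 x^2$ produces the constant $2 e^2 \approx 14.8 > 10$, so bounding $e^{2t}$ by $e^2$ on the interval of integration is too lossy and one cannot avoid looking at the non-monotone derivative (equivalently, computing $\int_0^x 4t e^{2t}\,dt = e^{2x}(2x-1) + 1$ in closed form and checking its ratio to $10x^2$ at the two endpoints). Everything else is routine one-variable calculus.
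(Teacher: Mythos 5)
Your proof is correct. It shares the paper's basic reformulation --- for $x\neq 0$ the identity forces $\xi_j=\bigl(e^{2x}h_j^{\mathrm{poly}}(x)-1\bigr)/x^2$, so everything reduces to bounding that ratio in $[-10,0]$, with the sign part coming in both cases from the elementary fact that $1-2x\le(1-x)^2\le e^{-2x}$ (you phrase it as monotonicity of $h_1,h_2$ via their derivatives, the paper quotes the inequality chain directly). Where you diverge is the bound $\xi_j\ge-10$: the paper observes that the $\xi_2$-ratio dominates the $\xi_1$-ratio (again via $1-2x\le(1-x)^2$), asserts that $x\mapsto\bigl(1-(1-2x)e^{2x}\bigr)/x^2$ is increasing, and evaluates it at $x=1$ to get $1+e^2\le 10$; you instead handle the two cases separately by a first-derivative analysis of $r_j(x)=h_j(x)-1+10x^2$, using monotonicity for $r_1$ and a unimodal (increase-then-decrease) argument with endpoint evaluation $r_2(1)=9-e^2>0$ for $r_2$. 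Your route is fully self-contained --- the paper's monotonicity claim for the ratio is stated without justification (it follows, e.g., from writing the ratio as $\int_0^1 4s\,e^{2xs}\,\mathrm{d}s$) --- while the paper's version is slightly shorter because the domination trick lets it treat only one function; your remark that the crude bound $\int_0^x 4te^{2t}\,\mathrm{d}t\le 2e^2x^2$ is too lossy correctly identifies why some genuine argument (monotonicity of the ratio, or your endpoint analysis) is needed at that step.
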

\begin{proof}
    The above inequalities hold for $x = 0$.
    For the case of $x \in (0, 1]$,
    we must have
    \begin{align*}
        \xi_1 = -\frac{1 - (1-x)^2 \exp(2x)}{x^2}, \quad
        \xi_2 = -\frac{1 - (1-2x) \exp(2x)}{x^2}.
    \end{align*}
    So it suffices to show that both $\frac{1 - (1-x)^2 \exp(2x)}{x^2}$ and $\frac{1 - (1-2x) \exp(2x)}{x^2}$ lie in $[0, 10]$.
    Noting that $1-2x \le (1-x)^2 \le \exp(-2x)$, we obtain the following lower bounds:
    \[
        \frac{1-(1-2x) \exp(2x)}{x^2} \ge \frac{1-(1-x)^2 \exp(2x)}{x^2} \ge \frac{1 - \exp(-2x)\exp(2x)}{x^2} = 0.
    \]
    Also note that $\frac{1-(1-2x) \exp(2x)}{x^2}$ is an increasing function of $x$. So we have
    \[
        \frac{1-(1-2x) \exp(2x)}{x^2} \le \frac{1- (-1) \cdot \exp(2)}{1^2} \le 10.
    \]
    Putting these two inequalities together, we have
    \begin{align*}
        10 \ge \frac{1-(1-2x) \exp(2x)}{x^2} \ge \frac{1-(1-x)^2 \exp(2x)}{x^2} \ge 0,
    \end{align*}
    which completes the proof.
\end{proof}

\begin{lemma} \label{lm:sum-eta-exp}
    If $\eta_{\max} \le \frac{1}{\lambda_{\max}}$, then for all $k \in [T]$,
    \begin{align*}
        \sum_{t=1}^{k-1} \eta_t \exp(-2\lambda_i S_t) &\le \frac{1}{2\lambda_i} \exp(-2\lambda_iS_k) \\
        \sum_{t=1}^{k-1} \eta_t \exp(-2\lambda_i S_{t+1}) &\le \frac{4}{\lambda_i} \exp(-2\lambda_iS_k).
    \end{align*}
\end{lemma}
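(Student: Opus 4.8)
\textbf{Proof proposal for Lemma~\ref{lm:sum-eta-exp}.}

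The plan is to prove both inequalities by comparing the discrete sum $\sum_{t=1}^{k-1} \eta_t \exp(-2\lambda_i S_t)$ with an integral, using that the learning rates act as step lengths in a (backward) Riemann sum for $\int \exp(-2\lambda_i s)\,\dd s$. Recall $S_t := \sum_{\tau=t}^{T}\eta_\tau$, so as $t$ increases from $1$ to $k-1$, $S_t$ decreases, and consecutive values satisfy $S_t - S_{t+1} = \eta_t$. Thus $\sum_{t=1}^{k-1} \eta_t \exp(-2\lambda_i S_{t+1})$ is exactly a right-endpoint Riemann sum for $\int_{S_k}^{S_1} \exp(-2\lambda_i s)\,\dd s$ over the partition with nodes $S_k < S_{k-1} < \cdots < S_1$, and $\sum_{t=1}^{k-1} \eta_t \exp(-2\lambda_i S_t)$ is the corresponding left-endpoint sum. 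Since $\exp(-2\lambda_i s)$ is decreasing in $s$, the left-endpoint sum overestimates and the right-endpoint sum underestimates the integral; either way the integral bound $\int_{S_k}^{\infty} \exp(-2\lambda_i s)\,\dd s = \frac{1}{2\lambda_i}\exp(-2\lambda_i S_k)$ is the target quantity, up to a correction factor.

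For the first inequality I would argue directly: on the interval $[S_{t+1}, S_t]$ we have $\exp(-2\lambda_i S_t) \le \exp(-2\lambda_i s)$ for all $s$ in that interval (since $S_t$ is the right endpoint and the exponential is decreasing), hence $\eta_t \exp(-2\lambda_i S_t) = \int_{S_{t+1}}^{S_t}\exp(-2\lambda_i S_t)\,\dd s \le \int_{S_{t+1}}^{S_t}\exp(-2\lambda_i s)\,\dd s$. Wait --- this goes the wrong way; $S_t$ is the \emph{larger} of the two endpoints, so $\exp(-2\lambda_i S_t)$ is the \emph{smaller} value, and this bound is correct. Summing over $t=1,\dots,k-1$ telescopes the intervals into $[S_k, S_1]$, giving $\sum_{t=1}^{k-1}\eta_t\exp(-2\lambda_i S_t) \le \int_{S_k}^{S_1}\exp(-2\lambda_i s)\,\dd s \le \int_{S_k}^{\infty}\exp(-2\lambda_i s)\,\dd s = \frac{1}{2\lambda_i}\exp(-2\lambda_i S_k)$, which is the first claim.

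For the second inequality, $\sum_{t=1}^{k-1}\eta_t\exp(-2\lambda_i S_{t+1})$ uses the left endpoint $S_{t+1}$, so each term \emph{over}estimates $\int_{S_{t+1}}^{S_t}\exp(-2\lambda_i s)\,\dd s$, and I cannot bound it by the integral alone. The fix is to relate $\exp(-2\lambda_i S_{t+1})$ to $\exp(-2\lambda_i S_t)$: since $S_{t+1} = S_t - \eta_t$ and $\eta_t \le \eta_{\max} \le 1/\lambda_{\max} \le 1/\lambda_i$, we have $\exp(-2\lambda_i S_{t+1}) = \exp(2\lambda_i\eta_t)\exp(-2\lambda_i S_t) \le \exp(2)\exp(-2\lambda_i S_t)$. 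Then applying the first inequality, $\sum_{t=1}^{k-1}\eta_t\exp(-2\lambda_i S_{t+1}) \le \exp(2)\sum_{t=1}^{k-1}\eta_t\exp(-2\lambda_i S_t) \le \frac{\exp(2)}{2\lambda_i}\exp(-2\lambda_i S_k) \le \frac{4}{\lambda_i}\exp(-2\lambda_i S_k)$, using $\exp(2)/2 < 4$. The main thing to be careful about is the direction of the monotonicity comparison (which endpoint gives the over- versus under-estimate) and confirming the constant $\exp(2)/2 \approx 3.69 \le 4$; both are routine once the Riemann-sum picture is set up correctly, so I do not anticipate a serious obstacle.
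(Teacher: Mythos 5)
Your proof is correct and is essentially the paper's own argument: the first bound is the same Riemann/Darboux comparison of $\sum_t (S_t-S_{t+1})\exp(-2\lambda_i S_t)$ with $\int_{S_k}^{S_1}\exp(-2\lambda_i s)\,\dd s$, and the second bound uses the identical trick $\exp(-2\lambda_i S_{t+1})=\exp(2\lambda_i\eta_t)\exp(-2\lambda_i S_t)\le \exp(2)\exp(-2\lambda_i S_t)$ followed by the first inequality and $\exp(2)/2\le 4$. No changes needed.
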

\begin{proof}
    The first inequality follows from
    the fact that a lower Darboux sum is smaller than the corresponding Darboux integral
    \begin{align*}
        \sum_{t=1}^{k-1} \eta_t \exp(-2\lambda_i S_t) &= \sum_{t=1}^{k-1}(S_t -S_{t+1}) \exp(-2\lambda_i S_t) \\ 
        &\le \int_{S_k}^{S_1} \exp(-2\lambda_i S) \dd S\\
        &= \frac{1}{2\lambda_i} \left[\exp(-2\lambda_i S_k) - \exp(-2\lambda_i S_1)\right] \\
        &\le  \frac{1}{2\lambda_i} \exp(-2\lambda_i S_k).
    \end{align*}
    For the second inequality,
    \begin{align*}
        \sum_{t=1}^{k-1} \eta_t \exp(-2\lambda_i S_{t+1}) 
        &= \sum_{t=1}^{k-1} \eta_t \exp(-2\lambda_i S_t) \exp(2\lambda_i \eta_{t}) \\
        &\le  \sum_{t=1}^{k-1} \eta_t \exp(-2\lambda_i S_t) \exp(2)\\
        &\le \frac{\exp(2)}{2 \lambda_i} \exp(-2\lambda_iS_k) \\
        &\le \frac{4}{\lambda_i} \exp(-2\lambda_iS_k),
    \end{align*}
    which completes the proof.
\end{proof}

The following lemma characterizes the difference between two consecutive auxiliary expectations $A_k$ and $A_{k-1}$.
\begin{lemma}\label{lm:Ak-diff}
    If $\eta_{\max} \le \frac{1}{\lambda_{\max}}$,
    then for all $k \in [T]$,
\begin{align*}
    A_k - A_{k-1}
    &=
    -\frac{1}{2} (\eta_{k-1}-\eta_k) \sum_{i=1}^{d} \frac{1 - \exp(-2\lambda_i S_{k})}{2}\Sigma_{ii} + \epsilon_k,
\end{align*}
where the error term $\epsilon_k$ is bounded by
\begin{align*}
    \abs{\epsilon_k} &\le
        5\sum_{i=1}^{d} 
        \eta_k^2\lambda_i^3 \exp(-2\lambda_i S_{k}) \E_{\vtheta_{k-1} \sim \Phi(\vtheta_0, E_{\le k-1})}[\theta_{k-1,i}^2]
        + 5\sum_{i=1}^{d} \eta_k^3 \Sigma_{ii} \lambda_i^2 \exp(-2\lambda_i S_k).
\end{align*}
\end{lemma}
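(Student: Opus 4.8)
\textbf{Proof proposal for Lemma~\ref{lm:Ak-diff}.}

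The plan is to compute $A_k - A_{k-1}$ by expressing both auxiliary expectations in terms of the distribution $\Phi(\vtheta_0, E_{\le k-1})$ of $\vtheta_{k-1}$, and then to carefully track the error incurred when we replace the exact one-step SGD update contraction factors with their exponential surrogates. First I would recall that one SGD step with learning rate $\eta_k$ on the quadratic loss acts coordinatewise as $\theta_{k,i} = (1-\eta_k\lambda_i)\theta_{k-1,i} - \eta_k \xi_i$ with $\xi_i$ mean-zero and variance $\Sigma_{ii}$, independent of $\vtheta_{k-1}$. Plugging this into the definition of $U(\vtheta_k, \eta_k, S_{k+1})$ and taking expectations over both $\vtheta_{k-1}$ and the fresh noise, I get
\begin{align*}
    A_k = \frac{1}{2}\sum_{i=1}^d\Bigl( (1-\eta_k\lambda_i)^2 \E[\theta_{k-1,i}^2]\,\lambda_i e^{-2\lambda_i S_{k+1}} + \eta_k^2\Sigma_{ii}\lambda_i e^{-2\lambda_i S_{k+1}} + \eta_k\Sigma_{ii}\cdot\tfrac{1-e^{-2\lambda_i S_{k+1}}}{2}\Bigr),
\end{align*}
where all expectations are over $\vtheta_{k-1}\sim\Phi(\vtheta_0,E_{\le k-1})$. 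On the other hand, $A_{k-1} = \frac{1}{2}\sum_i\bigl(\E[\theta_{k-1,i}^2]\lambda_i e^{-2\lambda_i S_k} + \eta_{k-1}\Sigma_{ii}\tfrac{1-e^{-2\lambda_i S_k}}{2}\bigr)$. Since $S_k = S_{k+1} + \eta_k$, the two ``signal'' terms differ by the factor $(1-\eta_k\lambda_i)^2 e^{2\lambda_i\eta_k}$ versus $1$; by Lemma~\ref{lm:exp-2x} applied with $x = \eta_k\lambda_i \in [0,1]$ this is $1 + \xi_{1}(\eta_k\lambda_i)^2$ with $\xi_1\in[-10,0]$, so the discrepancy is controlled by $\eta_k^2\lambda_i^2 \cdot \lambda_i e^{-2\lambda_i S_k}\E[\theta_{k-1,i}^2]$, giving one of the two error terms (with constant $5$ after combining with the $\frac12$ and the fresh-noise $\eta_k^2\Sigma_{ii}\lambda_i$ piece handled similarly, also via $e^{2\lambda_i\eta_k}\le e^2$ plus $\tfrac12 e^2 \le 5$).

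Next I would handle the ``noise'' terms. After the algebra above, the $\Sigma_{ii}$-contribution to $A_k$ beyond the signal part is $\frac12\sum_i\bigl(\eta_k^2\Sigma_{ii}\lambda_i e^{-2\lambda_i S_{k+1}} + \eta_k\Sigma_{ii}\tfrac{1-e^{-2\lambda_i S_{k+1}}}{2}\bigr)$, and I want to compare this against $\frac12\sum_i\eta_{k-1}\Sigma_{ii}\tfrac{1-e^{-2\lambda_i S_k}}{2}$ from $A_{k-1}$. Writing $\eta_{k-1}$ inside $A_{k-1}$ as $\eta_k + (\eta_{k-1}-\eta_k)$ isolates the main term $-\frac12(\eta_{k-1}-\eta_k)\sum_i\Sigma_{ii}\tfrac{1-e^{-2\lambda_i S_k}}{2}$ exactly as in the claim; the remaining piece is $\frac12\sum_i\eta_k\Sigma_{ii}\bigl[\tfrac{1-e^{-2\lambda_i S_{k+1}}}{2} - \tfrac{1-e^{-2\lambda_i S_k}}{2}\bigr] + \frac12\sum_i\eta_k^2\Sigma_{ii}\lambda_i e^{-2\lambda_i S_{k+1}}$. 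The bracketed difference equals $\tfrac12 e^{-2\lambda_i S_k}(e^{2\lambda_i\eta_k}-1)$, and combined with the $\eta_k$ prefactor this is $\tfrac12 e^{-2\lambda_i S_k}(e^{2\lambda_i\eta_k}-1-2\lambda_i\eta_k) + \lambda_i\eta_k^2 e^{-2\lambda_i S_k}$; the first part is $O(\eta_k^3\lambda_i^2)$ by a Taylor bound on $e^u - 1 - u$ for $u\in[0,2]$, and after bounding $e^{-2\lambda_i S_{k+1}} \le e^2 e^{-2\lambda_i S_k}$ everything collapses into the stated $5\sum_i\eta_k^3\Sigma_{ii}\lambda_i^2 e^{-2\lambda_i S_k}$ error term (the constant $5$ absorbing the numerical factors $\tfrac12 e^2$ and the Taylor coefficient). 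I would then assemble $\epsilon_k$ as the sum of these two error contributions and check that the constants indeed fit under $5$.

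The main obstacle I anticipate is bookkeeping rather than conceptual: getting the numerical constants to close at exactly $5$ requires being somewhat careful about which terms get bounded by $e^2 \approx 7.39$ and which by the $[-10,0]$ range from Lemma~\ref{lm:exp-2x}, and making sure the $\frac12$ from the definition of $U$ is consistently propagated. A secondary subtlety is that $\E[\theta_{k-1,i}^2]$ appears in the error bound with the factor $e^{-2\lambda_i S_k}$ rather than $e^{-2\lambda_i S_{k+1}}$; this is fine because $e^{-2\lambda_i S_{k+1}} = e^{2\lambda_i\eta_k}e^{-2\lambda_i S_k} \le e^2 e^{-2\lambda_i S_k}$ under the hypothesis $\eta_{\max}\le 1/\lambda_{\max}$, but one must invoke this hypothesis at the right moment. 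Once Lemma~\ref{lm:Ak-diff} is established, Theorem~\ref{thm:time-varying} follows by summing \eqref{equation update rule} over $k$, using Lemma~\ref{lm:sum-eta-exp} to bound $\sum_k \abs{\epsilon_k}$ (the $\sum_k \eta_k^2\lambda_i^3 e^{-2\lambda_i S_k}\E[\theta_{k-1,i}^2]$ terms need the additional observation that $\E[\theta_{k-1,i}^2]\lesssim \theta_{0,i}^2 e^{-2\lambda_i S_1^{(k-1)}} + \eta_{\max}\Sigma_{ii}/\lambda_i$, feeding back through Lemma~\ref{lm:sum-eta-exp}), and identifying $A_0 = U(\vtheta_0,\eta_0,S_1) = M$'s first line.
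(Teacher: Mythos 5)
Your overall route coincides with the paper's: expand $A_k$ through the one-step update in the eigenbasis, compare with $A_{k-1}$ at the matched LR sum $S_k = S_{k+1}+\eta_k$, isolate the main term by writing $\eta_{k-1}=\eta_k+(\eta_{k-1}-\eta_k)$, and control the signal discrepancy with \Cref{lm:exp-2x}; the paper does exactly this (phrased through the conditional expectation $\bar U(\vtheta_{k-1})$), except that for the noise part it first recombines the fresh-noise term $\tfrac12\eta_k^2\Sigma_{ii}\lambda_i e^{-2\lambda_i S_{k+1}}$ with the $\tfrac{1-e^{-2\lambda_i S_{k+1}}}{2}$ term into $\tfrac{1-(1-2\eta_k\lambda_i)e^{-2\lambda_i S_{k+1}}}{2}$ and applies \Cref{lm:exp-2x} a second time to $1-2\eta_k\lambda_i$, which delivers the constant $5=10/2$ immediately.

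The one substantive problem is a sign slip in your noise bookkeeping, and it is precisely that sign which makes the argument close. The bracketed difference is $\tfrac{1-e^{-2\lambda_i S_{k+1}}}{2}-\tfrac{1-e^{-2\lambda_i S_k}}{2}=-\tfrac12 e^{-2\lambda_i S_k}\bigl(e^{2\eta_k\lambda_i}-1\bigr)$, not $+\tfrac12 e^{-2\lambda_i S_k}\bigl(e^{2\eta_k\lambda_i}-1\bigr)$. With the correct sign, the $-\eta_k^2\lambda_i e^{-2\lambda_i S_k}$ piece produced by your expansion cancels, up to $O(\eta_k^3\lambda_i^2)$, against the fresh-noise piece $+\eta_k^2\lambda_i e^{-2\lambda_i S_{k+1}}=\eta_k^2\lambda_i e^{2\eta_k\lambda_i}e^{-2\lambda_i S_k}$; the whole remainder then collapses to $\tfrac{\Sigma_{ii}\eta_k}{4}e^{-2\lambda_i S_k}\bigl[(u-1)e^u+1\bigr]$ with $u=2\eta_k\lambda_i\in[0,2]$, and $(u-1)e^u+1\le\tfrac{e^2+1}{4}u^2$ gives a constant about $2.1\le 5$. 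As literally written, however, the two $O(\eta_k^2\Sigma_{ii}\lambda_i)$ contributions add instead of cancel, so the claim that ``everything collapses into $5\sum_i\eta_k^3\Sigma_{ii}\lambda_i^2 e^{-2\lambda_i S_k}$'' does not follow; the same issue lurks in the parenthetical of your first paragraph, where bounding the fresh-noise term by $\tfrac12 e^2\eta_k^2\Sigma_{ii}\lambda_i e^{-2\lambda_i S_k}$ on its own leaves an error one power of $\eta_k\lambda_i$ too large, which would degrade the $O(\eta_{\max}^2)$ term in \Cref{thm:time-varying} to $O(\eta_{\max})$. Note also that even after fixing the sign, bounding the two remainder pieces separately with the crude $e^{2\eta_k\lambda_i}\le e^2$ estimates gives constants summing to roughly $\tfrac{e^2}{2}+e^2\approx 11>5$; so either keep the combined expression $(u-1)e^u+1$, or follow the paper's cleaner route of a second application of \Cref{lm:exp-2x}, whose $[-10,0]$ range yields the constant $5$ without further tuning.
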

\begin{proof}
    By the definition of $A_k$ and $A_{k-1}$, we have
\begin{align*}
A_k - A_{k-1} &=
\E_{\vtheta_k \sim \Phi(\vtheta_0, E_{\le k})}[U(\vtheta_k, \eta_k, S_{k+1})]- \E_{\vtheta_{k-1} \sim \Phi(\vtheta_0, E_{\le k-1})}[U(\vtheta_{k-1}, \eta_{k-1}, S_{k})] \\
&= \E_{\vtheta_{k-1} \sim \Phi(\vtheta_0, E_{\le k-1})}[\Delta \bar{U}(\vtheta_{k-1})],
\end{align*}
where
\begin{align*}
    \Delta \bar{U}(\vtheta_{k-1}) := 
\underbrace{\E_{\vg_k \sim \gN(\mH \vtheta_{k-1}, \mSigma)}[U(\vtheta_{k-1} - \eta_k \vg_k, \eta_k, S_{k+1}) \mid \vtheta_{k-1}]}_{=:~\bar{U}(\vtheta_{k-1})}
- 
U(\vtheta_{k-1}, \eta_{k-1}, S_{k}).
\end{align*}
Expanding $\bar{U}(\vtheta_{k-1})$ based on the definition of $U$ gives
\begin{align*}
\bar{U}(\vtheta_{k-1})
&= \underbrace{\E_{\vg_k \sim \gN(\mH \vtheta_{k-1}, \mSigma)}\left[
    \frac{1}{2}\sum_{i=1}^{d}
    (\theta_{k-1,i}-\eta_k g_{k,i})^2 \lambda_i \exp(-2\lambda_i S_{k+1})~\middle|~\vtheta_{k-1}\right]}_{=: \bar{U}_1(\vtheta_{k-1})} \\
&\qquad\qquad + \underbrace{\frac{1}{2}\sum_{i=1}^{d}
    \eta_k \Sigma_{ii} \cdot \frac{1 - \exp(-2\lambda_i S_{k+1})}{2}}_{=: \bar{U}_2(\vtheta_{k-1})}.
\end{align*}
For $\bar{U}_1(\vtheta_{k-1})$, evaluating the expectation gives
\[
\bar{U}_1(\vtheta_{k-1}) = \frac{1}{2} \sum_{i=1}^{d} \left(
    \lambda_i \exp(-2\lambda_i S_{k+1})
    \left((1-\eta_k\lambda_i)^2\theta_{k-1,i}^2 + \eta_k^2 \Sigma_{ii}\right)\right),
\]
Then, we split $\bar{U}_1(\vtheta_{k-1})$ into two parts:
\[
    \bar{U}_1(\vtheta_{k-1})=\underbrace{\frac{1}{2} \sum_{i=1}^{d}
    \lambda_i \exp(-2\lambda_i S_{k+1})
    (1-\eta_k\lambda_i)^2\theta_{k-1,i}^2}_{=: \bar{U}_{11}(\vtheta_{k-1})}
+ \underbrace{\frac{1}{2} \sum_{i=1}^{d}
    \lambda_i \exp(-2\lambda_i S_{k+1})
    \eta_k^2 \Sigma_{ii}}_{=: \bar{U}_{12}(\vtheta_{k-1})}.
\]
Let $\bar{U}_3(\vtheta_{k-1}) := \bar{U}_{12}(\vtheta_{k-1}) + \bar{U}_2(\vtheta_{k-1})$.
Then $\bar{U}(\vtheta_{k-1}) = \bar{U}_{11}(\vtheta_{k-1}) + \bar{U}_3(\vtheta_{k-1})$.
We can rewrite $\bar{U}_3(\vtheta_{k-1})$ as
\begin{align*}
\bar{U}_3(\vtheta_{k-1})
&=\frac{1}{2} \sum_{i=1}^{d}\left(\eta_k \Sigma_{ii} \cdot \frac{1 - \exp(-2\lambda_i S_{k})(1-2\eta_k\lambda_i)}{2}\right).
\end{align*}
Since $\eta_k \lambda_i \in [0,1]$ for all $i$,
by~\Cref{lm:exp-2x}, we can find $\xi_{1,i}, \xi_{2,i} \in [-10, 0]$
such that
\begin{align*}
(1-\eta_k\lambda_i)^2=\exp(-2\eta_k\lambda_i)(1+\xi_{1,i}\eta_k^2\lambda_i^2),\quad
(1-2\eta_k\lambda_i)=\exp(-2\eta_k\lambda_i)(1+\xi_{2,i}\eta_k^2\lambda_i^2).
\end{align*}
Then we can rewrite $\bar{U}_{11}(\vtheta_{k-1})$ as
\begin{align*}
    \bar{U}_{11}(\vtheta_{k-1}) &= \frac{1}{2} \sum_{i=1}^{d}
        (1+\xi_{1,i}\eta_k^2\lambda_i^2)
        \lambda_i \exp(-2\lambda_i S_{k})\theta_{k-1,i}^2 \\
    &= \frac{1}{2} \sum_{i=1}^{d} 
        \lambda_i \exp(-2\lambda_i S_{k})\theta_{k-1,i}^2 
    + \frac{1}{2} \sum_{i=1}^{d} 
        \xi_{1,i}\eta_k^2\lambda_i^3 \exp(-2\lambda_i S_{k})\theta_{k-1,i}^2.
\end{align*}
Similarly, we can
rewrite $\bar{U}_3(\vtheta_{k-1})$ as
\begin{align*}
    \bar{U}_3(\vtheta_{k-1}) &= \frac{1}{2} \sum_{i=1}^{d} \left(\eta_k \Sigma_{ii} \cdot \frac{1 - (1+\xi_{2,i}\eta_k^2\lambda_i^2)\exp(-2\lambda_i S_{k})}{2}\right) \\
    &= \frac{1}{2} \sum_{i=1}^{d}  \left(\eta_k \Sigma_{ii} \cdot \frac{1-\exp(-2\lambda_i S_{k})}{2}\right) 
    - \frac{1}{2} \sum_{i=1}^{d} \xi_{2,i} \eta_k^3 \Sigma_{ii} \lambda_i^2 \exp(-2\lambda_i S_k).
\end{align*}
Therefore, we can rewrite $\bar{U}(\vtheta_{k-1})$ as
\begin{align*}
    \bar{U}(\vtheta_{k-1}) &= \frac{1}{2} \sum_{i=1}^{d} \left(
        \lambda_i \exp(-2\lambda_i S_{k})\theta_{k-1,i}^2 
        + \eta_k \Sigma_{ii} \cdot \frac{1 - \exp(-2\lambda_i S_{k})}{2}
    \right) + \tilde{\epsilon}_k(\vtheta_{k-1}),
\end{align*}
where
\begin{align*}
    \tilde{\epsilon}_k(\vtheta_{k-1}) := 
        \frac{1}{2} \sum_{i=1}^{d} 
        \xi_{1,i}\eta_k^2\lambda_i^3 \exp(-2\lambda_i S_{k})\theta_{k-1,i}^2 
        - \frac{1}{2} \sum_{i=1}^{d} \xi_{2,i} \eta_k^3 \Sigma_{ii} \lambda_i^2 \exp(-2\lambda_i S_k).
\end{align*}
Subtracting $U(\vtheta_{k-1}, \eta_{k-1}, S_{k})$ from the above expression,
we can obtain the following formula for $\Delta\bar{U}(\vtheta_{k-1}) := \bar{U}(\vtheta_{k-1}) - U(\vtheta_{k-1}, \eta_{k-1}, S_{k})$,
\begin{align*}
    \Delta\bar{U}(\vtheta_{k-1}) &= 
    -\frac{1}{2}(\eta_{k-1}-\eta_k) \sum_{i=1}^{d} \frac{1 - \exp(-2\lambda_i S_{k})}{2} \Sigma_{ii} + \tilde{\epsilon}_k(\vtheta_{k-1}).
\end{align*}
Taking the expectation of $\Delta\bar{U}(\vtheta_{k-1})$ over $\vtheta_{k-1} \sim \Phi(\vtheta_0, E_{\le k-1})$, we have
\begin{align*}
    A_k - A_{k-1} &= \E_{\vtheta_{k-1} \sim \Phi(\vtheta_0, E_{\le k-1})}[\Delta \bar{U}(\vtheta_{k-1})] \\
    &= -\frac{1}{2}(\eta_{k-1}-\eta_k) \sum_{i=1}^{d} \frac{1 - \exp(-2\lambda_i S_{k})}{2} \Sigma_{ii} + \E_{\vtheta_{k-1} \sim \Phi(\vtheta_0, E_{\le k-1})}[\tilde{\epsilon}_k(\vtheta_{k-1})].
\end{align*}
Letting $\epsilon_k := \E_{\vtheta_{k-1} \sim \Phi(\vtheta_0, E_{\le k-1})}[\tilde{\epsilon}_k(\vtheta_{k-1})]$ completes the proof.
\end{proof}

The following lemma gives an upper bound for the term $\E_{\vtheta_{k-1} \sim \Phi(\vtheta_0, E_{\le k-1})}[\theta_{k-1,i}^2]$ that appears in~\Cref{lm:Ak-diff}.
\begin{lemma}\label{lm:theta-expected-bound}
    If $\eta_{\max} \le \frac{1}{\lambda_{\max}}$, then for all $k \in [T]$ and $i \in [d]$,
\begin{align*}
    \E_{\vtheta_{k-1} \sim \Phi(\vtheta_0, E_{\le k-1})}[\theta_{k-1,i}^2] \le \theta_{0,i}^2 \exp(-2\lambda_i(S_1 - S_k)) + \frac{4}{\lambda_i}\eta_{\max} \Sigma_{ii}.
\end{align*}
\end{lemma}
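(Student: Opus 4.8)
The plan is to derive a recursion for $m_{k,i} := \E_{\vtheta_{k} \sim \Phi(\vtheta_0, E_{\le k})}[\theta_{k,i}^2]$ across one SGD step and then unroll it. First I would write the $i$-th coordinate update: since $\mH = \diag(\lambda_1,\dots,\lambda_d)$ and $\vtheta_* = 0$, the update $\vtheta_k = \vtheta_{k-1} - \eta_k \vg_k$ with $\vg_k \sim \gN(\mH\vtheta_{k-1}, \mSigma)$ reads coordinate-wise as $\theta_{k,i} = (1 - \eta_k\lambda_i)\theta_{k-1,i} - \eta_k \xi_i$ where $\xi_i$ has mean zero and variance $\Sigma_{ii}$, independent of $\vtheta_{k-1}$. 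Taking the conditional expectation of $\theta_{k,i}^2$ given $\vtheta_{k-1}$ kills the cross term, so $\E[\theta_{k,i}^2 \mid \vtheta_{k-1}] = (1-\eta_k\lambda_i)^2 \theta_{k-1,i}^2 + \eta_k^2 \Sigma_{ii}$, and hence $m_{k,i} = (1-\eta_k\lambda_i)^2 m_{k-1,i} + \eta_k^2 \Sigma_{ii}$.

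Next I would unroll this linear recursion from step $0$ to step $k-1$, giving
\begin{align*}
m_{k-1,i} = \theta_{0,i}^2 \prod_{t=1}^{k-1}(1-\eta_t\lambda_i)^2 + \Sigma_{ii}\sum_{t=1}^{k-1}\eta_t^2 \prod_{s=t+1}^{k-1}(1-\eta_s\lambda_i)^2.
\end{align*}
The condition $\eta_{\max}\le 1/\lambda_{\max}$ ensures $\eta_t\lambda_i \in [0,1]$, so each factor $(1-\eta_t\lambda_i)^2 \le \exp(-2\eta_t\lambda_i)$ by the elementary bound $1-x \le e^{-x}$. For the first term this immediately gives $\theta_{0,i}^2\exp(-2\lambda_i\sum_{t=1}^{k-1}\eta_t) = \theta_{0,i}^2 \exp(-2\lambda_i(S_1 - S_k))$ since $\sum_{t=1}^{k-1}\eta_t = S_1 - S_k$ (recalling $S_k = \sum_{\tau=k}^T \eta_\tau$). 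For the noise term, bounding $\prod_{s=t+1}^{k-1}(1-\eta_s\lambda_i)^2 \le 1$ and $\eta_t^2 \le \eta_{\max}\eta_t$ gives $\Sigma_{ii}\eta_{\max}\sum_{t=1}^{k-1}\eta_t \le \Sigma_{ii}\eta_{\max}(S_1 - S_k) \le \Sigma_{ii}\eta_{\max} S_1$; but this is too crude and $S_1$-dependent. Instead I would use the geometric-decay structure: $\prod_{s=t+1}^{k-1}(1-\eta_s\lambda_i)^2 \le \exp(-2\lambda_i\sum_{s=t+1}^{k-1}\eta_s)$, so the noise sum is at most $\Sigma_{ii}\eta_{\max}\sum_{t=1}^{k-1}\eta_t \exp(-2\lambda_i\sum_{s=t+1}^{k-1}\eta_s)$, which by a Darboux-integral comparison (exactly analogous to~\Cref{lm:sum-eta-exp}, comparing the sum to $\int_0^{\infty}e^{-2\lambda_i u}\dd u = \tfrac{1}{2\lambda_i}$, with a factor $e^2 \le 8$ slack from shifting $\eta_t$ inside the exponent) is bounded by $\frac{4}{\lambda_i}\eta_{\max}\Sigma_{ii}$. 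Combining the two pieces yields the claimed bound.

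The main obstacle is getting the noise term's bound to be $S_1$-free: the naive estimate scales with the total LR sum, which would be useless downstream. The fix is to recognize that the geometric decay of $(1-\eta_s\lambda_i)^2$ discounts contributions from early steps, so the effective sum telescopes into a constant $O(1/\lambda_i)$ times $\eta_{\max}$; this is precisely the role played by~\Cref{lm:sum-eta-exp}, and I would either invoke it directly (applied to a truncated schedule $E_{\le k-1}$) or re-derive the one-line Darboux argument. Everything else is routine: the recursion, the unrolling, and the $1-x\le e^{-x}$ step.
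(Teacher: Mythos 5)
Your proposal is correct and follows essentially the same route as the paper's proof: the coordinate-wise second-moment recursion, unrolling it, bounding $(1-\eta_t\lambda_i)^2\le\exp(-2\eta_t\lambda_i)$ and $\eta_t^2\le\eta_{\max}\eta_t$, and controlling the noise sum via the Darboux-type comparison that is exactly \Cref{lm:sum-eta-exp} (the paper multiplies by $\exp(2\lambda_i S_k)$ and invokes that lemma, which is the same computation as your shifted-exponent argument with the $e^2/2\le 4$ slack).
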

\begin{proof}
    By the update rule, for all $1 \le t \le k-1$, we have
    \[
    \E[\theta_{t,i}^2] = (1 - \eta_t \lambda_i)^2 \E[\theta_{t-1,i}^2] + \eta_t^2 \Sigma_{ii}.
    \]
    Since $(1-\eta_t \lambda_i)^2 \le \exp(-2\eta_t \lambda_i)$ and $\eta_t \le \eta_{\max}$, we have the following bound:
    \begin{align*}
        \E[\theta_{t,i}^2] &\le \exp(-2\eta_t \lambda_i) \E[\theta_{t-1,i}^2] + \eta_t \eta_{\max} \Sigma_{ii}.
    \end{align*}
    Expanding the recursion, we have
    \begin{align*}
        \E[\theta_{k-1,i}^2] &\le \theta_{0,i}^2 \exp(-2\lambda_i(S_1 - S_k)) + \sum_{t=1}^{k-1} \eta_t \eta_{\max} \Sigma_{ii} \exp(-2\lambda_i(S_{t+1} - S_k)) \\
        &= \theta_{0,i}^2 \exp(-2\lambda_i(S_1 - S_k)) + \exp(2\lambda_iS_k) \eta_{\max} \Sigma_{ii} \sum_{t=1}^{k-1} \eta_t \exp(-2\lambda_iS_{t+1}),
    \end{align*}
    where the first line uses the identity $\prod_{\tau=t+1}^{k-1}\exp(-2\eta_\tau\lambda_i) = \exp(-2\lambda_i(S_{t+1} - S_k))$.

    Further, by~\Cref{lm:sum-eta-exp}, we have $\sum_{t=1}^{k-1} \eta_t \exp(-2\lambda_iS_{t+1}) \le \frac{4}{\lambda_i} \exp(-2\lambda_i S_k)$.
    Thus, we have
    \begin{align*}
        \E[\theta_{k-1,i}^2] &\le \theta_{0,i}^2 \exp(-2\lambda_i(S_1 - S_k)) + \exp(2\lambda_iS_k) \eta_{\max} \Sigma_{ii} \cdot \frac{4}{\lambda_i} \exp(-2\lambda_i S_k) \\
        &= \theta_{0,i}^2 \exp(-2\lambda_i(S_1 - S_k)) + \frac{4}{\lambda_i}\eta_{\max} \Sigma_{ii},
    \end{align*}
    which completes the proof.
\end{proof}

\begin{lemma} \label{lm:epsilon-bound}
    In the setting of~\Cref{lm:Ak-diff}, we can bound the sum of the error terms $\epsilon_k$ as
    \begin{align*}
        \labs{\sum_{k=1}^{T} \epsilon_k} \le 5\eta_{\max} \sum_{i=1}^{d} \lambda_i^3S_1 \exp(-2\lambda_iS_1) \theta_{0,i}^2
        + \frac{15}{2} \eta_{\max}^2 \sum_{i=1}^{d} \Sigma_{ii}\lambda_i.
    \end{align*}
\end{lemma}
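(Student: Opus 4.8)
The plan is to reduce the statement to the per-step estimate already established in \Cref{lm:Ak-diff} and then sum. First, by the triangle inequality, $\labs{\sum_{k=1}^{T}\epsilon_k}\le\sum_{k=1}^{T}\labs{\epsilon_k}$, so it suffices to sum the bounds
\begin{align*}
    \labs{\epsilon_k} \le 5\sum_{i=1}^{d}\eta_k^2\lambda_i^3\exp(-2\lambda_i S_k)\,\E_{\vtheta_{k-1}\sim\Phi(\vtheta_0, E_{\le k-1})}[\theta_{k-1,i}^2] + 5\sum_{i=1}^{d}\eta_k^3\Sigma_{ii}\lambda_i^2\exp(-2\lambda_i S_k).
\end{align*}
Into the first family I would substitute $\E[\theta_{k-1,i}^2]\le\theta_{0,i}^2\exp(-2\lambda_i(S_1-S_k))+\frac{4}{\lambda_i}\eta_{\max}\Sigma_{ii}$ from \Cref{lm:theta-expected-bound}, which separates it into an \emph{initial-displacement} piece and a \emph{noise} piece.

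For the initial-displacement piece the point is that the two exponential weights multiply to $\exp(-2\lambda_i S_k)\exp(-2\lambda_i(S_1-S_k))=\exp(-2\lambda_i S_1)$, which no longer depends on $k$; pulling it out leaves $\sum_{k=1}^{T}\eta_k^2\le\eta_{\max}\sum_{k=1}^{T}\eta_k=\eta_{\max}S_1$, and after summing over $i$ this reproduces exactly the first term $5\eta_{\max}\sum_i\lambda_i^3 S_1\exp(-2\lambda_i S_1)\theta_{0,i}^2$ of the claimed bound. For all remaining (noise) terms — the $\Sigma_{ii}$ piece of the first family and the entire second family — I would bound every extra factor of $\eta_k$ by $\eta_{\max}$ so that only one power of $\eta_k$ stays under the sum, reducing everything to sums of the form $\sum_{k=1}^{T}\eta_k\exp(-2\lambda_i S_k)$. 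Because $S_k-S_{k+1}=\eta_k$ (with $S_{T+1}=0$) and $S\mapsto\exp(-2\lambda_i S)$ is decreasing, this is a lower Darboux sum for $\int_{0}^{S_1}\exp(-2\lambda_i S)\,\dd S=\frac{1-\exp(-2\lambda_i S_1)}{2\lambda_i}\le\frac{1}{2\lambda_i}$ — the same comparison already used in \Cref{lm:sum-eta-exp}. The resulting $\frac{1}{\lambda_i}$ cancels one power of $\lambda_i$ in the prefactor, leaving terms of the form (constant)$\,\cdot\eta_{\max}^2\lambda_i\Sigma_{ii}$; collecting the numerical constants across the two families yields the second term $\frac{15}{2}\eta_{\max}^2\sum_i\Sigma_{ii}\lambda_i$.

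The main obstacle I anticipate is the bookkeeping of exponential weights and constants rather than any conceptual difficulty: one has to pair each $\exp(-2\lambda_i S_k)$ coming from \Cref{lm:Ak-diff} with the correct companion factor so that it either collapses to the $k$-independent $\exp(-2\lambda_i S_1)$ (drift term) or is swallowed by the integral comparison (noise terms), and then carefully propagate the constants through the $\eta_k\le\eta_{\max}$ reductions and the $4/\lambda_i$ coefficient from \Cref{lm:theta-expected-bound}. In particular one should check that no term produces a spurious $O(\eta_{\max})$ contribution when powers of $\eta_k$ are discarded, and that the final numerical constant matches the stated one.
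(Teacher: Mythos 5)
Your proposal follows essentially the same route as the paper's proof: triangle inequality, the per-step bound from \Cref{lm:Ak-diff}, substitution of \Cref{lm:theta-expected-bound} to split drift and noise contributions, the cancellation $\exp(-2\lambda_i S_k)\exp(-2\lambda_i(S_1-S_k))=\exp(-2\lambda_i S_1)$ combined with $\sum_k \eta_k^2 \le \eta_{\max} S_1$ for the drift term, and the Darboux-sum comparison of \Cref{lm:sum-eta-exp} for all remaining terms. On the bookkeeping point you flagged: carried out this way the noise coefficient comes out as $5\bigl(2+\tfrac12\bigr)=\tfrac{25}{2}$ rather than the stated $\tfrac{15}{2}$ — a discrepancy that also exists between the paper's lemma statement and its own proof, and which is harmless since this term is only ever used as an $O(\eta_{\max}^2)$ error bound.
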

\begin{proof}
    By the upper bound of $\abs{\epsilon_k}$,
    \begin{align*}
        \labs{\sum_{k=1}^{T} \epsilon_k} \le \sum_{k=1}^{T} \abs{\epsilon_k}
        &\le 5\sum_{i=1}^{d} \bigg(\underbrace{\sum_{k=1}^{T}
        \eta_k^2\lambda_i^3 \exp(-2\lambda_i S_{k}) \E_{\vtheta_{k-1} \sim \Phi(\vtheta_0, E_{\le k-1})}[\theta_{k-1,i}^2]}_{=:~\gE_{1,i}} \\
        &\qquad
        + \underbrace{\sum_{k=1}^{T}\eta_k^3 \Sigma_{ii} \lambda_i^2 \exp(-2\lambda_i S_k)}_{=:~\gE_{2,i}}\bigg).
    \end{align*}
    For $\gE_{1,i}$, we apply~\Cref{lm:theta-expected-bound} and have
    \begin{align*}
        \gE_{1,i} &\le
        \sum_{k=1}^{T}
        \eta_k^2\lambda_i^3 \exp(-2\lambda_i S_{k}) \left(\theta_{0,i}^2 \exp(-2\lambda_i(S_1 - S_k)) + \frac{4}{\lambda_i} \eta_{\max} \Sigma_{ii}\right) \\
        &= \lambda_i^3 \exp(-2\lambda_i S_{1})\theta_{0,i}^2 \sum_{k=1}^{T} \eta_k^2 
        + 4 \eta_{\max}\lambda_i^2\Sigma_{ii}  \sum_{k=1}^{T} \eta_k^2 \exp(-2\lambda_i S_k).
    \end{align*}
    For the first term, 
    we have $\sum_{k=1}^{T} \eta_k \le \eta_{\max} \sum_{k=1}^{T} \eta_k = \eta_{\max} S_1$.
    For the second term, by~\Cref{lm:sum-eta-exp}, we have
    \begin{align*}
        \sum_{k=1}^{T} \eta_k^2 \exp(-2\lambda_iS_k) \le \eta_{\max} \sum_{k=1}^{T} \eta_k \exp(-2\lambda_iS_k) \le \frac{\eta_{\max}}{2\lambda_i} \exp(-2\lambda_i S_{T+1}) =\frac{\eta_{\max}}{2\lambda_i}.
    \end{align*}
    Putting these bounds together, we have
    \begin{align*}
        \gE_{1,i}
        &\le \eta_{\max}\lambda_i^3 S_1 \exp(-2\lambda_i S_1) \theta_{0,i}^2 + 2\eta_{\max}^2 \Sigma_{ii}\lambda_i.
    \end{align*}
    For $\gE_{2,i}$, we have
    \begin{align*}
        \gE_{2,i} = \sum_{k=1}^{T} \eta_k^3 \Sigma_{ii} \lambda_i^2 \exp(-2\lambda_i S_k)
        &\le \eta_{\max}^2 \Sigma_{ii}\lambda_i^2 \sum_{k=1}^T \eta_k \exp(-2\lambda_i S_k)\\
        &\le \eta_{\max}^2 \Sigma_{ii}\lambda_i^2 \cdot \frac{1}{2\lambda_i} \exp(-2\lambda_i S_{T+1}) \\
        &= \frac{1}{2}\eta_{\max}^2 \Sigma_{ii}\lambda_i,
    \end{align*}
    where the second inequality uses \Cref{lm:sum-eta-exp}.

    Putting the upper bounds of $\gE_{1,i}$ and $\gE_{2,i}$
    together proves the lemma.
\end{proof}

Now we are ready to prove~\Cref{thm:time-varying}.
\begin{proof}[Proof for~\Cref{thm:time-varying}]
According to \eqref{equation update rule}, we have
\begin{align*}
    \E[\cL(\vtheta_T)] = A_0 + \sum_{k=1}^T (A_k - A_{k-1}).
\end{align*}
Using Lemma \ref{lm:Ak-diff} and Lemma \ref{lm:epsilon-bound}, we have that 
\begin{align*}
    \sum_{k=1}^T (A_k - A_{k-1}) = -\frac{1}{2} \sum_{k=1}^T (\eta_{k-1}-\eta_{k})\sum_{i=1}^{d} \frac{1 - \exp(-2\lambda_i S_{k})}{2}\Sigma_{ii} + \epsilon,
\end{align*}
where the error bound $\epsilon$ can be bounded as
\begin{align*}
    \epsilon \le 5\eta_{\max} \sum_{i=1}^{d} \lambda_i^3S_1 \exp(-2\lambda_iS_1) \theta_{0,i}^2
        + \frac{15}{2} \eta_{\max}^2 \sum_{i=1}^{d} \Sigma_{ii}\lambda_i.
\end{align*}
Putting these together with the expression of $A_0$ leads to the results in Theorem \ref{thm:time-varying}.
\end{proof}

\subsection{Proof for Theorem~\ref{thm:multi-power}}

Now we take expectation over $\mH$, $\mSigma$ and $\vtheta_0$ to prove Theorem~\ref{thm:multi-power}.
Throughout the proof, we use 
$\gamma$ to denote the lower incomplete gamma function, $\gamma(s, x) := \int_{0}^x t^{s-1} e^{-t} \dd t$, and use $\Gamma$ to denote the gamma function, $\Gamma(s):= \int_{0}^{\infty}t^{s-1} e^{-t} \dd t$, and use $\Gamma(s,x):= \Gamma(s) - \gamma(s,x)$ to denote the upper incomplete gamma function.

We first present two lemmas on gamma functions.
\begin{lemma}\label{lm:gamma-integral}
    For all $a > -1$ and $C > 0$, we have
    \begin{equation}
        \int_{0}^{\Lambda} \lambda^a \exp(-C\lambda) \dd \lambda
        = \gamma(a+1, C \Lambda) C^{-a-1}.
    \end{equation}
\end{lemma}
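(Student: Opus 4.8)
\textbf{Proof proposal for Lemma~\ref{lm:gamma-integral}.} The plan is to reduce the stated integral to the defining integral of the lower incomplete gamma function by a linear change of variables, after which the identity is immediate. Concretely, I would substitute $t = C\lambda$, so that $\lambda = t/C$, $\dd\lambda = \dd t/C$, and the limits $\lambda \in [0, \Lambda]$ transform to $t \in [0, C\Lambda]$. Under this substitution,
\begin{equation*}
    \int_{0}^{\Lambda} \lambda^a \exp(-C\lambda)\,\dd\lambda
    = \int_{0}^{C\Lambda} \left(\frac{t}{C}\right)^a e^{-t}\,\frac{\dd t}{C}
    = C^{-a-1}\int_{0}^{C\Lambda} t^a e^{-t}\,\dd t
    = C^{-a-1}\,\gamma(a+1, C\Lambda),
\end{equation*}
where the last step uses the definition $\gamma(s,x) := \int_0^x t^{s-1} e^{-t}\,\dd t$ with $s = a+1$.

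The only point requiring care is the convergence of the integral near $\lambda = 0$: the integrand behaves like $\lambda^a$ there, which is integrable precisely when $a > -1$, matching the hypothesis; away from $0$ the factor $\exp(-C\lambda)$ with $C > 0$ causes no issue on the bounded interval $[0,\Lambda]$. Since both sides are finite and the change of variables is a smooth bijection on $(0,\Lambda]$, the computation above is rigorous. There is essentially no obstacle here — this lemma is a bookkeeping step, presumably invoked repeatedly when integrating the per-coordinate loss expressions from Theorem~\ref{thm:time-varying} against the power-law spectral density $p(\lambda) \propto \lambda^{-\nu}$ to produce the gamma-function factors appearing in $\widehat{\cL}(t)$ and $\widehat{G}(x)$.
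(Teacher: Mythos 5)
Your proposal is correct and matches the paper's proof: both perform the substitution $u = C\lambda$ (your $t$), transform the limits to $[0, C\Lambda]$, factor out $C^{-a-1}$, and identify the remaining integral as $\gamma(a+1, C\Lambda)$. Your added remark on convergence at $\lambda = 0$ for $a > -1$ is a harmless extra detail the paper leaves implicit.
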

\begin{proof}
    We substitute $u = C\lambda$ and have
    \begin{align*}
        \int_{0}^{\Lambda} \lambda^a \exp(-C\lambda) \dd \lambda
        &= \int_{0}^{C\Lambda} \left(\frac{u}{C}\right)^a \exp(-u) \frac{1}{C} \dd u \\
        &= \frac{1}{C^{a+1}} \int_{0}^{C\Lambda} u^a \exp(-u) \dd u \\
        &= \gamma(a+1, C \Lambda) C^{-a-1},
    \end{align*}
    which completes the proof.
\end{proof}
\begin{lemma}\label{lmm:gamma-bound}
    For all $a > -1$ and $x \ge 0$,
    \begin{align*}
        \Gamma(a,x) \le 2(a+1)^{(a+1)} e^{-x/2}.
    \end{align*}
\end{lemma}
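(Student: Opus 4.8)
The plan is to exploit the elementary factorization $e^{-t}=e^{-t/2}\cdot e^{-t/2}$ together with the monotonicity of $e^{-t/2}$. Since $t\ge x$ throughout the integral $\Gamma(a,x)=\int_x^{\infty} t^{a-1}e^{-t}\,\dd t$, we have $e^{-t/2}\le e^{-x/2}$ on the range of integration, so
\begin{align*}
    \Gamma(a,x)\ \le\ e^{-x/2}\int_x^{\infty} t^{a-1}e^{-t/2}\,\dd t\ \le\ e^{-x/2}\int_0^{\infty} t^{a-1}e^{-t/2}\,\dd t .
\end{align*}
The remaining work is to bound $\int_0^{\infty} t^{a-1}e^{-t/2}\,\dd t$ by a constant depending only on $a$, uniformly in $x\ge 0$.

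For $a\ge 1$ there are two routes. Either evaluate the integral directly by the substitution $u=t/2$, exactly as in \Cref{lm:gamma-integral} with $\Lambda\to\infty$, obtaining $\int_0^{\infty} t^{a-1}e^{-t/2}\,\dd t=2^{a}\Gamma(a)$, so that the claim reduces to the elementary inequality $2^{a-1}\Gamma(a)\le (a+1)^{a+1}$; or, avoiding $\Gamma(a)$ altogether, bound the integrand pointwise by its maximum. The function $t\mapsto t^{a-1}e^{-t/2}$ is maximized at $t^\star=2(a-1)$ (solve $\tfrac{\dd}{\dd t}[(a-1)\log t-\tfrac t2]=0$), with maximal value $\bigl(\tfrac{2(a-1)}{e}\bigr)^{a-1}$ (read as $1$ when $a=1$); pulling this constant out of $\int_x^\infty t^{a-1}e^{-t/2}\,\dd t$ and using $\int_x^\infty e^{-t/2}\,\dd t=2e^{-x/2}$ gives $\Gamma(a,x)\le 2\bigl(\tfrac{2(a-1)}{e}\bigr)^{a-1}e^{-x/2}$. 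It then suffices to check $\bigl(\tfrac{2(a-1)}{e}\bigr)^{a-1}\le (a+1)^{a+1}$: if $\tfrac{2(a-1)}{e}<1$ the left side is at most $1\le(a+1)^{a+1}$, while if $\tfrac{2(a-1)}{e}\ge 1$ one has $\tfrac{2(a-1)}{e}\le a+1$ (because $2/e<1$), hence $\bigl(\tfrac{2(a-1)}{e}\bigr)^{a-1}\le (a+1)^{a-1}\le (a+1)^{a+1}$.

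The main obstacle is the regime where $a$ is close to $-1$: there the integrand $t^{a-1}e^{-t/2}$ is singular at $t=0$, the integral $\int_0^\infty t^{a-1}e^{-t/2}\,\dd t$ diverges, and the pointwise-maximum bound is unavailable. In fact the displayed inequality cannot hold uniformly in $x\ge 0$ for such $a$ (for instance $\Gamma(0,x)=E_1(x)\to\infty$ as $x\to 0^+$, while the right-hand side tends to $2$), so the clean statement is really for $a\ge 1$. This is all that is needed: every invocation of this bound inside the proof of \Cref{thm:multi-power} has index among $4-\nu-\kappa$, $3-\nu-\kappa$, and $2-\nu-\rho$, each exceeding $1$ under \Cref{asp:power-spectra}. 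I would therefore carry out the argument above for $a\ge 1$ and note that this range covers all later uses.
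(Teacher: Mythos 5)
Your core inequality and your counterexample are both right, and your second route is in fact the corrected version of the paper's own argument: the paper also bounds the non-exponential factor pointwise and integrates $e^{-t/2}$, but it maximizes $g(t)=t^{a+1}e^{-t/2}$, whereas the pointwise claim $t^{a-1}e^{-t}\le (a+1)^{a+1}e^{-t/2}$ requires controlling $t^{a-1}e^{-t/2}$; the two differ by $t^2$, so the paper's reduction is only valid on $t\ge 1$, and for $-1<a<1$ the relevant supremum is infinite near $t=0$. Your example $\Gamma(0,x)=E_1(x)\to\infty$ as $x\to 0^+$ shows the stated lemma is genuinely false for small $a$ at small $x$, so restricting to $a\ge 1$ (where your maximization at $t^\star=2(a-1)$ and the check $\bigl(\tfrac{2(a-1)}{e}\bigr)^{a-1}\le (a+1)^{a+1}$ are correct) is a legitimate repair of the statement and essentially reproduces the paper's intended proof on that range.

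The gap is in your coverage claim. In the proof of \Cref{thm:multi-power} the lemma is invoked exactly once, to write $\gamma(\alpha,2S_1\Lambda)=\Gamma(\alpha)-e^{-\Omega(S_1)}$, i.e.\ at index $a=\alpha=2-\nu-\kappa$; the indices you list ($4-\nu-\kappa$, $3-\nu-\kappa$, $2-\nu-\rho$) correspond to lower incomplete gamma factors that are simply bounded by $\Gamma(\cdot)$ and never use this lemma. Under \Cref{asp:power-spectra}, $\kappa$ may be any value in $[0,2-\nu)$, so $\alpha$ can lie in $(0,1]$, and your $a\ge 1$ version does not apply there. The fix is easy but must be stated: in that invocation the argument is $x=2S_1\Lambda$ with $S_1$ assumed sufficiently large, and for $0<a\le 1$ and $x\ge 1$ one has $t^{a-1}\le 1$ on $[x,\infty)$, hence $\Gamma(a,x)\le e^{-x}\le 2(a+1)^{a+1}e^{-x/2}$. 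So either prove the lemma for all $a>-1$ under the extra hypothesis $x\ge 1$ (or $x$ bounded below), or keep your $a\ge 1$ statement and add this large-$x$ case for $a\le 1$; as written, your last paragraph asserts coverage that the paper's actual use does not satisfy.
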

\begin{proof}
    For all $t \ge 0$, it holds that $t^{a-1} e^{-t} \le (a+1)^{a+1} e^{-t/2}$.
    To see this, it suffices to show that $g(t) := t^{a+1} e^{-t/2} \le (a+1)^{(a+1)}$.
    Note that the function $g(t)$ is increasing on $[0, 2(a+1)]$ and decreasing on $[2(a+1), +\infty)$.
    When $t = 2(a+1)$, we have
    \begin{align*}
        g(2(a+1)) &= (2(a+1))^{a+1} e^{-(a+1)} \le (2(a+1))^{a+1} 2^{-(a+1)}
        =(a+1)^{a+1}.
    \end{align*}
    Therefore, for all $t \ge 0$, we have
    $g(t) \le (a+1)^{(a+1)}$.
    Then, for all $x \ge 0$, we have
    \begin{align*}
        \Gamma(a,x) = \int_{x}^{+\infty} t^{a-1} e^{-t} \dd t
        \le \int_{x}^{+\infty} (a+1)^{(a+1)} e^{-t/2} \dd t
        = 2(a+1)^{(a+1)} e^{-x/2},
    \end{align*}
    which completes the proof.
\end{proof}

Now we are ready to prove \Cref{thm:multi-power}.

\begin{proof}[Proof for \Cref{thm:multi-power}]

First, we recap some definitions from \Cref{asp:power-spectra}. The distribution of $\lambda$ is given by $ p(\lambda) = \frac{1}{Z}\lambda^{-\nu}$. We define $\mu:= \E[\Sigma]$. By the definition of $\E[\Sigma \mid \lambda]$ in \Cref{asp:power-spectra}, $\E[\Sigma \mid \lambda] = F\mu \lambda^{-\rho} \exp(-r\lambda)$ for some constant $F > 0$, and $\E[\Delta^2  \mid \lambda] = D^2\lambda^{-\kappa}$ for some constant $D > 0$. Furthermore, we introduce the notations $\alpha := 2-\nu -\kappa$, and $\beta:= 1-\nu-\rho$.

By the identity $\E[\Sigma] = \E\left[\E[\Sigma \mid \lambda]\right]$, we have $\mu = \int_{0}^{\Lambda} F\mu \lambda^{-\rho} \exp(-r\lambda) \cdot \frac{1}{Z} \lambda^{-\nu} \dd \lambda$, from which we can obtain $F
= \frac{1}{\frac{1}{Z}\int_{0}^{\Lambda} \lambda^{-\rho-\nu} \exp(-r\lambda) \dd \lambda}
= \frac{1}{\frac{1}{Z}\gamma(\beta, r\Lambda)r^{-\beta}}$. So $F = \frac{Zr^{\beta}}{\gamma(\beta, r\Lambda)}$.

It suffices to prove $|\E[\cL(\vtheta_t)]-\widehat{\cL}(t)| = O(S_1(t)^{-\alpha-1}+ \eta_{\max}^2)$ only for $t = T$. Once we prove it for $t = T$, we can easily apply the theorem for $E_{\le t}$, which is the original schedule truncated to the first $t$ steps, to get the result for all $1 \le t \le T$.

Based on the estimate of $\cL(\vtheta_T)$ given in \Cref{thm:time-varying}, we can take the expectation over $\lambda_i, \Sigma_{ii}$ and $\theta_{0,i}$ to obtain the following bound:
\begin{align*}
    \labs{\E[\cL(\vtheta_T)] - \E[M(\vtheta_0, \LRS)]} \le 
    \underbrace{\E\left[
    5\eta_{\max} \sum_{i=1}^{d} \lambda_i^3S_1 \exp(-2\lambda_iS_1) \theta_{0,i}^2\right]}_{=: Q_1}
    +
    \underbrace{\E\left[\frac{15}{2} \eta_{\max}^2 \sum_{i=1}^{d} \Sigma_{ii}\lambda_i\right]}_{=: Q_2},
\end{align*}
where
\begin{align*}
    \E[M(\vtheta_0,\LRS)] &= 
    \underbrace{\E\left[\frac{1}{2} \sum_{i=1}^{d} 
        \theta_{0,i}^2 \lambda_i \exp(-2\lambda_i S_1)\right]
    }_{=: I_1}
    + \underbrace{\E\left[\frac{1}{2} \sum_{i=1}^{d} 
        \eta_0 \Sigma_{ii} \frac{1 - \exp(-2\lambda_i S_1)}{2}\right]
    }_{=: I_2}
    \\
    &\qquad - \underbrace{\E\left[    
    \frac{1}{2}\sum_{k=1}^{T} (\eta_{k-1}-\eta_k)
        \sum_{i=1}^{d} \Sigma_{ii}\frac{1-\exp(-2\lambda_iS_k)}{2} 
    \right]
    }_{=: I_3}.
\end{align*}
In the following, we bound $Q_1, Q_2, I_1, I_2, I_3$ separately with the help of \Cref{lm:gamma-integral} and \Cref{lmm:gamma-bound}.

For $Q_1$, we have
\begin{align*}
    Q_1 = 5d\eta_{\max} S_1 \E_p\left[\lambda^3 \exp(-2\lambda_i S_1) \Delta^2 \right]
    &= \frac{5d\eta_{\max} D^2}{Z} S_1 \int_{0}^{\Lambda} \lambda^{3-\nu-\kappa} \exp(-2\lambda_i S_1) \dd \lambda \\
    &= \frac{5d\eta_{\max} D^2}{Z} S_1 \cdot \gamma(\alpha+2, 2S_1\Lambda) (2S_1)^{-\alpha-2} \\
    &= O(\eta_{\max}S_1^{-\alpha-1}).
\end{align*}
For $Q_2$, we have 
\begin{align*}
    Q_2 = \frac{15d}{2} \eta_{\max}^2 \E_p[\gE \lambda] = O(\eta_{\max}^2).
\end{align*}
For $I_1$, we have
\begin{align*}
    I_1 = \frac{d}{2} \E_p[\Delta^2 \lambda \exp(-2\lambda S_1)] 
    &= \frac{d}{2Z} D^2\int_{0}^{\Lambda} \lambda^{1-\nu-\kappa} \exp(-2\lambda S_1) \dd \lambda \\
    &= \frac{d}{2Z}D^2 \gamma(\alpha, 2S_1\Lambda) (2S_1)^{-\alpha} \\
    &= \frac{d}{2^{\alpha + 1}Z}D^2 \gamma(\alpha, 2S_1\Lambda) S_1^{-\alpha}.
\end{align*}
By~\Cref{lmm:gamma-bound}, we have $\gamma(\alpha, 2S_1\Lambda) = \Gamma(\alpha) - e^{-\Omega(S_1)}$. Thus, we can rewrite $I_1$ as
\begin{align*}
    I_1 &= \frac{d\cdot \Gamma(\alpha)}{2^{\alpha + 1}Z} D^2 S_1^{-\alpha} + O(e^{-\Omega(S_1)}).
\end{align*}
For $I_2$, we have
\begin{align*}
    I_2
    &= \frac{d}{4}\eta_0\E_p[\gE] - \frac{d}{4}\eta_0\E_p[\gE\exp(-2\lambda S_1)]\\
    &= \frac{d}{4}\eta_0\mu -\frac{d}{4}\eta_0 \cdot \frac{F\mu}{Z}\int_{0}^{\Lambda} \lambda^{-\rho} \exp(-r\lambda) \cdot \exp(-2\lambda S_1) \cdot \lambda^{-\nu}  \, \dd \lambda\\
    &= \frac{d}{4}\eta_0\mu -\frac{dF}{4Z}\eta_0\mu
    \gamma(\beta, (2S_1 + r)\Lambda)
    (2S_1 + r)^{-\beta} \\
    &= \frac{d}{4}\eta_0\mu + O(\eta_{\max}S_1^{-\beta}).
\end{align*}
For $I_3$, we have
\begin{align*}
    I_3&= \frac{d}{4} \sum_{k=1}^{T}(\eta_{k-1} - \eta_k) \left(\E_p[\gE] - \E_p[\gE\exp(-2\lambda S_k)]\right) \\
    &= \frac{d}{4} \sum_{k=1}^{T}(\eta_{k-1} - \eta_k) \left(\mu - 
        \frac{F\mu}{Z} \int_{0}^{\Lambda} \lambda^{-\rho} \exp(-r\lambda)\cdot
         \exp(-2\lambda S_k)  \cdot \lambda^{-\nu} \dd \lambda
    \right) \\
    &= \frac{d}{4} \sum_{k=1}^{T}(\eta_{k-1} - \eta_k) \left(\mu - 
        \frac{F\mu}{Z} \gamma(\beta, (2S_k + r)\Lambda)
        (2S_k + r)^{-\beta}
    \right).
\end{align*}
Replacing $F$ with $\frac{Zr^{\beta}}{\gamma(\beta, r\Lambda)}$, then we have
\begin{align*}
    I_3 &= \frac{d}{4} \sum_{k=1}^{T}(\eta_{k-1} - \eta_k) \cdot \mu \cdot \left(1 - 
    \frac{\gamma(\beta, (2S_k + r)\Lambda)}{\gamma(\beta, r\Lambda)}
    r^{\beta}
    (2S_k + r)^{-\beta}
\right) \\
&= \frac{d}{4} \mu \sum_{k=1}^{T}(\eta_{k-1} - \eta_k) \left(1 - 
\frac{\gamma(\beta, (2S_k + r)\Lambda)}{\gamma(\beta, r\Lambda)}
(\frac{2}{r}S_k + 1)^{-\beta}
\right).
\end{align*}
Setting the constants $L_0, A, \alpha, B, \beta, C$ as \eqref{eq:theory-constants}, we can summarize our results for
$Q_1$, $Q_2, I_1, I_2, I_3$ as
\begin{align*}
    &Q_1 = O(\eta_{\max}S_1^{-\alpha-1}), \qquad Q_2 = O(\eta_{\max}^2), \\
    &I_1 = A S_1^{-\alpha} + O(e^{-\Omega(S_1)}), \qquad I_2 = L_0 + O(\eta_{\max}S_1^{-\beta}), \\
    &I_3 = B \sum_{k=1}^{T}(\eta_{k-1} - \eta_k) \widehat{G}(S_k),
\end{align*} 
where
\begin{align*}
    \widehat{G}(x):= 1- \frac{\gamma(\beta, (2x+r)\Lambda)}{\gamma(\beta, r\Lambda)} \cdot (Cx+1)^{-\beta}.
\end{align*}
Putting everything together, we have
\begin{align*}
    |\E[\cL(\vtheta_t)]-\widehat{\cL}(t)| &=
    \labs{\E[\cL(\vtheta_T)] - \E[M(\vtheta_0, \LRS)] + O(e^{-\Omega(S_1)} + \eta_{\max}S_1^{-\beta})} \\
    &= O(\eta_{\max} S_1^{-\alpha-1} + \eta_{\max}^2 + e^{-\Omega(S_1)} + \eta_{\max}S_1^{-\beta}) \\
    &= O(\eta_{\max} S_1^{-\min\{\alpha+1,\beta\}} + \eta_{\max}^2),
\end{align*}
which completes the proof.
\end{proof}

\begin{figure}[t]
    \centering
    \includegraphics[width=0.5\linewidth]{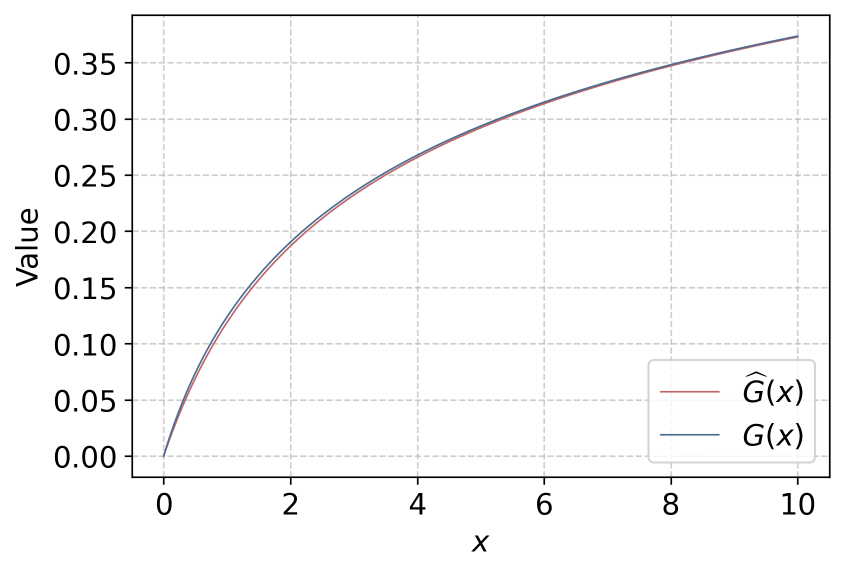}
    \caption{A comparison of $G(x)$ defined in \eqref{eq:loss-reduction-def} and $\widehat{G}(x)$ defined in \eqref{equ:theory-MPL}. $G(x)$ follows a exact power form, while $\widehat{G}(s)$ follows a power form approximately. The gap between $G(x)$ and $\widehat{G}(x)$ converges to $0$ as $x \to +\infty$. Here $\widehat{G}(x)$ is defined as in \eqref{equ:theory-MPL} with parameter $C=\frac{2}{r}= 1$,
    and $G(x)$ is defined as in \eqref{eq:loss-reduction-def} with parameters $C = (\frac{\Gamma(\beta)}{\gamma(\beta, r\Lambda)})^{-\frac{1}{\beta}}$. In both cases, we set the exponent $\beta = 0.2$.
    }
    \label{fig:G-hat-G}
\end{figure}

\end{document}